\newcommand{\bs}{\mathbf{s}}
\newcommand{\bx}{\mathbf{x}}
\newcommand{\bz}{\mathbf{z}}
\newcommand{\bX}{\mathbf{X}}
\newcommand{\bW}{\mathbf{W}}
\newcommand{\bB}{\mathbf{B}}
\newcommand{\bA}{\mathbf{A}}
\newcommand{\bC}{\mathbf{C}}
\newcommand{\bS}{\mathbf{\Sigma}}
\newcommand{\bU}{\mathbf{U}}
\newcommand{\bfeta}{\bm{\eta}}
\newcommand{\bLambda}{\bm{\Lambda}}
\newcommand{\bF}{\mathbf{F}}
\newcommand{\by}{\mathbf{y}}
\newcommand{\bu}{\mathbf{u}}
\newcommand{\bG}{\bm{\Gamma}}
\newcommand{\bzero}{\bm{0}}
\newcommand{\bI}{\mathbf{I}}
\newcommand{\bH}{\mathbf{H}}
\newcommand{\btheta}{\boldsymbol{\theta}}
\newcommand{\dd}{\mathrm{d}}
\newcommand{\Op}{\mathcal{O}}
\newcommand{\Gau}{\mathcal{N}}
\theoremstyle{remark}
\providecommand{\theoremname}{Theorem}
\providecommand{\propositionname}{Proposition}
\providecommand{\lemname}{Lemma}
\newtheorem{thm}{\protect\theoremname}
 \theoremstyle{plain}
 \newtheorem{prop}[thm]{\protect\propositionname}
  \theoremstyle{plain}
  \newtheorem{lem}{\protect\lemname}
\newcommand\rev[1]{{\color{black}#1}}  
\title[Short title]{Auxiliary gradient-based sampling algorithms}
\author[Titsias and Papaspiliopoulos]{Michalis K. Titsias}
\address{Department of Informatics, Athens University of Economics and Business, Athens, Greece}
\author{Omiros Papaspiliopoulos}
\address{ICREA and Department of Economics and Business Universitat Pompeu Fabra, Barcelona 08005}
\begin{document}



\begin{abstract}
We introduce a new family of MCMC samplers 
that combine auxiliary variables, Gibbs sampling and  Taylor expansions of the
target density.  
Our approach permits the marginalisation over the auxiliary variables
yielding marginal samplers, or  the augmentation of the auxiliary
variables, yielding auxiliary samplers. 
The well-known Metropolis-adjusted Langevin algorithm
(MALA) and preconditioned Crank-Nicolson Langevin (pCNL) algorithm
are shown to be special cases. 
\rev{We prove that marginal samplers are superior in terms
of asymptotic variance and demonstrate cases where they are slower in computing time compared to
auxiliary samplers.}  
In the context of latent
Gaussian models we propose new auxiliary and marginal
samplers whose implementation requires a single tuning parameter,
which can be found automatically during the transient phase.
Extensive experimentation shows that the increase in 
efficiency  (measured as effective sample size per unit of 
computing time) relative to (optimised implementations of)  pCNL,
elliptical slice sampling and MALA ranges from 10-fold  in binary
classification problems to 25-fold in log-Gaussian Cox processes to
100-fold in Gaussian process regression, \rev{and it is on par with
Riemann manifold Hamiltonian Monte Carlo in an example where the
latter has the same complexity as the aforementioned algorithms.}  We
explain 
this remarkable improvement in terms of the way alternative samplers
try to approximate the eigenvalues of the target. We
introduce a novel MCMC sampling scheme for hyperparameter learning that
builds upon the auxiliary samplers. The MATLAB code for reproducing the
experiments in the article is publicly available and a Supplement to
this article contains additional experiments and implementation details.
\end{abstract}

\section{Introduction}

 Monte Carlo sampling is  the preferable tool for 
uncertainty quantification and statistical learning in a wide range of
scientific problems. We are
 interested in Bayesian learning problems where the
density to be sampled is proporional to $\pi_0(\bx|\btheta) \exp\{f(\bx)\}$, where
$\pi_0(\bx|\btheta)$ is the prior, potentially depending on 
hyperparameters $\btheta$, and $f(\bx)$ is the log-likelihood; the
dependence on any data is suppressed throughout; we highlight the dependence on  $\btheta$ when it is  part of the inference and wish to sample from
$\pi(\bx,\btheta) \propto \pi_0(\bx | \btheta) \exp\{f(\bx)\} p(\btheta)$
where $p(\btheta)$ is a prior on the hyperparameters; when the hyperparameters are fixed we might drop them from the notation and write $\pi_0(\bx)$ instead.  Whereas the main 
algorithmic construction and some theory is generic, the field of
application in this article and all the experiments refer to  latent Gaussian models where $\pi_0(\bx | \btheta) =\Gau(\bx | \bzero, \bC)$.

Major challenges with MCMC in these contexts are: (i)
Likelihood-informed proposals; gradient-based methods based on
discretisations of the Langevin diffusion, e.g. the
preconditioned Metropolis-adjusted Langevin algorithm (pMALA) of \cite{mala} and
the manifold MALA (mMALA) of \cite{Girolami11} are typical
examples. (ii) Prior-informed proposals, i.e., ones that
are in agreement with the dependence structure imposed by the prior; this is particularly important for
priors that impose smoothness and create strong dependencies among the
components of $\bx$, e.g. Gaussian priors;   \cite{neal} first
proposed such a sampler which has evolved to the elliptical slice
sampler of
\cite{murrayEllipt}, and has been extended to the  preconditioned
Crank-Nicolson (pCN) samplers of
\cite{.beskos:2008:bridges+methods+diffusion} who show that failure to be
prior-informed leads to collapse of algorithms in high-dimensional
problems that can otherwise be avoided.  
The
preconditioned Crank-Nicolson Langevin samplers of
\cite{Cotter2013MCMC}, are likelihood and prior informed. (iii) Numerically efficient
computation of the accept-reject mechanism 
within Metropolis-Hastings schemes, which  can be the bottleneck for scaling up MCMC to big data problems,
see e.g. \cite{teh}, \cite{ericLang}, \cite{arnak}.
(iv) The numerically efficient
generation of proposals; see e.g.
\cite{davies}.

\rev{
We introduce a new family of MCMC samplers that are both prior and
likelihood informed and are constructed using a combination of
auxiliary variables and Taylor expansions. The essence of the approach
is to use the proposals used within standard MCMC as auxiliary
variables and sample the augmented target using
Metropolis-within-Gibbs. The resultant algorithms are called auxiliary samplers and the
algorithm that is obtained by integrating the auxiliary variables out is
called a marginal sampler. We prove that the marginal sampler is
better in Peskun ordering than its corresponding auxiliary sampler. We apply the construction in latent Gaussian models and
introduce three new MCMC samplers for such models, two auxiliary and
one marginal; one of the auxiliary samplers
for latent Gaussian models (the scheme based on the auxiliary variable $\bz$ as detailed in
Section 3.3) was first sketched in a discussion in \citep{Titsias2011}. We show that
 pCNL is a special case of the new framework.  We carry out an extensive series of experiments
where we compare our new samplers with pCN,  elliptical slice sampler (Ellipt), pMALA, pCNL. The implementation of all these
algorithms involves an $\Op(n^2)$ cost per iteration, where $n$ is the
dimension of $\bx$, and requires  a single
tuning parameter. For log-Gaussian Cox processes we also compare against
mMALA and the Riemann manifold Hamiltonian Monte Carlo (RMHMC) of
\cite{Girolami11}, since in this context these can also be implemented at
an $\Op(n^2)$ cost, whereas their implementation is typically
$\Op(n^3)$.  We demonstrate that our new samplers are more
  efficient in terms of effective sample size per unit of computing
  time than pMALA, pCN, Ellipt and pCNL by a factor that ranges from
10 to 100 in standard machine learning and spatial statistics problems, and are  comparable
to RMHMC when the latter 
can also be implemented at $\Op(n^2)$ cost. We provide an explanation
on why these gains are achieved in terms of the way the different
algorithms approximate the spectrum of the target by shrinking
differently the eigenvalues of the prior covariance. 

One of the proposed
auxiliary samplers has smaller computational time than the marginal
sampler since it involves a smaller number of matrix-vector
multiplications; actually its acceptance probability can be computed
an order of magnitude faster than that of the marginal
sampler. Nevertheless,  all the experiments reported
here and in a Supplement find the marginal sampler superior to the
auxiliary ones even when computing cost is taken into account. We also
investigate empirically the optimal tuning of the new algorithms and
find the auxiliary ones to require tuning around 50\% acceptance rate and the marginal
around 60\%. 
Utilising one of the auxiliary samplers, we propose a novel sampler for 
$\pi(\bx,\btheta) \propto \Gau(\bx; \bzero, \bC_{\btheta}) \exp\{f(\bx)\} p(\btheta)$
where $\btheta$ are hyperparameters.  We find
impressive efficiency gains relative to alternative 
samplers in the context 
of multi-class Gaussian process classification.
}

The paper is organised as follows. Section \ref{sec:comp} introduces
auxiliary and marginal samplers and proves a comparison result. 
Section \ref{sec:proposedalg} develops the methodology for latent
Gaussian models. Section \ref{sec:hyper}   introduces a sampler for joint inference
of the latent Gaussian process and its hyperparameters. Section 5 contains an extensive series of
numerical experiments.  A Supplement to this article contains
additional theoretical results,  implementation details, pseudocode of the proposed algorithms, and several
simulation experiments. All experiments are completely
reproducible and the code can be found in https://github.com/mtitsias/aGrad.

\section{Auxiliary versus marginal
  samplers}
\label{sec:comp}

We start with a generic result about the use of auxiliary variables
within Metropolis-Hastings schemes. This result establishes that the
marginal samplers we introduce in this article are \emph{statistically}
more efficient than the auxiliary samplers we also introduce;
statistical efficiency here is measured by the asymptotic variance of
ergodic averages computed using the samples generated by the
algorithms, hence the averages computed using marginal samplers are shown
to have lower asymptotic variance than those using auxiliary samplers. However,
we will later establish concrete situations where the auxiliary
samplers are \emph{computationally} more efficient, in particular the
use of auxiliary variables can change the complexity of the computation
of the Metropolis-Hastings ratio. Hence, our broader framework allows
for the exchange of statistical with computational efficiency for
better scalability of the algorithms. 

\subsection{Auxiliary Metropolis-Hastings samplers}

Consider a target density $\pi(\bx)$ and a hierarchical mechanism for
generating proposals $\by$ within a Metropolis-Hastings sampler for
probing $\pi(\bx)$,  by first drawing \rev{$\bu \sim q(\bu|\bx)$} and
then $\by \sim q(\by | \bx,\bu)$. Hence, the overall proposal density
is 
\[
q(\by | \bx) = \int q(\by | \bx,\bu) q(\bu | \bx) \dd \bu\,.
\]
What we will call a marginal scheme proposes in this way and accepts
with probability the minimum of 1 and  the Metropolis-Hastings ratio, 
\[
{\pi(\by) q(\bx | \by) \over \pi(\bx) q(\by | \bx) }\,.
\]
However, we can alternatively augment the state-space with $\bu$ as an
auxiliary variable and design a sampler for $\pi(\bx,\bu) = \pi(\bx)
q(\bu | \bx)$. We can sample from this extended target by
Hastings-within-Gibbs:
\begin{enumerate}
\item Sample $\bu | \bx \sim \pi(\bu | \bx) = q(\bu | \bx)$ 
\item Propose $\by | \bu,\bx \sim q(\by | \bx,\bu)$ and accept the move with probability 
\[
1 \wedge {\pi(\by | \bu) q(\bx | \by,\bu) \over \pi(\bx | \bu) q(\by |
  \bx,\bu)} 
\,.
\] 
\end{enumerate}
Hence, a Metropolis-Hastings step is used to sample the intractable
$\pi(\bx|\bu)$. Let us call this second scheme an auxiliary sampler. 
In this formulation, the marginal and auxiliary samplers use the same
ingredients, $q(\bu|\bx)$ and $q(\by|\bx,\bu)$, and marginally generate
the same proposals $\by$ from a given state $\bx$. However, their
acceptance mechanisms differ.


\subsection{An example: auxiliary MALA}
 We start with a random walk proposal,  $\Gau(\bu | \bx, (\delta/ 2)
\bI)$ to define an augmented target  
\begin{equation*}
\pi(\bx,\bu) \propto  \pi(\bx) q(\bu|\bx) = \pi(\bx) \Gau(\bu | \bx, (\delta/ 2) \bI)
\end{equation*}
and sample from this using an auxiliary sampler, as described above.  In order to sample the intractable
$\pi(\bx|\bu)$ we do a first order approximation to the log target
density, around $\bx$, given by  
$\log \pi(\by) \approx \log \pi(\bx) + \nabla \log \pi(\bx)^T (\by -
\bx)$, and combine it with $q(\bu|\by)$ to obtain the proposal
\begin{align*}
q(\by |\bu, \bx) & \propto \exp\{ \log \pi(\bx) + \nabla \log \pi(\bx)^T (\by - \bx) \}  \Gau(\bu | \by, (\delta/ 2) \bI) \nonumber \\
& \propto  \Gau(\by | \bu  +  (\delta/2) \nabla \log \pi(\bx), (\delta/ 2) \bI)\,.
\label{eq:firstorder}
\end{align*}  
The corresponding marginal sampler is the
Metropolis-adjusted Langevin algorithm (MALA)  since  
$q(\by | \bx) = \Gau(\by | \bx  +  (\delta/2) \nabla \log \pi(\bx),
\delta \bI)$. 

\subsection{Marginal samplers are Peskun better than auxiliary samplers}
\label{sec:peskun}
Recall that a Markov transition kernel $P_2$ dominates in Peskun
ordering another $P_1$ if $P_1(\bx,A) \leq P_2(\bx,A)$ for all measurable
$A$ such that $\bx \notin A$, and for all $\bx$.  An important
implication of this ordering is that if $P_1$ and $P_2$ are ergodic
kernels and the corresponding ergodic averages admit a central limit
theorem for a given function, then the asymptotic variance of those
averages for the function are larger for $P_1$ relative to $P_2$.
This comparison was introduced in  \cite{peskun}, extended  in
\cite{tierney} and then in
\cite{leisen}. 
\begin{prop} The marginal sampler is better in Peskun ordering than the auxiliary
  sampler. 
\end{prop}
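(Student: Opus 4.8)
The plan is to reduce the Peskun comparison to an elementary inequality between the two kernels' off-diagonal transition densities. I would work with both chains on the $\bx$-space: the marginal sampler lives there by construction, and for the auxiliary sampler the $\bx$-coordinate on its own is Markov, since the first step resamples $\bu \sim q(\bu|\bx)$ afresh, independently of its previous value, so that $\bu$ can be integrated out. For $\by \neq \bx$ the marginal sampler moves with density
\[
p_{\mathrm{mar}}(\bx,\by) \;=\; q(\by|\bx)\Bigl(1 \wedge \tfrac{\pi(\by)q(\bx|\by)}{\pi(\bx)q(\by|\bx)}\Bigr)
\;=\; \tfrac{1}{\pi(\bx)}\bigl(\pi(\bx)q(\by|\bx) \,\wedge\, \pi(\by)q(\bx|\by)\bigr),
\]
whereas the auxiliary sampler, after integrating out the freshly drawn $\bu$, moves with density
\[
p_{\mathrm{aux}}(\bx,\by) \;=\; \int q(\bu|\bx)\,q(\by|\bx,\bu)\Bigl(1 \wedge \tfrac{\pi(\by|\bu)\,q(\bx|\by,\bu)}{\pi(\bx|\bu)\,q(\by|\bx,\bu)}\Bigr)\,\dd\bu .
\]

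Next I would simplify the inner acceptance ratio using $\pi(\bx,\bu) = \pi(\bx)q(\bu|\bx)$, which gives $\pi(\by|\bu)/\pi(\bx|\bu) = \pi(\by)q(\bu|\by)/(\pi(\bx)q(\bu|\bx))$, so that the integrand of $p_{\mathrm{aux}}$ equals $\tfrac{1}{\pi(\bx)}\bigl(\pi(\bx)q(\bu|\bx)q(\by|\bx,\bu) \wedge \pi(\by)q(\bu|\by)q(\bx|\by,\bu)\bigr)$ and hence
\[
p_{\mathrm{aux}}(\bx,\by) \;=\; \tfrac{1}{\pi(\bx)}\int\bigl(\pi(\bx)q(\bu|\bx)q(\by|\bx,\bu) \,\wedge\, \pi(\by)q(\bu|\by)q(\bx|\by,\bu)\bigr)\,\dd\bu .
\]
It is worth recording in passing that $\pi(\bx)p_{\mathrm{mar}}(\bx,\by)$ and $\pi(\bx)p_{\mathrm{aux}}(\bx,\by)$ are both symmetric in $(\bx,\by)$, so both kernels are $\pi$-reversible and the Peskun/Tierney framework applies directly.

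The comparison then follows from the elementary fact that $\int (f \wedge g)\,\dd\bu \le \bigl(\int f\,\dd\bu\bigr) \wedge \bigl(\int g\,\dd\bu\bigr)$ for non-negative $f,g$, applied with $f(\bu) = \pi(\bx)q(\bu|\bx)q(\by|\bx,\bu)$ and $g(\bu) = \pi(\by)q(\bu|\by)q(\bx|\by,\bu)$, together with the marginalisation identities $\int q(\bu|\bx)q(\by|\bx,\bu)\,\dd\bu = q(\by|\bx)$ and $\int q(\bu|\by)q(\bx|\by,\bu)\,\dd\bu = q(\bx|\by)$. This yields $p_{\mathrm{aux}}(\bx,\by) \le p_{\mathrm{mar}}(\bx,\by)$ for all $\by \neq \bx$, hence $P_{\mathrm{aux}}(\bx,A) \le P_{\mathrm{mar}}(\bx,A)$ for every measurable $A$ with $\bx \notin A$, which is exactly Peskun domination of the auxiliary sampler by the marginal one; the asymptotic-variance consequence is then the cited result of \cite{tierney}. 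The step demanding the most care is the projection of the auxiliary chain onto $\bx$ — one must verify that discarding $\bu$ leaves a Markov chain and that both kernels possess densities with respect to a common reference measure off the diagonal, so that comparing densities pointwise is legitimate; once that bookkeeping is in place, the decisive inequality is merely subadditivity of integration under a minimum.
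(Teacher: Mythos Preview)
Your argument is correct. The key step --- pushing the minimum inside the integral via $\int (f\wedge g)\,\dd\bu \le \bigl(\int f\,\dd\bu\bigr)\wedge\bigl(\int g\,\dd\bu\bigr)$ --- together with the two marginalisation identities, delivers exactly $p_{\mathrm{aux}}(\bx,\by)\le p_{\mathrm{mar}}(\bx,\by)$ off the diagonal, and your check that both kernels are $\pi$-reversible on the $\bx$-space (so that the Peskun/Tierney comparison is legitimate) is also right. The observation that the $\bx$-coordinate of the auxiliary sampler is itself Markov, because $\bu$ is refreshed from $q(\bu|\bx)$ at every step, is the crucial structural point and you have stated it clearly.

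The paper takes a different route: rather than giving a self-contained argument, it simply observes that the auxiliary sampler is a special case of the framework in Proposition~2 of \cite{storvik} (a two-auxiliary-variable construction, of which the present single-auxiliary setup is an instance) and invokes that result directly; it also points to the related Theorem~7 of \cite{andrieu-pm} in the pseudomarginal literature. What the paper's approach buys is brevity and a link to existing general theory; what your approach buys is transparency --- the reader sees that the entire comparison rests on the subadditivity of $\int(\cdot\wedge\cdot)$, with no external machinery required. Your direct argument is in fact essentially the proof underlying the cited results, specialised to a single auxiliary variable.
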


This result can be proved by noting that the auxiliary sampler we consider is a special case
of the  framework described in Proposition 2 of \cite{storvik}; the general framework in \cite{storvik} considers samplers where two auxiliary variables are used, but the use of a single auxiliary variable is more appropriate for the specific samplers we introduce in 
Section \ref{sec:proposedalg} for latent Gaussian models. Note also a   recent result in the literature on so-called
  pseudomarginal methods, where auxiliary variables are used to provide
  unbiased estimators of $\pi(\bx)$ when this is intractable or
  expensive to compute. Theorem 7 of \cite{andrieu-pm} establishes that the
  corresponding auxiliary sampler is less statistically efficient (in
  the sense described above) than the marginal sampler. \cite{andrieu-pm} also consider alternative auxiliary variable schemes. 


\section{Gradient-based samplers for latent Gaussian models \label{sec:proposedalg}}

\subsection{Setting and objectives}

We now focus on the latent Gaussian models where the target density is written in the form  
\begin{equation}
\pi( \bx )  \propto
\exp\{f(\bx) \} \Gau( \bx | \bzero, \bC),
\label{eq:latentGauss} 
\end{equation} 
where $\exp\{f(\bx) \}$ is the likelihood and $\Gau( \bx | \bzero, \bC)$ is the Gaussian prior which for simplicity we assume 
to have zero mean. We are interested in general-purpose, black-box
algorithms that would perform
reasonably well in a wide range of applications.  
\rev{With
respect to the dimension of the target, $n$, the algorithms we consider can
be implemented (including the transient phase where step sizes are
tuned) at $\Op(n^2)$ cost, given an initial $\Op(n^3)$ offline
pre-computation, provided that $f(\bx)$ and $\nabla f(\bx)$ are computed at $\Op(n^2)$
cost.
}
 It is practically important that no matrix
decompositions are needed even for the transient phase of the
algorithms where algorithmic parameters, such as step size, are tuned
to achieve theoretically-motivated optimal acceptance rates.  It is
also practically important that algorithms use a small number of
tuning parameters, and all the algorithms we consider here use a
single one. We make no assumptions about special properties of $\bC$, such
as Toeplitz or
banded. Section \ref{sec:exp} contains several model structures that
fit into this category and range from multi-class classification to
point processes.  A Supplement  includes details on
optimal implementation and details on the computational cost
of each algorithm. 

\subsection{\rev{Review of popular MCMC samplers for latent Gaussian models \label{sec:related}}}

\rev{
Here, we summarise the main popular MCMC samplers used in the
context of latent Gaussian models. The most basic gradient-based approach is the
preconditioned MALA (pMALA) that generates proposals according to 
\begin{equation*}
q(\by | \bx)  =  \Gau(\by | \bx + \frac{\delta}{2} \bS\nabla \log \pi(\bx),  \delta \bS ), 
\label{eq:precondMALA}
\end{equation*} 
where $\bS$ is a preconditioning matrix and $\delta$
is a step size parameter that has to be tuned. 
This proposal is obtained as a
first-order Euler discretization of the Langevin stochastic
differential equation (SDE), 
\begin{equation*}
\dd \bX_t = {1 \over 2} \bS \nabla \log \pi(\bX_t) \dd t + \bS^{1/2} \dd \bW_t,
\end{equation*}
where $\bW$ is Brownian motion, which is a stochastic process
reversible with respect to $\pi(\bx)$. 
For good sampling we might
prefer to choose the preconditioning in a state-dependent way to
capture the local covariance structure. This is for example the idea
behind  certain schemes used in \cite{Girolami11} that are referred there as  
simplified manifold MALA (mMALA) proposals. However, such algorithms
would require matrix decompositions at each iteration. For latent
Gaussian models the common state-independent choice is $\bS=\bC$,
while often the 
choice $\bS=\bI$ doesn't work at all since it ignores the correlation structure introduced by
 the Gaussian prior. Hence, in this paper we only consider pMALA with
 $\bS=\bC$, which leads to the proposal
\begin{align*} 
q(\by|\bx) & =  \mathcal{N}\left (\by | (1 - \frac{\delta}{2}) \bx + \frac{\delta}{2} \bC \nabla f(\bx), \delta \bC\right)\,.
\label{eq:precondMALALG}
\end{align*}
The generation of proposals cost is $\Op(n^2)$, including the initial tuning of the algorithm for choosing a
$\delta$ that yields approximately 50\% acceptance probability
following \cite{scaleSS}. The Metropolis-Hastings ratio, which 
involves the Gaussian prior, requires an $\Op(n^2)$ computing cost. 
  
\cite{Cotter2013MCMC} review proposals based on more advanced Crank-Nicolson
discretisations of the Langevin SDE, and  focus on the so-called Crank-Nicolson
Langevin (CNL) proposal
\begin{align*}
  q(\by | \bx) & = \Gau(\by | (2 \bC + \delta \bI)^{-1} (2 \bC -
    \delta \bI) \bx + 2\delta(2\bC + \delta \bI)^{-1} \bC \nabla
      f(\bx), \\ 
&  8 \delta (2\bC + \delta \bI)^{-1} \bC (2\bC + \delta
      \bI)^{-1})
\end{align*}
and the preconditioned  Crank-Nicolson
Langevin (pCNL) proposal 
\begin{equation}
  \label{eq:pCNL}
 q(\by | \bx) = \Gau\left(\by |    {2 
     \over 2 +\delta} \bx + {\delta \over 2 + \delta} \bC \nabla
      f(\bx),   {\delta(\delta + 4)  \over (2 + \delta)^{2}} \bC  \right).
\end{equation}
We have parameterised pCNL so that its step size $\delta$ is
comparable to that in the samplers we introduce in this article, hence
the discrepancy with the notation used in \cite{Cotter2013MCMC}. 
We consider only pCNL in this paper since it has been found to
be more efficient in practice. The Metropolis-Hastings ratio for pCNL
becomes 
\begin{equation}
  \label{eq:pCNLMH}
  \exp\{f(\by) - f(\bx) + k(\bx,\by) - k(\by,\bx) \}
\end{equation}
where 
\[
k(\bx,\by) = {2+\delta \over 4 +\delta} \bx^T  \nabla f(\by) - {\delta \over
  2(\delta+4) } \nabla f(\by)^T \bC \nabla f(\by)\,.
\]
Note that the prior density is cancelled in the ratio, which is
a consequence of a reversibility property this proposal mechanism
enjoys and is discussed in Section \ref{sec:connections}. Notice also that the mean of the pMALA  proposal is essentially
the same as the the mean of pCNL since both can be re-written in the form  
$\beta \bx + (1 - \beta) \bC \nabla f(\bx)$ where $\beta \in
[0,1]$. Note that for this
reparametrization to hold
the step $\delta$ for pMALA must be in the range $[0,1/2]$. 
The generation of proposals and the
computation of the Metropolis-Hastings ratio require an  $\Op(n^2)$
cost.
When the gradient term is dropped we obtain the Crank-Nicolson and the
preconditioned Crank-Nicolson (pCN) algorithms, which are only prior
informed. The more interesting pCN corresponds to the proposal  
\begin{equation}
  \label{eq:pCN}
 q(\by | \bx)  = 
    \Gau\left(\by | {2  \over 2 +\delta} \bx,  {\delta(\delta + 4)  \over (2 + \delta)^{2}} \bC  \right)
\end{equation}
and it was originally proposed by \cite{neal} and more
recently in a function space context by
\cite{.beskos:2008:bridges+methods+diffusion}. 
Section \ref{sec:connections} discusses that  pCN proposal
is reversible with respect to the prior, which leads
to an appealing simplification of the Metropolis-Hastings ratio that
becomes simply $\exp\{f(\by)-f(\bx)\}$. Therefore, pCN involves a
$\Op(n^2)$ cost for generating proposals but only an $\Op(n)$ for
deciding on their acceptance. 

Finally, in this article we consider the  elliptical slice sampler (Ellipt)
proposed by \citep{murrayEllipt}, which is  popular in the machine learning community. This scheme combines 
the pCN proposal with a slice sampling procedure that is constrained
on an ellipse that avoids rejections, hence its proposal mechanism does not have any step sizes that need tuning. 
However, the rejection-free property comes with the cost that multiple log-likelihood evaluations 
are required per iteration. 
}
\subsection{New auxiliary and marginal samplers}

The starting point of an auxiliary sampler is the augmentation with the auxiliary variable $\bu$ drawn 
from $q(\bu | \bx) = \Gau(\bu | \bx, (\delta/2) \bI)$ so that 
\[
\pi( \bx, \bu)  \propto
\exp\{f(\bx) \} \Gau( \bx | \bzero, \bC)   \Gau(\bu | \bx, (\delta/2) \bI ).
\]
The auxiliary sampler requires an approximation to $\pi( \bx | \bu)$ 
to be used as a proposal. Since the product $\Gau( \bx | \bzero, \bC)
\Gau(\bu | \bx, (\delta/2) \bI)$, yields a Gaussian density for
$\bx$, we do a first order Taylor expansion of $f(\bx)$ 
around the current value $\bx$ to come up with a Gaussian proposal
density that is both prior and likelihood informed: 
\begin{align*}
 q(\by | \bx,\bu)  \propto & \exp\{ f(\bx) + \nabla f(\bx)^T
 (\by-\bx) \} \Gau(\by | \bzero, \bC) \Gau(\bu | \by, (\delta/2) \bI)
 \nonumber \\ 
& \propto \Gau \left(\by  |  \frac{2}{\delta} \bA (
    \bu + {\delta \over 2} \nabla f(\bx) ), \bA \right), 
\label{eq:aux}
\end{align*}
\rev{where 
\begin{equation}
\bA =  (\bC^{-1} + {2 \over \delta} \bI)^{-1} = {\delta \over 2}  (
\bC + {\delta \over 2}\bI)^{-1}\bC.
\label{ea:def-A}
\end{equation}
$\bA$ can be defined in any of the alternative ways, which are of course equivalent when $\bC$ is invertible. However, $\bA$ can be defined as the second expression when $\bC$ is not invertible (in which case the first expression is invalid). The second expression is also computationally preferrable when   $\bC$ is close to being singular. 
}
Thus, the auxiliary sampler iterates:
 \begin{enumerate}
\item  $\bu \sim  \Gau\left (\bu | \bx, (\delta / 2) \bI \right ) $ 
\item  Propose $\by  \sim q(\by | \bx,\bu) $ and accept it
    according to the  Metropolis-Hastings 
   ratio  
\begin{align}
&   \exp\left \{f(\by) -
  f(\bx) +j(\bx,\by, \bu) - j(\by,\bx, \bu) \right \}, \\
& j(\bx,\by,\bu) = \left( \bx -  \frac{2}{\delta}\bA  (
   \bu + {\delta \over 4} \nabla f(\by))  \right)^T \nabla f(\by)\,.
 \end{align}

\end{enumerate}
The matrix $\bA$ involved in the
algorithm has the same eigenspace as $\bC$. Hence, given a spectral
decomposition of $\bC$ at the onset,  generating from the proposal 
$q(\by|\bu, \bx)$ has computational cost $\Op(n^2)$. 
Also, the Metropolis-Hastings ratio has cost  $\Op(n^2)$. 

The corresponding marginal scheme is obtained by simply marginalising 
out the auxiliary variable $\bu$ to obtain the marginal proposal 
\begin{align}
q(\by|\bx) & = \int  \Gau (\by |  {2 \over \delta}\bA    (
    \bu + (\delta / 2) \nabla f(\bx) ), \bA )   \Gau\left (\bu | \bx, (\delta / 2) \bI \right )  d \bu \nonumber \\
& =  \Gau \left(\by |  { 2 \over \delta}\bA  \left ( \bx   +  {\delta \over 2} \nabla
f(\bx) \right),   {2 \over \delta }\bA^2  + \bA \right)\,, \nonumber
\end{align}
where we have used that $\bA$ is symmetric. One way to 
generate from the proposal $q(\by|\bx)$ is to be based on the
mixture representation above, which corresponds to the same two-step procedure used in the auxiliary scheme and it has 
computational cost $\Op(n^2)$. The corresponding
Metropolis-Hastings ratio simplifies to
\begin{align*}
& \exp\{f(\by) - f(\bx) + h(\bx,\by) - h(\by,\bx)\}, \label{eq:margina-ar} \\    
& h(\bx,\by)=  \left ( \bx - \frac{2}{\delta}\bA  ( \by +  {\delta \over 4} 
\nabla f(\by) ) \right)^T  \left( {2 \over \delta} \bA + \bI \right)^{-1} \nabla f(\by)\, \nonumber 
\end{align*}
and it is also computed at an $\Op(n^2)$ cost.

A reparameterization of the auxiliary variable $\bu $ yields an alternative
auxiliary sampler in this context, that corresponds to the same
marginal sampler. We define a  new  auxiliary variable $\bz$ 
\begin{equation*}
\bz \equiv  \bu + (\delta / 2) \nabla f(\bx) \sim  \Gau(\bz | \bx  + (\delta /
2) \nabla f(\bx), (\delta/2) \bI ),
\end{equation*}
so that the initial proposal distribution $q(\by | \bx,\bu)$ in the
former auxiliary sampler now 
becomes  $q(\by | \bz) = \Gau\left(\by |  (2/\delta) \bA \bz, \bA \right)$. 
This has the interesting property that the proposed $\by$ becomes conditionally independent from the current 
$\bx$ given the auxiliary variable $\bz$; we 
explicitly exploit this when we design samplers for learning
hyperparameters in Section \ref{sec:hyper}.  
Subsequently, with this formulation, we can work with an alternative expanded target,
\[
\pi( \bx, \bz)  \propto
\exp\{f(\bx) \} \Gau( \bx | \bzero, \bC)   \Gau(\bz | \bx  + (\delta /
2) \nabla f(\bx), (\delta/2) \bI )\,,
\]
and consider an auxiliary sampler that iterates between the steps:

 \begin{enumerate}
\item  $\bz \sim \Gau\left (\bz | \bx +  (\delta / 2) \nabla f(\bx) , (\delta / 2) \bI  \right ) $ 
\item  Propose $\by  \sim q(\by|\bz)$ 
 and   accept it according to the Metropolis-Hastings ratio 
\begin{align*}
&   \exp\left \{f(\by) -
  f(\bx) +g(\bz,\by) - g(\bz,\bx) \right \}, \\ 
& g(\bz,\by) = \left(\bz- \by - (\delta/ 4) \nabla f(\by)\right)^T \nabla f(\by)\,.
 \end{align*}
\end{enumerate}
The simplification in the acceptance ratio follows by noting that 
\begin{equation*}
\Gau\left(\by |  \frac{2}{\delta} \bA
    \bz, \bA \right) \propto \Gau(\by|\bzero,\bC) \Gau(\bz|\by,
  (\delta/2)\bI)\,.
\label{eq:posteriorZ}
\end{equation*}
The first step of the algorithm is likelihood-informed while the second step is 
prior-informed and these two-types of information are linked by the auxiliary variable 
$\bz$. 
The marginal sampler that corresponds to this scheme is
precisely the one obtained earlier.  
\rev{However, the Metropolis-Hastings ratio is computable at $\Op(n)$ cost
when $f(\bx)$ and $\nabla f(\bx)$ are also computable at $\Op(n)$
cost. This 
is in contrast with the marginal sampler
and the auxiliary sampler based on $\bu$.} Thus, the auxiliary sampler
based on $\bz$ allows a tradeoff between computational and statistical
efficiency.  


\subsection{\rev{Connection to Crank-Nicolson schemes and a
  covariance shrinkage perspective on algorithmic performance}}
 \label{sec:connections}
 
We can easily construct \emph{preconditioned} auxiliary and marginal
samplers by taking $q(\bu | \bx) = \Gau(\bu | \bx, (\delta/2) \bS)$. 
The corresponding  marginal 
 proposal is
\begin{align}
\label{eq:marginalGen} 
q(\by|\bx) & =  \Gau \left (\by | { 2 \over \delta} \bB   \bS^{-1}\left( \bx   +  {\delta \over 2} \bS \nabla
f(\bx) \right) ,   {2 \over \delta }\bB \bS^{-1} \bB + \bB \right)\,,
\end{align}
where $\bB = ( \frac{2}{\delta} \bS^{-1} + \bC^{-1})^{-1}$. Now if we set 
$\bS = \bC$ this proposal becomes that of pCNL \eqref{eq:pCNL} in
Section \ref{sec:related}.
Interestingly, it can
be checked that for no
choice of $\bS$  CNL is a special case of the marginal
sampler. 
The experiments in Section \ref{sec:exp} demonstrate that the marginal
sampler leads to a significant increase in efficiency, that can
be as large as 100-fold over pCNL. Here we 
provide some insights on the success of the marginal algorithm \rev{and
some further connections to recent algorithms that have been devised
to improve upon pCN.}  

\rev{We first observe that when $\nabla
f=\bzero$, the marginal sampler proposal density is reversible with
respect to the prior. This follows from the Lemma below, the proof of
which is given in a Supplement to this article. Notice that part (a)
is stated in terms of laws without assuming the existence of Lebesgue
densities, to even cover cases where the prior covariance is
singular; in practice it is also interesting to have proposal
mechanisms that do not require inversion of $\bC$, which even when
invertible might be very ill-conditioned for smooth latent Gaussian processes.
\begin{lem}
\label{lem:reverse}

\begin{enumerate}
\item  Suppose that $\bF$ is symmetric and commutes with $\bC$. Then,
  the transition 
  kernel $\Gau(\dd \by|\bF \bx, (\bI-\bF^2) \bC)$ is 
 reversible with respect to the prior $\Gau(\dd \bx|\bzero,\bC)$. 
\item Suppose that  $\bC$ is invertible with $\bC = \bG^2$ and $\bF$ is
  symmetric.  Then  the transition density $\Gau(\by|\bG \bF
  \bG^{-1} \bx, \bG (\bI-\bF^2) \bG)$ is 
 reversible with respect to $\Gau(\bx|\bzero,\bC)$.
\end{enumerate}
\end{lem}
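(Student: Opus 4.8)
The plan is to verify the detailed-balance (reversibility) identity directly in both parts, exploiting the fact that all relevant matrices are simultaneously diagonalisable. For part (a), reversibility with respect to $\Gau(\dd\bx\,|\,\bzero,\bC)$ means that the joint law of $(\bx,\by)$, where $\bx\sim\Gau(\bzero,\bC)$ and $\by\,|\,\bx\sim\Gau(\bF\bx,(\bI-\bF^2)\bC)$, is exchangeable. Since $\bF$ is symmetric and commutes with $\bC$, and $\bC$ is symmetric, there is a common orthonormal eigenbasis; writing everything in that basis decouples the $n$-dimensional statement into $n$ scalar statements. So the first step is to reduce to the one-dimensional case: for scalars, $x\sim\Gau(0,c)$ and $y\,|\,x\sim\Gau(fx,(1-f^2)c)$ with $c>0$ and $|f|\le 1$. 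The second step is the scalar computation: the joint density of $(x,y)$ is proportional to $\exp\{-\tfrac{1}{2c}(x^2 + (y-fx)^2/(1-f^2))\}$, and expanding the quadratic form shows it equals $\tfrac{1}{2c(1-f^2)}(x^2 - 2fxy + y^2)$, which is manifestly symmetric in $x$ and $y$; hence the joint law is exchangeable and reversibility follows. One should note the edge case $f^2=1$ (so $(\bI-\bF^2)\bC$ is singular on that eigenspace): there the transition is deterministic, $y=\pm x$, and reversibility is immediate; stating part (a) in terms of laws rather than densities is exactly what accommodates this, as the excerpt remarks.

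For part (b), the transition density is $\Gau(\by\,|\,\bG\bF\bG^{-1}\bx,\bG(\bI-\bF^2)\bG)$ with $\bC=\bG^2$ invertible and $\bF$ symmetric but \emph{not} assumed to commute with $\bC$. The natural approach is a change of variables: set $\widetilde\bx=\bG^{-1}\bx$ and $\widetilde\by=\bG^{-1}\by$. Under $\bx\sim\Gau(\bzero,\bC)=\Gau(\bzero,\bG^2)$ we get $\widetilde\bx\sim\Gau(\bzero,\bI)$, and the conditional law of $\widetilde\by$ given $\widetilde\bx$ becomes $\Gau(\widetilde\by\,|\,\bF\widetilde\bx,\bI-\bF^2)$. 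So after this linear reparametrisation part (b) reduces to the special case of part (a) with $\bC=\bI$ (which trivially commutes with the symmetric $\bF$), and reversibility with respect to $\Gau(\bzero,\bI)$ for the transformed chain is equivalent, by the change-of-variables, to reversibility of the original chain with respect to $\Gau(\bzero,\bC)$. Concretely: $P$ is $\mu$-reversible iff $\mu(\dd\bx)P(\bx,\dd\by)$ is symmetric; pushing this measure forward through the bijection $\bx\mapsto\bG^{-1}\bx$ (applied in each coordinate) preserves symmetry, and the pushforward is exactly $\Gau(\dd\widetilde\bx\,|\,\bzero,\bI)\,\Gau(\dd\widetilde\by\,|\,\bF\widetilde\bx,\bI-\bF^2)$.

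The main obstacle, such as it is, is bookkeeping rather than conceptual: one must be careful that the similarity transform $\bF\mapsto\bG\bF\bG^{-1}$ in part (b) is \emph{not} symmetric in general (since $\bF$ and $\bG$ need not commute), so one cannot blindly invoke part (a) at the level of the original variables — the reduction genuinely needs the change of variables, and one must track that the Jacobian factors cancel between the two sides of the detailed-balance relation (they do, since the same map is applied to both arguments). It is also worth checking that $\bI-\bF^2\succeq\bzero$ is implicitly required for $\bG(\bI-\bF^2)\bG$ to be a valid covariance — i.e. the eigenvalues of $\bF$ lie in $[-1,1]$ — and then the scalar computation in the eigenbasis of $\bF$ goes through verbatim. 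No step requires invertibility of $\bF$ or of $\bI-\bF^2$; only $\bC$ must be invertible in part (b), as assumed.
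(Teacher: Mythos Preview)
Your proposal is correct. Part (b) follows essentially the same route as the paper: a change of variables $\widetilde\bx=\bG^{-1}\bx$, $\widetilde\by=\bG^{-1}\by$ reduces to part (a) with prior covariance $\bI$, which trivially commutes with the symmetric $\bF$; the paper then writes out the density-level detailed balance with the Jacobian factors cancelling, exactly as you describe.

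For part (a) your argument differs from the paper's. You simultaneously diagonalise $\bF$ and $\bC$ and verify reversibility coordinate-by-coordinate via the scalar quadratic form $x^2-2fxy+y^2$. The paper instead works directly at the matrix level: it notes that the joint law of $(\bx,\by)$ is Gaussian with mean zero and block covariance having diagonal blocks $\bC$ and off-diagonal blocks $\bF\bC=\bC\bF^T=\bC\bF$ (using symmetry and commutativity), so the joint covariance is invariant under swapping the two blocks and exchangeability follows immediately. The paper's route is shorter and avoids choosing a basis; your route is more explicit and makes the edge case $f^2=1$ (degenerate transition covariance) transparent, which you handle correctly and which is indeed the reason the statement is phrased in terms of laws.
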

The choice $\bF=\{2/(2+\delta)\} \bI$ in Lemma 1 (a) yields pCN; the choice $\bF =  \{\bC + (\delta/2) 
  \bI\}^{-1}\bC = (2/\delta) \bA$ yields the marginal sampler proposal
  when $\nabla f = \bzero$, see a Supplement to this article for the
  calculations that establish this correspondence.  Therefore, one
  perspective on the proposed marginal sampler is as a principled
  extension of pCNL that changes the preconditioning matrix from a
  multiple of identity to one informed by $\bC$ while requiring a
  single tuning parameter $\delta$.  This perspective links the
  marginal sampler to some recent work in the inverse problem
  literature that improves upon pCN, such as \cite{kody1,kody2}, which we review below.
 }

The proposal covariances used in the marginal sampler
and pCNL have the same eigenvectors as that of $\bC$, but they have
different eigenvalues. In particular, an eigenvalue $\gamma$ of $\bC$
is mapped to 
\[
p(\gamma) = \left(1 - {4 \over (\delta+2)^2}\right)\gamma \quad
\mbox{and} \quad m(\gamma) = {\delta (\delta+4\gamma) \over (\delta
  + 2\gamma)^2} \gamma
\]
in pCNL and the marginal samplers respectively, where $\delta$ is the
step size used in each algorithm. Whereas pCNL shrinks
multiplicatively all eigenvalues of $\bC$ by the same factor, the
marginal sampler shrinks adaptively with the interesting property that
$m'(0) = 1$, hence small eigenvalues are left practically unchanged,
and $m(\gamma) \to \delta$ as $\gamma\to \infty$. Suppose now that the
latent Gaussian field is observed 
with additive Gaussian error with variance $\sigma^2$, as in the
experiments of Section \ref{sec:inflik}. 
Then, the posterior covariance has the same eigenvectors as $\bC$ but an eigenvalue $\gamma$
of $\bC$ is mapped to $t(\gamma) = \gamma \sigma^2 / (\gamma+\sigma^2)$. Hence, the posterior is shrunk a lot where the prior is
weak, with  $\gamma \gg \sigma^2$ mapped to approximately $\sigma^2$,
but $\gamma \ll \sigma^2$
being left practically unchanged. In fact, by taking
$\delta = \sigma^2$, we get that $m(\gamma)/t(\gamma)$ ranges from 1
to approximately 1.12 for the whole range of possible $\gamma,\sigma^2$
values. Thus, we should expect the
marginal sampler when tuned to achieve a good acceptance probability
(which we empirically have found to be between 50-60\%, see Section
\ref{sec:exp})  to choose a  
$\delta$ close to $\sigma^2$  and by
doing so, to capture the appropriate range of values for all eigenvalues of
the target covariance; this is corroborated by the experiments of
Section \ref{sec:exp}.  On the other hand, pCNL needs to set
$\delta$ as approximately $\sigma^2/\gamma$ in order to match a specific
eigenvalue $\gamma$;  
the experiments of Section
\ref{sec:exp} show that when tuned optimally pCNL chooses a step
size which is $\sigma^2/\gamma_{\max}$, where $\gamma_{\max}$ is the
largest eigenvalue of $\bC$, 
which is of course the one that has experienced the
largest change relative to the prior;  however, this step size is too
small for  the modes of the system for
which the posterior is similar to the prior, hence the posterior
distribution of those is explored 
poorly. 

\rev{An intuition along the above lines for the weakness of pCN
  proposal to capture the spectrum of the posterior is developed in
  Section 3 of \cite{kody1} in the context of Bayesian inverse
  problems, which points out that better algorithms should shrink by
  different amounts different eigenvalues. Such algorithms were
  originally proposed in \cite{kody1}  and then  explored much more
  thoroughly  in \cite{kody2}; these works  explore the 
  framework of Lemma 1 (b) and work with autoregressive
  matrices $\bF$ different from multiples of the identity. The
  construction requires that $\bC$ is invertible and a transformation
  of the latent field to standard Gaussian (or white noise when
  working with infinite-dimensional Gaussians). Then,  $\bF$ is such
  that in directions of the posterior strongly informed by the likelihood  a different step size
is used than in those where the prior dominates. This machinery requires a
preliminary exploration of the 
posterior to estimate a matrix the
eigendecomposition of which, together with some eigenvalue
thresholding, determine the so-called likelihood informed subspace,
i.e., those directions where the likelihood dominates, and its
complement. The resultant algorithms involve a large number of
tuning parameters and although the authors have made a serious effort
to automatize their choice, they are not "plug-and-play" in the way
that all other algorithms we have mentioned so far are.  This is why  we
do not consider them further in this 
article. } 





\section{Hyperparameter learning for latent Gaussian models \label{sec:hyper}}

An important issue in latent Gaussian 
models is concerned with the inference over hyperparameters of the covariance
matrix $\bC$, hence we need to design MCMC samplers that sample also from
the corresponding posterior distribution of those, together with the
latent field.  This problem is challenging 
because of the high dependence between these hyperparameters 
and the state vector $\bx$. To deal with this, several approaches have been presented in the literature  \cite{KnorrHeld02,ncp,murrayCovhyper,FilipponeML13,hensman2015mcmc,sahu}. 
Next, we discuss a new sampling move tailored to 
the auxiliary sampler based on the latent variable $\bz$. This is
basically a proof of concept implementation.
Let $\btheta$ denote the hyperparameters of the covariance,
$\bC_{\btheta}$, and $p(\btheta)$ their prior density. In the rest of this section we index matrices that
depend on the unknown hyperparameters $\btheta$, to emphasise the
dependence and the fact that they would have to be updated whenever
$\btheta$ changes during sampling. 
We consider the expanded target 
\[
\pi( \bx, \bz, \btheta)  \propto
\exp\{f(\bx) \} \Gau( \bx | \bzero, \bC_{\btheta})   \Gau(\bz | \bx  + (\delta /
2) \nabla f(\bx), (\delta/2) \bI ) p(\btheta) \,.
\]
To deal with the dependence between $\btheta$ and $\bx$, we  
construct a joint move $(\bx,\btheta) \rightarrow (\by,\btheta')$, 
conditional on the auxiliary variable $\bz$, with proposal 
\begin{equation*}
q(\by,\btheta'|\bx,\btheta, \bz) = q(\by | \btheta', \bz)  q(\btheta'|\btheta),
\end{equation*}
where 
\begin{equation*}
q(\by | \btheta', \bz) = \Gau (\by |  \frac{2}{\delta} \bA_{\btheta'} \bz, \bA_{\btheta'} )
= \frac{1}{\mathcal{Z}(\bz,\btheta')} \Gau(\by | \bzero, \bC_{\btheta'}) \Gau(\bz | \by, (\delta/2) \bI ). 
\end{equation*}  
Here, $\bA_{\btheta} = (\bC_{\btheta}^{-1} + (2/\delta) \bI)^{-1}$ while 
$\mathcal{Z}(\bz,\btheta') = \mathcal{N}(\bz | \bzero, \bC_{\btheta'}
+ (\delta / 2) \bI)$ is a marginal likelihood term.
The corresponding Metropolis-Hastings acceptance ratio simplifies to 
\begin{equation}
\exp\left \{f(\by) -  f(\bx) +g(\bz,\by) - g(\bz,\bx) \right \}  \times \frac{\mathcal{Z}(\bz,\btheta') 
p(\btheta') q(\btheta|\btheta')}{\mathcal{Z}(\bz,\btheta) p(\btheta)q(\btheta'|\btheta)}\,. \nonumber 
\end{equation} 
This ratio has an interesting product form. The first 
term, corresponding to $\exp\left \{f(\by)-f(\bx) +g(\bz,\by) -
  g(\bz,\bx) \right \}$, comes from the auxiliary sampler and does
not depend on $\btheta$.  In contrast the second term does not depend on $\bx$
and  is a  Metropolis-Hastings ratio for
targeting the posterior on $\btheta$ that would arise if the
observations were $\bz$ and were generated as a noisy observation of
the latent field, having isotropic Gaussian noise with variance
$\delta/2$. Due to this noise, the conditional density proportional to
$\mathcal{N}(\bz | \bzero, \bC_{\btheta} +
(\delta / 2) \bI) p(\btheta)$ will be flatter  than the one that arises
when we do Gibbs sampling on $\pi(\bx,\btheta)$, i.e.,
$\mathcal{N}(\bx | \bzero, \bC_{\btheta}) p(\btheta)$, hence we expect to
get better mixing. Notice also that it is possible to generalize the above move  so that the 
proposal on the hyperparameters becomes dependent on the auxiliary variable $\bz$, i.e.\ 
to take the form $q(\btheta'|\btheta, \bz)$. This, for instance, could allow 
to use a standard MALA proposal to target $\mathcal{Z}(\bz,\btheta) p(\btheta)$. 

The above move shares some similarities with the approach  
in \cite{murrayCovhyper} who also use auxiliary variables to inflate the covariance $\bC_{\btheta}$ with
an isotropic or diagonal covariance. However, the two main differences
with that work is that our approach is likelihood-informed 
through the re-sampling step of the auxiliary variable $\bz \sim \Gau(\bz | \bx  + (\delta / 2) \nabla f(\bx), (\delta/2) \bI )$
which depends on $\nabla f(\bx)$, and is based on joint sampling as
opposed to a reparametrization. 

\rev{The update of $\btheta$  necessitates the decomposition of
$\bC_{\btheta}$ and requires an $\Op(n^3)$ computation. Therefore,
every such update is expensive. To compensate 
for that we can do many
latent process
updates for each $\btheta$
update.  }

\section{Experiments \label{sec:exp}} 


We test the proposed samplers  in several 
latent Gaussian models applied to simulated and real data. 
The purpose of the experiments is  
to investigate the performance of the three proposed samplers detailed in Section \ref{sec:proposedalg}: 
i) the marginal sampler (mGrad), 
ii) the sampler based on auxiliary variable $\bu$ (aGrad-u) 
and iii) the sampler based on $\bz$ (aGrad-z). Recall, that all three
algorithms generate the same proposals but are based on different
rules for their acceptance. 
The three new schemes are compared against the methods
reviewed earlier in Section 3.2:  pMALA, pCN, 
pCNL, Ellipt.  
All algorithms have been implemented in MATLAB in an unified and
highly optimized code outlined in a Supplement and available online. 
In one examples we also use mMALA and RMHMC  based on the code provided by the authors \citep{Girolami11}, since their 
implementation in this setting can also be done at $\Op(n^2)$ cost. 
   In all the experiments the step size of pCN was tuned to 
achieve acceptance rate in the range $20\%$ to $30\%$, while for all the remaining 
algorithms that use gradient information from the likelihood the step size was tuned 
to achieve an acceptance rate in the range $50\%$ to $60\%$. This adaptation phase takes 
place only during burn-in iterations, while at collection of samples stage the step 
size is kept fixed. Notice that Ellipt does not require tuning a step size since it 
accepts all samples. The pilot tuning of $\delta$ can be done in a
computationally efficient way for all algorithms, see a Supplement to
this article for details. See the Supplement also for more figures,
experiments and details.


\subsection{Gaussian process regression and small noise limit\label{sec:inflik}}
 
The aim of this experiment is to compare the methods under different levels of the likelihood informativeness.  
For simplicity, we shall consider a simple Gaussian process regression setting where the 
likelihood is Gaussian so that 
\begin{equation}
f(\bx) = - \frac{n}{2} \log(2 \pi \sigma^2) - \frac{1}{2 \sigma^2} \sum_{i=1}^n (y_i - x_i)^2  
\label{eq:fxreg}
\end{equation}    
and where each $y_i$ is a scalar real-valued observation. We take the
prior covariance 
matrix $\bC$ to be
 defined via the squared exponential kernel function $c(\bs_i,\bs_j) = \sigma_x^2 \exp\{ - \frac{1}{2 \ell^2} ||\bs_i  - \bs_j||^2 \}$ where $\bs_i$ is a input vector associated with the latent function value $x_i$.
Under the above model, the informativeness of the likelihood depends
on the noise variance $\sigma^2$. 
We generated three 
regression datasets having $n=1000$ and noise variances $\sigma^2 =1,
0.1 \ \text{and} \ 0.01$. \rev{Autocorrelation summaries of
MCMC output are plotted in Figure \ref{fig:RegressInformLikel}. Autocorrelation lines are  plotted against 
CPU running time and they have been averaged over ten repeats  (using different random seeds) 
of the experiments, where the CPU time has incorporated the burn-in
iterations that are then removed for estimating the
autocorrelations}.      
\begin{figure}
\centering
\begin{tabular}{ccc}
\includegraphics[width=35mm,height=32mm]
{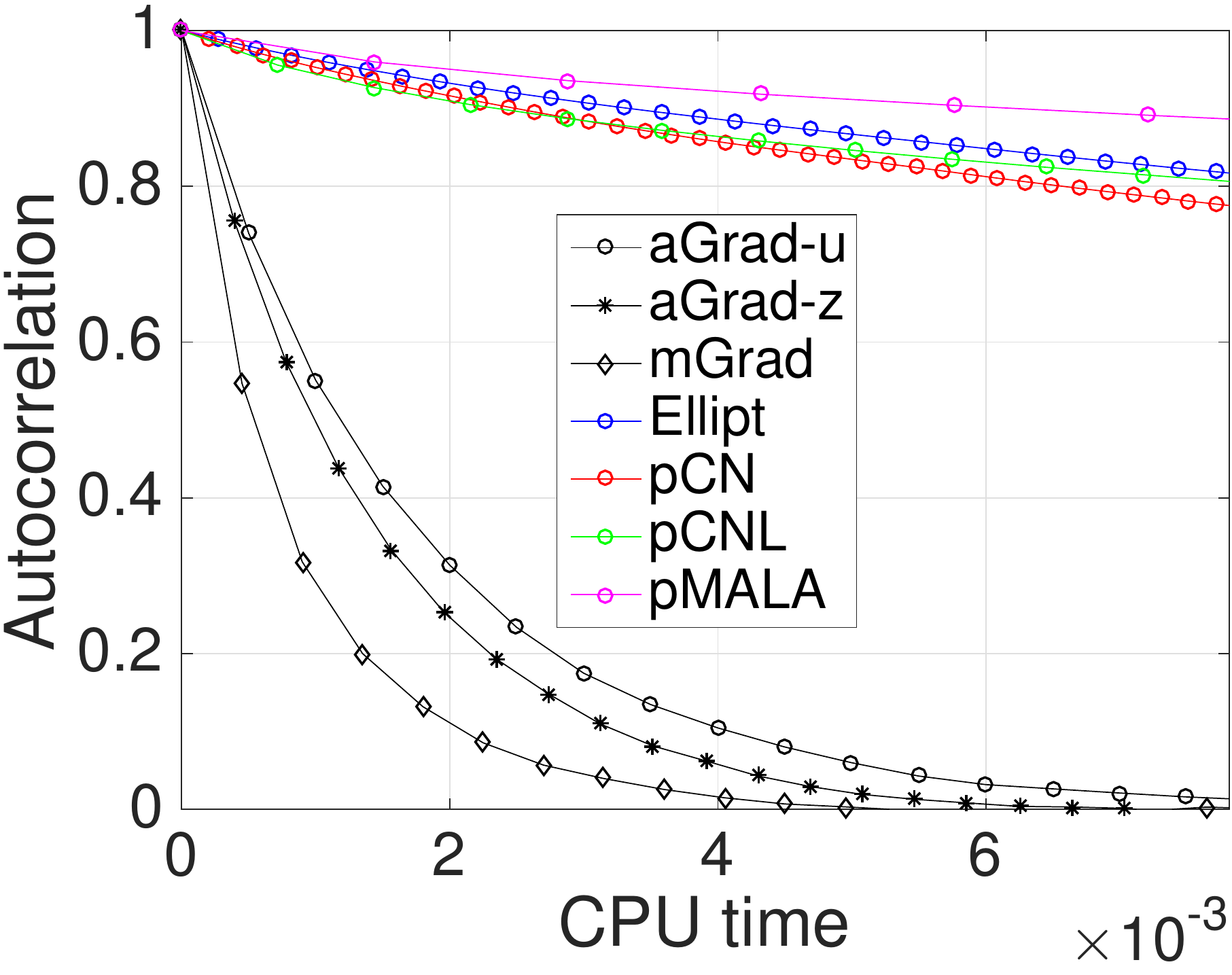} &
\includegraphics[width=35mm,height=32mm]
{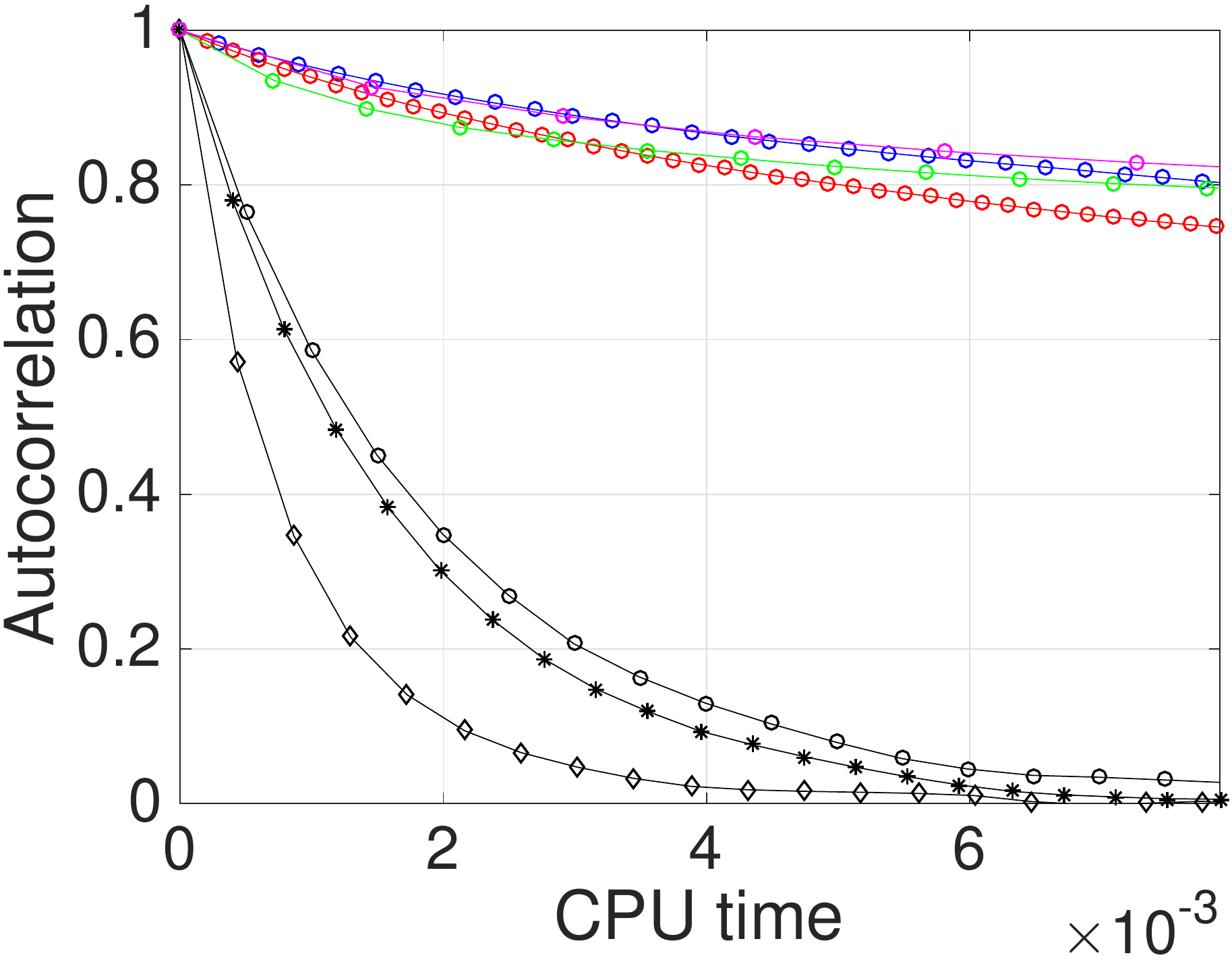} &
\includegraphics[width=35mm,height=32mm]
{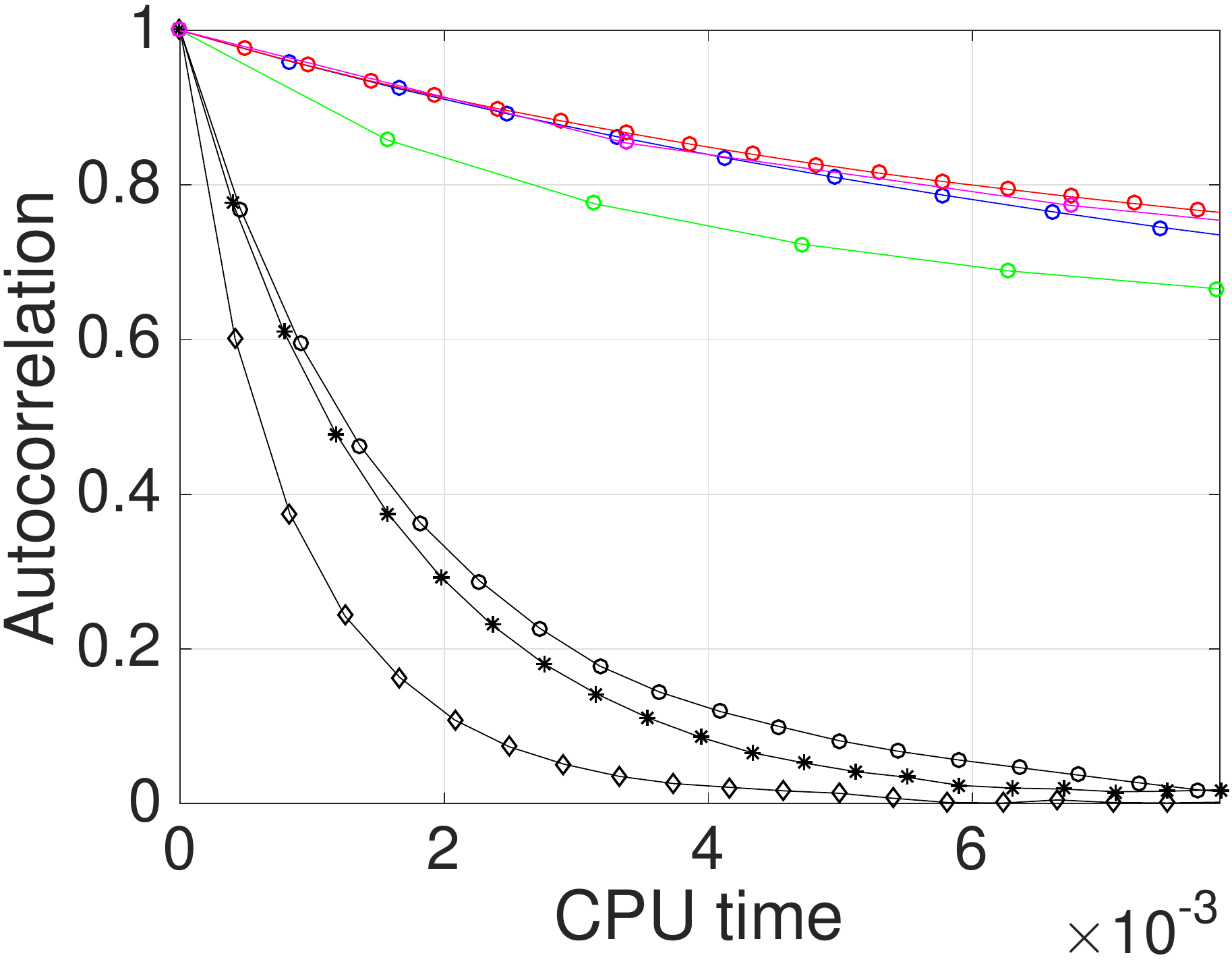} 
\end{tabular}
\caption{\rev{Estimated autocorrelation of the function $f(\bx)$ for
    all algorithms against CPU time; the lines are averages over 
  ten runs of the algorithms.}} 
\label{fig:RegressInformLikel}
\end{figure}
Note that in the limit $\sigma^2 \to 0$ the prior and
the posterior are mutually singular, hence prior-informed only
proposals will collapse. 
For each experiment the noise variance $\sigma^2$ was kept fixed to its  ground-truth value that generated 
the data and the covariance matrix $\bC$ was also fixed and precomputed using suitable values of the kernel 
hyperparameters $(\sigma_x^2, \ell^2)$.  To infer the $1000$-dimensional vector $\bx$ 
we run all alternative sampling schemes for $10^4$ burn-in iterations (starting from a state vector $\bx$ equal to zero) 
and then we collected $5 \times 10^3$ samples. For the third dataset where $\sigma^2=0.01$, 
pCN, pCNL, Ellipt, and pMALA converge more slowly and therefore  we had to increase the number of burn-in 
iterations to $3 \times 10^4$.   Table \ref{table:inflik3} provide 
quantitative results such as effective sample size (ESS), running times and an overall 
efficiency score for the very challenging case where $\sigma^2=0.01$;
the Supplement contains such results for the other two
simulations. The overall efficiency, given by the last column of the
Tables labelled as Min ESS/s, is described as the  
ratio between minimum ESS (among all dimensions of $\bx$) over running time.         
Regarding the performance of the three new samplers we can observe the
following. Firstly,  all samplers converge very fast in all cases and
generate highly effective mixing chains with  
large effective sample sizes (ESS). The highest ESS is always achieved
by mGrad, followed by aGrad-u and aGrad-z.  However, if we also take
into account computational time, the aGrad-z scheme becomes more competitive and 
it can outperform aGrad-u. Secondly, 
both speed of convergence and sampling performance remains stable as the informativeness 
of the likelihood increases. This is achieved through learning the step size $\delta$ 
which adapts to the 
noise of the likelihood - see the argumentation in Section
\ref{sec:connections}. 
Remarkably, in all cases for mGrad $\delta$ tends to be slightly larger than $\sigma^2$, 
while for aGrad-u and aGrad-z $\delta$ is smaller (about a half) than $\sigma^2$.  
Regarding the other methods,  their performance (see Min ESS/s) is at least one order of magnitude 
worse than aGrad-z.   
Notice the step size $\delta$ learned 
pCN, pCNL and pMALA  is not within the same range of the noise
variance $\sigma^2$, but instead it is much smaller - it is actually
about $\sigma^2/ \gamma_{\max}$ where $\gamma_{\max}$ is the largest
eigenvalue of $\bC$, as predicted by the arguments in Section
\ref{sec:connections}. 
 
     
\begin{table}
\caption{\label{table:inflik3}Comparison of sampling methods in the  regression dataset with $\sigma^2 =0.01$. \rev{All numbers are averages across
ten repeats where also one-standard deviation is given for the Min ESS/s score.}}
\centering
\fbox{%
\begin{tabular}{*{5}{c}}
\em Method &\em Time(s) &\em Step $\delta$  &\em ESS (Min, Med, Max)  &\em Min ESS/s (s.d.) \\ 
\hline
aGrad-z  &   5.5  &  0.005  &  (306.7, 430.9, 539.3)  &  55.48 (5.96)\\ 
aGrad-u  &  6.9  &  0.006  &  (345.8, 459.3, 584.8)  &  50.27 (4.02) \\ 
mGrad  &  5.8  &  0.011  &  (856.0, 1095.6, 1301.1)  &  147.67 (9.97)\\ 
pMALA  &  49.3  &  < 0.001  &  (8.4, 45.1, 155.5)  &  0.17 (0.07)\\ 
Ellipt  &  11.0  &   &  (12.9, 50.3, 144.8)  &  1.18 (0.34)\\ 
pCN  &  6.6  &  < 0.001  &  (8.0, 35.4, 107.7)  &  1.21 (0.42)\\ 
pCNL  &  23.9  &  < 0.001  &  (8.7, 54.4, 208.6)  &  0.36 (0.13)\\ 
\end{tabular}}
\end{table}


\subsection{Log-Gaussian Cox Process  \label{sec:logGaussianCox}} 

We consider a log-Gaussian 
Cox model 
and  
apply the sampling schemes to the same model and dataset used in \citep{Girolami11}. 
We also consider a down-sampled version of this dataset in order to investigate how 
the methods behave under mesh refinement. 
More precisely,  we assume that an area of interest $[0,1]^2$ is discretized 
into a $64 \times 64$ regular grid, and each latent variable 
$x_{ij}$ is associated with the grid cell $(i,j),i,j,=1,\ldots,64$. 
The dimension of $\bx$ is $n=4096$. We assume that a data vector of counts is 
generated independently given a latent intensity process $\lambda(\cdot)$
so that each $y_{ij}$ is Poisson distributed with mean $\lambda_{ij}
= m \exp(x_{ij} + v)$, where $m = 1/4096$ is the area of the grid cell
and $v$ is a constant mean value.  
The $4096$-dimensional latent vector $\bx$ is drawn from a
Gaussian process with zero-mean and covariance function 
$$k((i,j),(i',j')) = \sigma_x^2 \exp( -\sqrt{(i-i')^2 + (j -
j')^2}/64 \beta)\,.$$  The likelihood is Poisson form, hence
$$
f(\bx) = \sum_{i,j}^{64} \left( y_{ij} (x_{ij} + v) - m
\exp(x_{ij} + v) \right)\,.
$$ 
We used the dataset from \citep{Girolami11} that was simulated by the above 
model by setting the hyperparameters to $\beta = 1/33$, $\sigma_x^2 = 1.91$ and 
$v = \log(126) - \sigma_x^2/2$. The objective was to infer the latent vector
$\bx$ while keeping the hyperparameters $(\beta,\sigma^2,v)$ fixed to their ground truth values.  
Following the experimental protocol in \citep{Girolami11} we run the algorithms for
$2000$ burn-in iterations and then we collected $5000$ posterior samples. 
Furthermore, for this experiment we also included in the comparison 
the two main algorithms in \citep{Girolami11} namely the manifold MALA (mMALA)
and the Riemann Manifold Hamiltonian Monte Carlo (RMHMC) by using the software  
provided by the authors. 
Table \ref{table:logGaussianCox} provides quantitative scores (see the
Supplement to this article for some trace plots).
We can observe that the new samplers  have significantly better performance 
than all competitors but RMHMC. The latter has slightly less Min ESS/s than mGrad.   
However, notice that for this log-Gaussian Cox model, and due to the analytic properties of the log-likelihood, RMHMC and mMALA 
use a constant metric tensor (i.e.\ the preconditioning matrix $\bS$) that does not depend on $\bx$ \citep{Girolami11}. 
This is an ideal scenario for RMHMC and mMALA which allows the computational cost
to be $O(n^2)$, i.e.\ the same cost as for all other methods in Table \ref{table:logGaussianCox}. 
For more complex models
the metric tensor will depend on $\bx$, which makes RMHMC and mMALA extremely slow    
due to the $O(n^3)$ computational cost per sampling iteration. 
As in the previous experiment  in terms of ESS mGrad is best followed
by aGrad-u and then aGrad-z, although in terms of ESS per unit of time
the fastest aGrad-z scheme gets closer to aGrad-u. Also, notice again the
striking pattern of the learned step sizes for the new samplers 
(third column in Table \ref{table:logGaussianCox}) that positively correlate with ESS.      
In the Supplement we show an experiment with down-sampled sparser
$32\times 32$ grid; results show that $\delta$ in the proposed
gradient samplers is found to be 4 times smaller than on the finer grid,
reflecting the larger information of the data per latent variable in
the sparser dataset. The sampling efficiency of the proposed
algorithms remains unchanged under mesh coarseness, only computing
times decrease.


\begin{table}
\caption{\label{table:logGaussianCox}Comparison of sampling methods in the log-Gaussian Cox model dataset in 
the original version where $n=4096$. \rev{All numbers are averages across
ten repeats where also one-standard deviation is given for the Min ESS/s score.}}
\centering
\fbox{%
\begin{tabular}{*{5}{c}}
\em Method &\em Time(s) &\em Step $\delta$  &\em ESS (Min, Med, Max)  &\em Min ESS/s (s.d.) \\ 
\hline
aGrad-z  &   89.6  &  0.962  &  (36.1, 181.2, 507.5)  &  0.40 (0.10)\\ 
aGrad-u  &  134.6  &  2.814  &  (95.8, 469.3, 1092.4)  &  0.71 (0.14) \\ 
mGrad  &  132.0  &  5.887  &  (177.8, 801.1, 1628.6)  &  1.35 (0.15)\\ 
pMALA  &  218.7  &  0.006  &  (3.5, 12.9, 59.3)  &  0.02 (0.00)\\ 
Ellipt  &  51.5  &   &  (4.2, 17.0, 66.0)  &  0.08 (0.01)\\ 
pCN  &  47.4  &  0.012  &  (3.3, 12.0, 57.6)  &  0.07 (0.00)\\ 
pCNL  &  87.7  &  0.006  &  (3.4, 12.6, 60.2)  &  0.04 (0.00)\\ 
mMALA  &   334.1  &  0.070  &  (21.4, 84.7, 179.4)  &  0.06 (0.02) \\ 
RHMC  &  1493.7  &  0.100  &  (1825.7, 4452.2, 5000.0)  &  1.22 (0.09) \\ 
\end{tabular}}
\end{table}

\subsection{Binary Gaussian process classification \label{sec:binaryclass}}

In this section we consider binary  Gaussian process classification
where given a set of training examples $\{y_i, \bs_i\}_{i=1}^n$ 
we assume a logistic regression likelihood such that
\begin{equation}
f(\bx) = \sum_{i=1}^n y_i \log \sigma(x_i) + (1 - y_i) \log (1 - \sigma(x_i)),
\label{eq:fxbinclass}
\end{equation} 
where $\sigma(x) = 1/(1 + \exp(-x))$ is the logistic function. Here, the vector $\bx$ has been drawn from a 
Gaussian process prior $\bx \sim \mathcal{N}(\bx|\bzero, \bC)$ with the covariance matrix defined by the squared exponential covariance function $c(\bs_i,\bs_j) = \sigma_x^2 \exp\{ - \frac{1}{2 \ell^2} ||\bs_i - \bs_j||^2 \}$.

We considered five standard binary classification datasets with a
number of examples ranging from $n=250$ to $n=1000$. In this section
we show results for the ``Heart'' dataset for which the latent
dimension is $n=270$  and the input dimensionality is $D=13$; see
Table \ref{table:heart}.  In the
Supplement we show results for the other four (and some trace plots
for the ``Heart'' dataset), which qualitatively
communicate the same information as those included here.  We shall follow our experimental protocol used so far 
to compare the sampling methods purely on their ability to sample $\bx$. Thus, we set the covariance 
function hyperparameters $(\sigma_x^2, \ell^2)$ to fixed values chosen to be a 
representative posterior sample obtained after a long run of aGrad-z that carried out full Bayesian inference 
(see Section \ref{sec:hyper}). All sampling methods run for $5000$ burn-in iterations 
and subsequently $5000$ posterior samples were collected. 

The observations we made in all previous experiments hold for the current results as well. For instance, 
the new samplers remain much superior than the competitors. 
However, notice that the improvement over the competitors can be
smaller  in binary classification compared to the previous model classes. This is because the  logistic regression likelihood function can be 
much less informative about the latent $\bx$ compared to standard or Poisson regression.  

\begin{table}
\caption{\label{table:heart}Comparison of sampling methods in Heart dataset. The size of the latent vector
$\bx$ is $n = 270$ and the input dimensionality is $D = 13$. \rev{All numbers are averages across
ten repeats where also one-standard deviation is given for the  Min ESS/s score.}}
\centering
\fbox{%
\begin{tabular}{*{5}{c}}
\em Method &\em Time(s) &\em Step $\delta$  &\em ESS (Min, Med, Max)  &\em Min ESS/s (s.d.) \\ 
\hline
aGrad-z  &   1.7  &  1.440  &  (60.0, 207.2, 419.9)  &  35.29 (11.28)\\ 
aGrad-u  &  1.9  &  2.378  &  (89.1, 327.2, 641.9)  &  48.20 (16.82) \\ 
mGrad  &  1.6  &  5.043  &  (188.9, 593.2, 1096.6)  &  115.39 (8.98)\\ 
pMALA  &  2.1  &  0.049  &  (21.6, 68.7, 151.3)  &  10.36 (1.74)\\ 
Ellipt  &  2.5  &   &  (12.9, 43.9, 101.5)  &  5.17 (0.89)\\ 
pCN  &  1.0  &  0.041  &  (6.9, 28.6, 83.6)  &  6.96 (1.18)\\ 
pCNL  &  1.5  &  0.048  &  (22.2, 69.0, 149.1)  &  14.82 (2.57)\\ 
\end{tabular}}
\end{table}

\subsection{Sampling hyperparameters for binary and multi-class Gaussian process classification
\label{sec:multiclass}}

Here, we wish to infer both hyperparameters of the covariance matrix $\bC$ 
and the state vector $\bx$. We show results for a  very challenging problem  
in multi-class classification involving the popular handwritten recognition task of 
MNIST digits. A Supplement to this article includes an experiment with
binary classification. For all the examples we 
use a squared exponential kernel function 
$c(\bs_i,\bs_j) = \sigma_x^2 \exp\{ - \frac{1}{2 \ell^2} ||\bs_i - \bs_j||^2 \}$ and 
we represent the hyperparameters in the log space so that $\btheta = (\log \sigma_x,  
\log \ell)$. A flat Gaussian prior is assigned to $\btheta$.  
We run only a subset of methods compared in the previous sections. 
Specifically, as a representative of the proposed methods we will consider 
aGrad-z for which we have developed the novel move for jointly sampling 
the latent field $\bx$ and the hyperparameters $\btheta$; see in Section \ref{sec:hyper}. 
We shall consider two versions of aGrad-z: the first based on the standard 
Gibbs approach (aGrad-z-gibbs) that alternates between sampling the latent field and the 
hyperparameters and the second one based on the novel joint sampling move (aGrad-z-joint). 
We also consider pCNL, which is typically the best algorithm among the related pCN and pMALA schemes,
and Ellipt which is widely used in the machine learning community. 
Both pCNL and Ellipt are used to sample the latent field $\bx$ within 
a Gibbs procedure and to clarify this next we denote them as pCNL-gibbs and Ellipt-gibbs. 
For all methods
the proposal for the hyperparameters was chosen to be a Gaussian 
\[
q(\btheta'|\btheta) =  \Gau(\btheta'|\btheta , \kappa \bI),
\] 
where the step size $\kappa$ was tuned to 
achieve acceptance rate in the range $20\%$ to $30\%$. In is important to 
notice that the only difference between aGrad-z-gibbs,  pCNL-gibbs  and Ellipt-gibbs
is how the latent field is sampled, while the step for sampling the hyperparameters
is common to all of them. Also notice that a single 
iteration for aGrad-z-joint consists of two steps: the first for sampling $\bx$ alone 
and the second for joint sampling $(\bx,\btheta)$. These two steps are necessary 
at least for the burn-in phase since the acceptance histories of the first step allow to tune 
$\delta$ while the ones from the second step allow to tune $\kappa$.

In multi-class classification we assume observed pairs $\{y_i,\bs_i\}_{i=1}^n$ 
where $y_i \in \{1,\ldots,K\}$ denotes the class label and $K$ is the number of classes. 
For each $k$-th class we assume a separate latent field drawn independently from a Gaussian 
process so that  $\bx_k \sim \mathcal{N}(\bx_k|\bzero, \bC_{\btheta_k})$ 
where $\btheta_k$ denotes the kernel hyperparameters. The probability of 
an observed $y_i$, given its input vector $\bs_i$, is modelled 
through the multinomial logistic function (most commonly known as softmax), i.e.\
$p(y_i|\bx_i) = \exp\{x_{y_i i}\}/\sum_{k=1}^K \exp\{x_{k i}\}$, which results in a 
log-likelihood function having the form 
\begin{equation}
f(\bx_1,\ldots,\bx_K) = \sum_{i=1}^n \left( x_{y_i i} - \log \sum_{k=1}^K \exp\{x_{k i}\} \right).
\label{eq:multiclasslik}
\end{equation} 
Notice that each $i$-th log-likelihood term couples together $K$ latent values associated with the 
different classes. This adds strong correlations across the a priori independent latent fields, 
which makes posterior inference very challenging. The inference problem involves
learning all latent vectors $(\bx_1,\ldots,\bx_K)$ and the kernel hyperparameters 
$(\btheta_1,\ldots,\btheta_K)$. 

We consider a subset of $n=1000$ instances from the MNIST dataset
which contains $K=10$ classes associated with hand-written digits represented as 
$D=784$ inputs vectors. Thus, each latent vector $\bx_k$ is $1000$-dimensional 
and the overall latent field consists of $K \times n =  10^4$ latent values. 
We also need to infer $20$ kernel hyperparameters, i.e.\ a lengthscale $\ell_{k}^2$ 
and a variance $\sigma_{xk}^2$ for each class. All sampling algorithms were applied 
so that the full latent field is sampled in a single step, i.e.\ all latent  
$(\bx_1,\ldots,\bx_K)$ are concatenated into a single vector that 
follows a $10^4$-dimensional and block-diagonal Gaussian prior. 
We run all algorithms for $10^5$ iterations by initializing them to exactly the same configuration, i.e.\ 
to a zero vector for the full latent field and sensible kernel  hyperparameters based on the inputs vectors. 
All runs were very time-consuming as, for instance, the fastest Ellipt-gibbs completes the $10^5$ iterations 
in roughly $20$ hours while  aGrad-z-joint requires around $40$ hours. Figure \ref{fig:mnist1} 
plots the log-likelihoods across all sampling iterations. Clearly, 
aGrad-z-joint and aGrad-z-gibbs exhibit fast convergence with 
aGrad-z-joint being slightly faster. In contrast, Ellipt-gibbs 
and pCNL-gibbs face severe difficulties to converge to the posterior 
mode found by aGrad-z schemes. For instance, the latter algorithms progress 
very slowly and they are unable to rapidly increase the log-likelihood. 
Recall that the only difference between aGrad-z-gibbs,  pCNL-gibbs  and Ellipt-gibbs
is how the latent field is sampled, therefore the slow convergence of Ellipt-gibbs and 
pCNL-gibbs should be due to their ineffective mechanism for sampling the latent field.

\begin{figure}
\centering
\begin{tabular}{c}
\includegraphics[scale=0.65]
{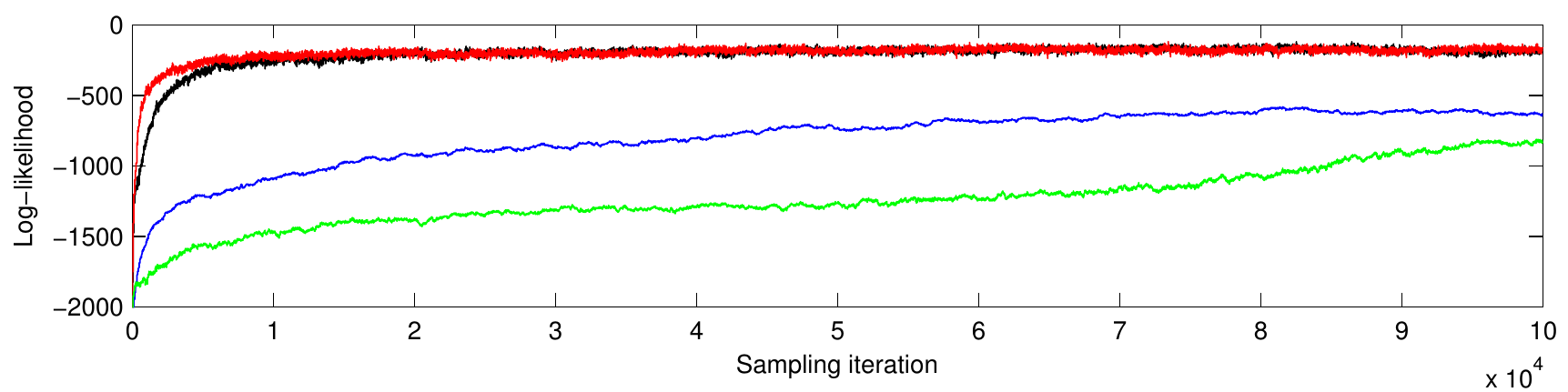} \\
\end{tabular}
\caption{\label{fig:mnist1}The evolution of the log-likelihood values across all iterations in MNIST 
dataset for the sampling schemes aGrad-z-gibbs (black line), aGrad-z-joint (red line),  
Ellipt-gibbs (blue line) and pCNL-gibbs (green line).}
\end{figure}

\section{Discussion}
\label{sec:disc}

We have shown how to turn simple random walk proposals into
gradient-based samplers by the use of auxiliary variables and Taylor
approximations. From a methodological side, we provide a framework
for producing new algorithms that yields as special cases the standard
gradient-based samplers MALA and pCNL. However, the algorithms that we
promote in the article and empirically are shown to have a far
superior performance are different instances of the generic
framework. Our work generates several topics that call for further
research. Here we highlight the following four. One is the scaling of the proposed algorithms. A great asset
of the new samplers is that they only require the tuning of a single
scalar step size, denoted as $\delta$ in the article;  we have
empirically found that setting it to achieve about 50\% to 60\% acceptance rate leads to
good mixing across a wide range of examples \rev{(see Supplement for a plot that empirically  
illustrates that this range of acceptance rates  maximizes ESS)}, and have shown (in the
Supplement) how to
adaptively do the tuning without affecting the complexity of the
algorithm.   Actually, the arguments in Section \ref{sec:connections}
suggest taking $\delta$ as $1/\lambda$, where $\lambda$ is a measure
of precision in the likelihood. It would be really interesting to
formalise these arguments  and even in  the context of Gaussian
targets obtain an optimal scaling theory in terms of  squared jumping
distance. The existing results of \cite{sherlock} are relevant but not
directly applicable. A second is the theoretical investigation
of the relative efficiency among the auxiliary and the marginal
samplers as $n$ or other parameters of the target vary. We have
rigorously shown that the asymptotic variance of ergodic averages
computed with auxiliary samplers is larger than that for the marginal
sampler. We have empirically observed that the efficiency of the
aGrad-u is better than that of aGrad-z, hence conforming to the
intuition that larger computational cost is associated with better
Monte Carlo efficiency. We have also observed empirically that the relative
performance among the samplers does not change drastically as $n$ or
other parameters vary. It is interesting to obtain solid theoretical
results on these comparisons. Thirdly, in this article the dimension
of the auxiliary variables, $\bu$ or $\bz$, matched that of the state
vector $\bx$. This however does not need to be so. Both the theory on
the comparison between marginal and auxiliary samplers, but also the
generic methodology as described in Section \ref{sec:comp}, work also
when dimensions of the auxiliary variable and $\bx$ are
different. We anticipate that even more challenging learning problems
than those considered in this paper can benefit from this
flexibility. Finally, the well developed Markov chain theory for the
convergence of the two-component Gibbs sampler can be used to try and
characterise the properties of the idealised Gibbs sampling algorithm that
can sample exactly $\pi(\bx|\bu)$. Questions regarding the choice of
step size $\delta$, the preconditioner $\bS$ or the augmentation of
$\bu$ or $\bz$, could be answered by
obtaining characterisations of the convergence rate of the idealised
algorithm. 

\section*{Acknowledgements}

We would like to thank both referees for helpful feedback. 




   
\bibliographystyle{plain}
\bibliography{refs}

\section*{Appendix/Supplementary material}

\section*{\rev{Reversibility properties of the marginal sampler}}

\rev{
We start with proving Lemma 1 in the main paper.

\begin{lem}
\label{lem:reverse}

\begin{enumerate}
\item  Suppose that $\bF$ is symmetric and commutes with $\bC$. Then,
  the transition 
  density $\Gau(\by|\bF \bx, (\bI-\bF^2) \bC)$ is 
 reversible with respect to $\Gau(\bx|\bzero,\bC)$. 
\item Suppose that  $\bC$ is invertible with $\bC = \bG^2$ and $\bF$ is
  symmetric.  Then  the transition density $\Gau(\by|\bG \bF
  \bG^{-1} \bx, \bG (\bI-\bF^2) \bG)$ is 
 reversible with respect to $\Gau(\bx|\bzero,\bC)$.
\end{enumerate}
\end{lem}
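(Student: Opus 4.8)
The plan is to reduce both claims to one elementary fact — the reversibility of the Gaussian autoregressive kernel — and then transport it by linear changes of variables. Throughout I would use that a kernel $P$ is reversible with respect to a law $\mu$ precisely when, under the construction $\bx\sim\mu$ followed by $\by\mid\bx\sim P(\bx,\cdot)$, the joint law of $(\bx,\by)$ is invariant under the swap $\bx\leftrightarrow\by$.

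For part (a) I would first write down the joint law of $(\bx,\by)$: it is centered Gaussian on $\mathbb{R}^{2n}$ with $\mathrm{Var}(\bx)=\bC$, cross-covariance $\mathrm{Cov}(\bx,\by)=\bC\bF^{T}=\bF\bC$, and $\mathrm{Var}(\by)=\bF\bC\bF^{T}+(\bI-\bF^{2})\bC$. Using that $\bF$ is symmetric and commutes with $\bC$, the cross-covariance is symmetric ($\bF\bC=\bC\bF$) and the $\by$-variance collapses to $\bF^{2}\bC+\bC-\bF^{2}\bC=\bC$; hence the joint covariance is the $2n\times 2n$ block matrix with both diagonal blocks equal to $\bC$ and both off-diagonal blocks equal to $\bF\bC=\bC\bF$, which is manifestly unchanged when the two coordinate blocks are interchanged. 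That is exactly detailed balance, and it also confirms that $\Gau(\bzero,\bC)$ is stationary. To keep the statement meaningful when $\bC$, and hence $(\bI-\bF^{2})\bC$, is singular, I would phrase this in terms of the centered Gaussian \emph{law} with the above covariance rather than a Lebesgue density; simultaneously diagonalizing the commuting symmetric pair $\bF,\bC$ by an orthogonal matrix reduces detailed balance to $n$ scalar statements, each being either a one-dimensional Gaussian AR(1) step (classically reversible with respect to its stationary normal) or a trivial degenerate step on the kernel of $\bC$, and at the same time shows that $(\bI-\bF^{2})\bC$ is a genuine covariance under the standing requirement that the eigenvalues of $\bF$ lie in $[-1,1]$ on the range of $\bC$.

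For part (b) the idea is to whiten. Since $\bC=\bG^{2}$ with $\bG$ symmetric positive definite, the linear bijection $\bw\mapsto\bG\bw$ pushes $\Gau(\bzero,\bI)$ forward to $\Gau(\bzero,\bC)$. I would substitute $\bx=\bG\bw$ and $\by=\bG\bv$ into the kernel $\Gau(\by\mid\bG\bF\bG^{-1}\bx,\bG(\bI-\bF^{2})\bG)$: the conditional mean of $\bv$ given $\bw$ becomes $\bG^{-1}\bG\bF\bG^{-1}\bG\bw=\bF\bw$ and the conditional covariance becomes $\bG^{-1}\bG(\bI-\bF^{2})\bG\bG^{-1}=\bI-\bF^{2}$, so in white-noise coordinates the kernel is exactly $\Gau(\bv\mid\bF\bw,\bI-\bF^{2})$. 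Since $\bF$ is symmetric and trivially commutes with $\bI$, part (a) applied with $\bC$ replaced by $\bI$ yields reversibility of this kernel with respect to $\Gau(\bzero,\bI)$. I would then finish by invoking the elementary fact that reversibility is preserved under a linear bijective change of variables together with the corresponding pushforward of the invariant law, and transport the conclusion back through $\bG$.

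The matrix identities involved, namely $\bF\bC\bF^{T}+(\bI-\bF^{2})\bC=\bC$ and $\bG^{-1}\bG\bF\bG^{-1}\bG=\bF$, are routine, so I do not anticipate a substantive obstacle; the only points that need care are the clean measure-theoretic formulation in part (a) covering singular $\bC$ and a precise statement of the change-of-variables/pushforward lemma for reversibility used in part (b).
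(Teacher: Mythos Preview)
Your proposal is correct and follows essentially the same route as the paper: for (a) you verify symmetry of the joint Gaussian covariance of $(\bx,\by)$ using that $\bF$ is symmetric and commutes with $\bC$, and for (b) you whiten via $\bx=\bG\bw$, $\by=\bG\bv$, reduce to the case $\bC=\bI$, and transport reversibility back through the linear bijection $\bG$. The paper's proof is terser---it does not write out $\mathrm{Var}(\by)=\bF\bC\bF^{T}+(\bI-\bF^{2})\bC=\bC$ explicitly and simply asserts the block covariance equals $\begin{pmatrix}\bC & \bC\bF\\ \bC\bF & \bC\end{pmatrix}$---but the argument is the same.
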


\begin{proof}
(a): We need to show that the joint law defined by the prior  $\Gau(\dd
\bx|\bzero,\bC)$ and the kernel  $\Gau(\dd \by|\bF \bx, (\bI-\bF^2)
\bC)$, say $Q(\dd \bx, \dd \by)$ is symmetric, in the sense that
$Q(\dd \bx, \dd \by) = Q(\dd \by, \dd \bx)$. With the kernel defined,
the joint law defined is jointly Gaussian with mean $\bzero$ and covariance
\[
\left ( \begin{array}[c]{cc} \bC & \bC \bF^T \\ \bF \bC &
    \bC\end{array} \right) = \left ( \begin{array}[c]{cc} \bC & \bC \bF \\ \bC \bF &
    \bC\end{array} \right)
\]
where the equality is due to the assumed symmetry and comutability of
$\bF$. This Gaussian law is symmetric. \newline (b) Result (a) implies
that $\Gau(\dd \bz ; \bF \bu, \bI-\bF^2)$ is reversible with respect
to $\Gau(\dd \bu; \bzero, \bI)$ when $\bF$ is symmetric; let
$\tilde{\pi}$ and $\tilde{q}$ denote the target and transition density
of these standardized variables. Consider
the transformation $\bx = \bG \bu, \by = \bG \bz$, which is invertible
by assumption. Then, 
\begin{align*}
\pi(\bx) q (\by | \bx) & = |\bG|^{-2} \pi(\bG^{-1} \bu) q(\bG^{-1} \bz |
\bG^{-1}\bu) =  \tilde{\pi}(\bu) \tilde{q}(\bz | \bu) =
\tilde{\pi}(\bz) \tilde{q}(\bu | \bz) \\ &  = \pi(\by) q (\bx | \by),
\end{align*}
where first and last equalities follow from change of variables and
the middle one by the established reversibility of the standardized process.
\end{proof}

We now show that the marginal sampler proposal is reversible with
respect to the prior when $\nabla f = \bzero$. Recall that the
marginal sampler proposal in that case becomes 
\[
q(\by | \bx) = \Gau\left (\by | ({\delta \over 2} \bI + \bC)^{-1} \bC
  \bx, {\delta \over 2}  ({\delta \over 2} \bI + \bC)^{-1} (\bI +
  ({\delta \over 2} \bI + \bC)^{-1} \bC) \bC \right )\,, 
\]
by using the alternative expression for $\bA$ in Equation (4) of the
main article, and after collecting terms. We then work as follows,
using Schur's complement: 
\begin{align*}
\bI - \left({\delta \over 2} \bI + \bC \right)^{-1} \bC & = \bI -
\bC^{1/2}\left({\delta \over 2} \bI + \bC \right)^{-1}  \bC^{1/2} \\
& = \left( \bI + \bC^{1/2} {2 \over \delta} \bC^{1/2} \right)^{-1} =
{\delta \over 2} \left({\delta \over 2} \bI + \bC \right)^{-1}
\end{align*}
therefore the variance of the transition density can be written as 
\[
\left\{ \bI -  \left({\delta \over 2} \bI + \bC \right)^{-2}
  \bC^2\right\} \bC = (\bI - \bF^2) \bC, 
\]
for $\bF$ the autoregressive matrix, $\bF = \left({\delta \over 2} \bI + \bC \right)^{-1}
  \bC$.
}

\section*{Second order schemes
\label{sec:secondorder}}

Our framework can be easily extended to a second-order Taylor
expansion of  $f(\bx)$. For instance the auxiliary sampler based on
$\bu$ and the corresponding marginal sampler are obtained as follows: 
\begin{align*}
q(\by |\bu, \bx)
& \propto  \Gau\left (\by |\frac{2}{\delta} \bA_{\bx} \left( \bu -
   \frac{\delta}{2} (\nabla f(\bx) -  
    \bH_{\bx} \bx ) \right), \bA_{\bx} \right)
\label{eq:secondorder}
\end{align*}    
\begin{equation*}
q(\by | \bx)  =  \Gau\left(\by | \frac{2}{\delta}  \bA_{\bx} \left( \bx -
   \frac{\delta}{2} (\nabla f(\bx) -  
    \bH_{\bx} \bx ) \right),  
 \frac{2}{\delta} \bA_{\bx}^2 + \bA_{\bx} \right)\,,
\label{eq:secondorderMarg}
\end{equation*}  
where $\bH_{\bx} = \nabla \nabla f(\bx)$, and $\bA_{\bx} = ((2/\delta)
\bI  + \bC^{-1} - \bH_{\bx})^{-1}$. It is straightforward 
to show that this defines a Gaussian
autoregression that is reversible with respect to $\pi(\bx)$ when $f$
is a quadratic function, since the second-order expansion is now exact.  We do
not pursue further the construction of second-order schemes 
since they are
computationally prohibitive in high dimensions due to the need for
matrix decompositions at each iteration.  For such cases, approximations to the Hessian 
would be needed as those discussed in \cite{ZhangS11}.

\section*{Experiments}

In all the experiments the ESS has been estimated as proposed in \cite{geyer} and implemented in the code provided by the authors of \cite{Girolami11}. 

\subsection*{Additional experiments with Gaussian process regression}

The three regression datasets along with graphical summaries of MCMC
output are plotted in Figure \ref{fig:RegressInformLikel}. Also 
in the main article we provide ESS summaries for the experiment with
$\sigma^2=0.01$. Here we provide for simulations with $\sigma^2=1$ and
$\sigma^2=0.1$, see Table \ref{table:inflik1} and Table
\ref{table:inflik2} respectively.

\begin{figure}
\centering
\begin{tabular}{ccc}
\includegraphics[width=35mm,height=32mm]
{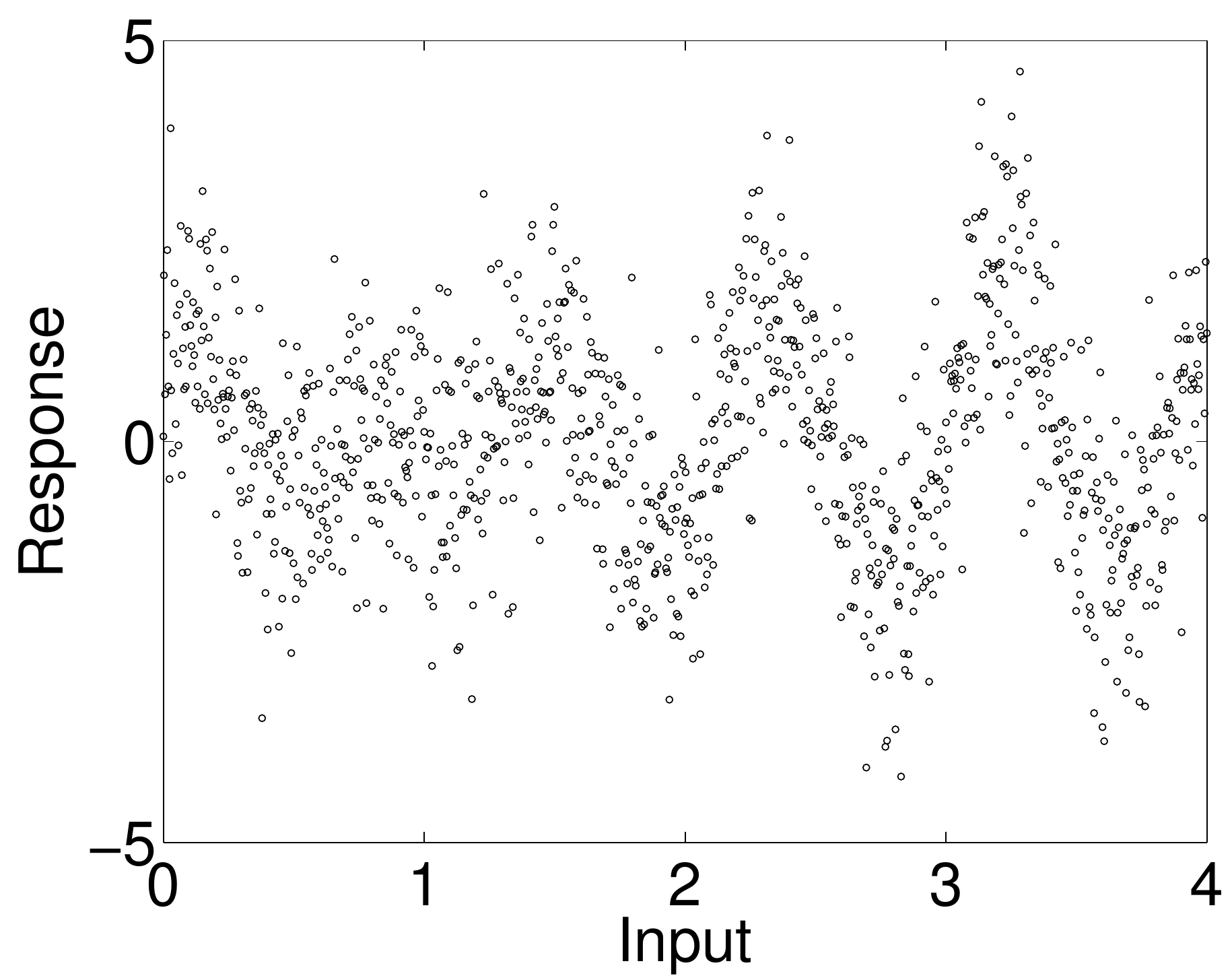} &
\includegraphics[width=35mm,height=32mm]
{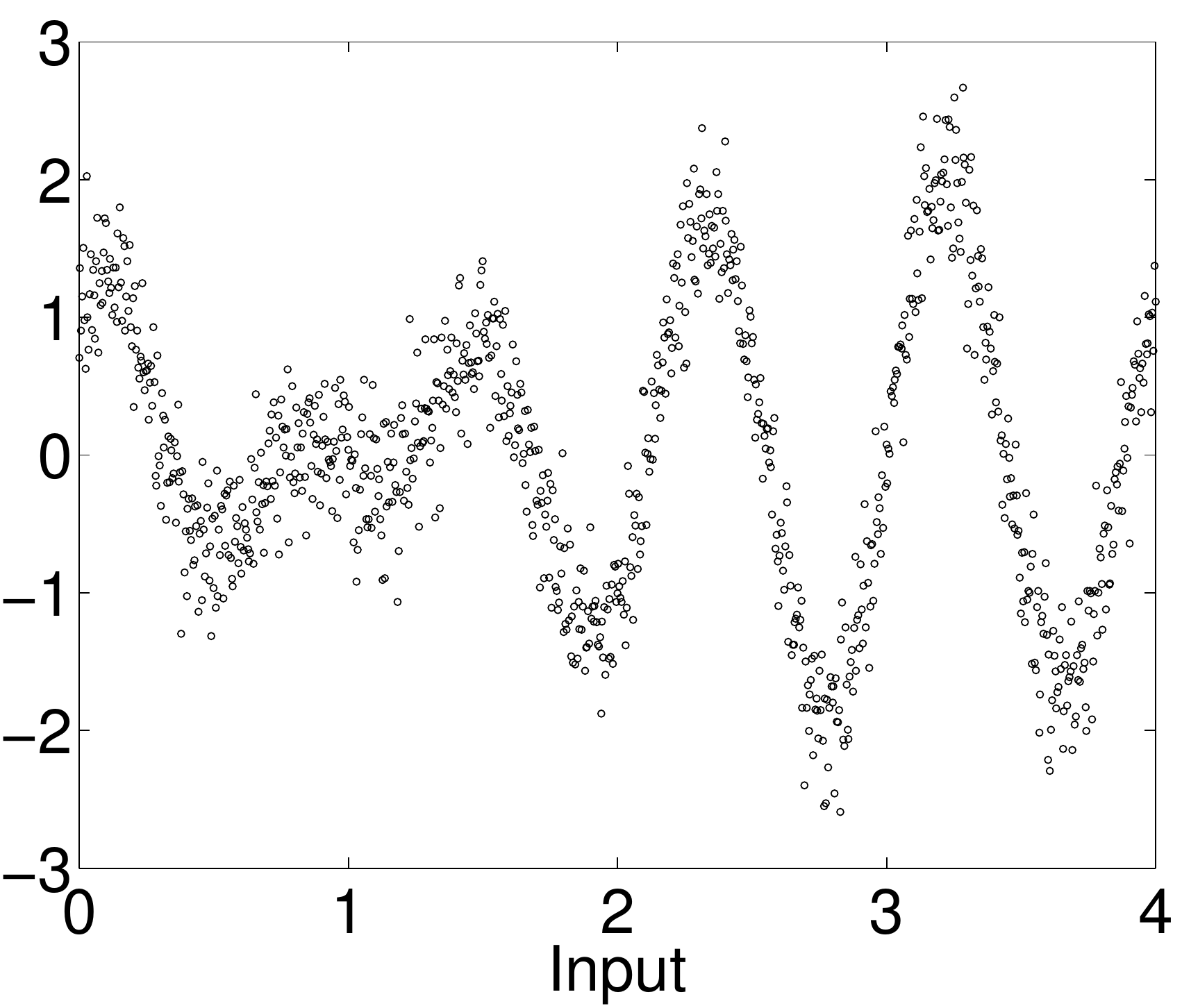} &
\includegraphics[width=35mm,height=32mm]
{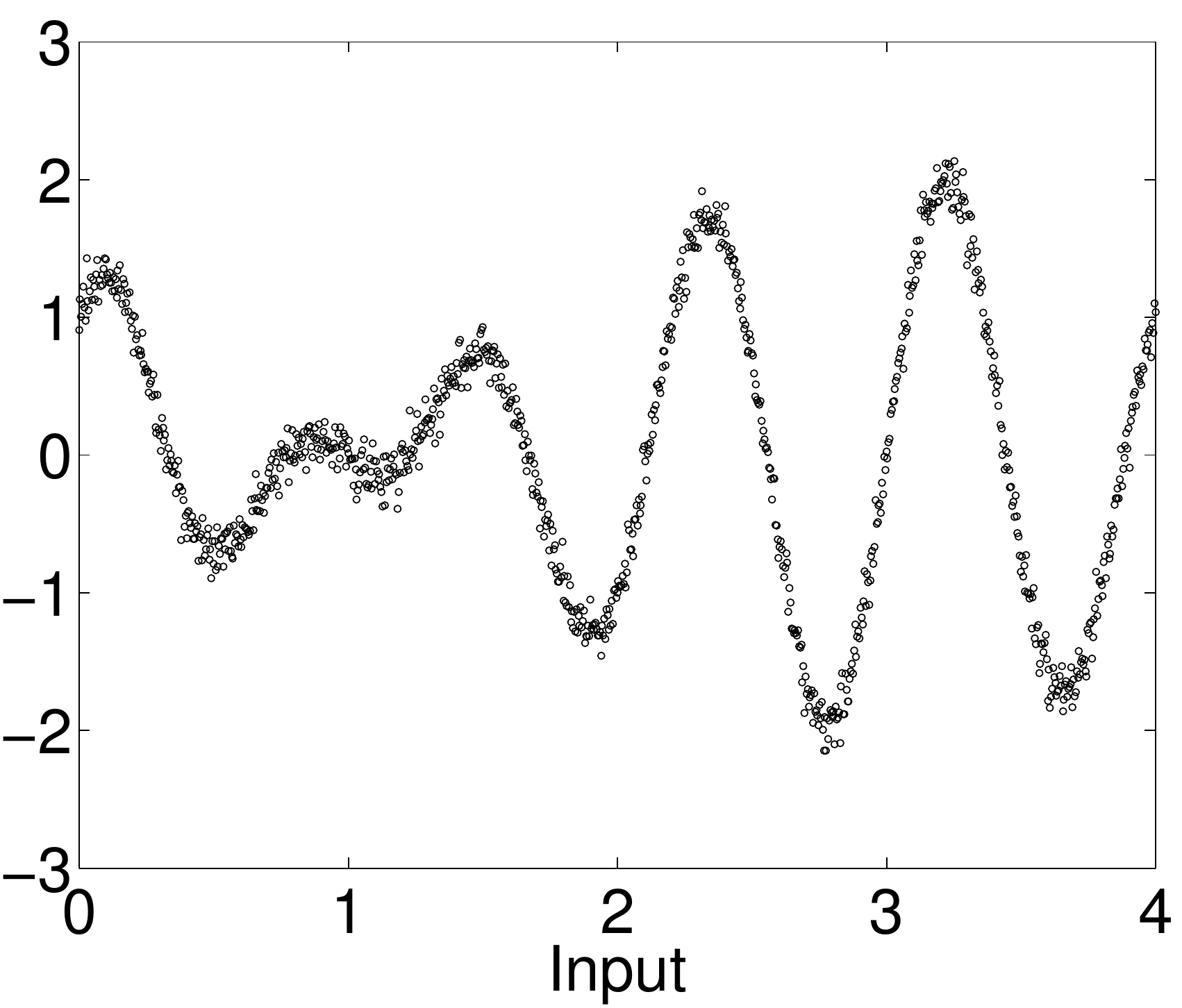} \\
\includegraphics[width=35mm,height=32mm]
{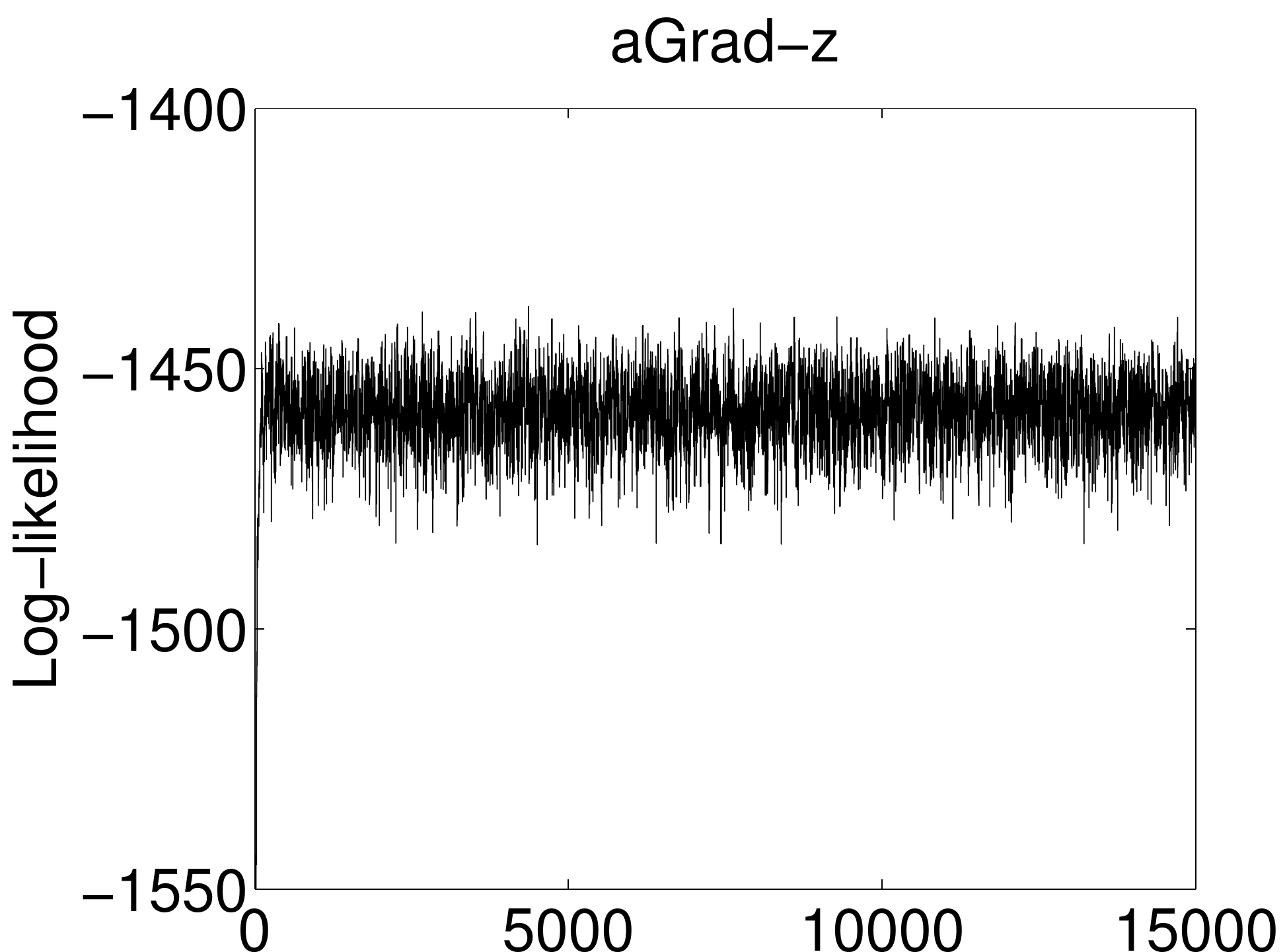} &
\includegraphics[width=35mm,height=32mm]
{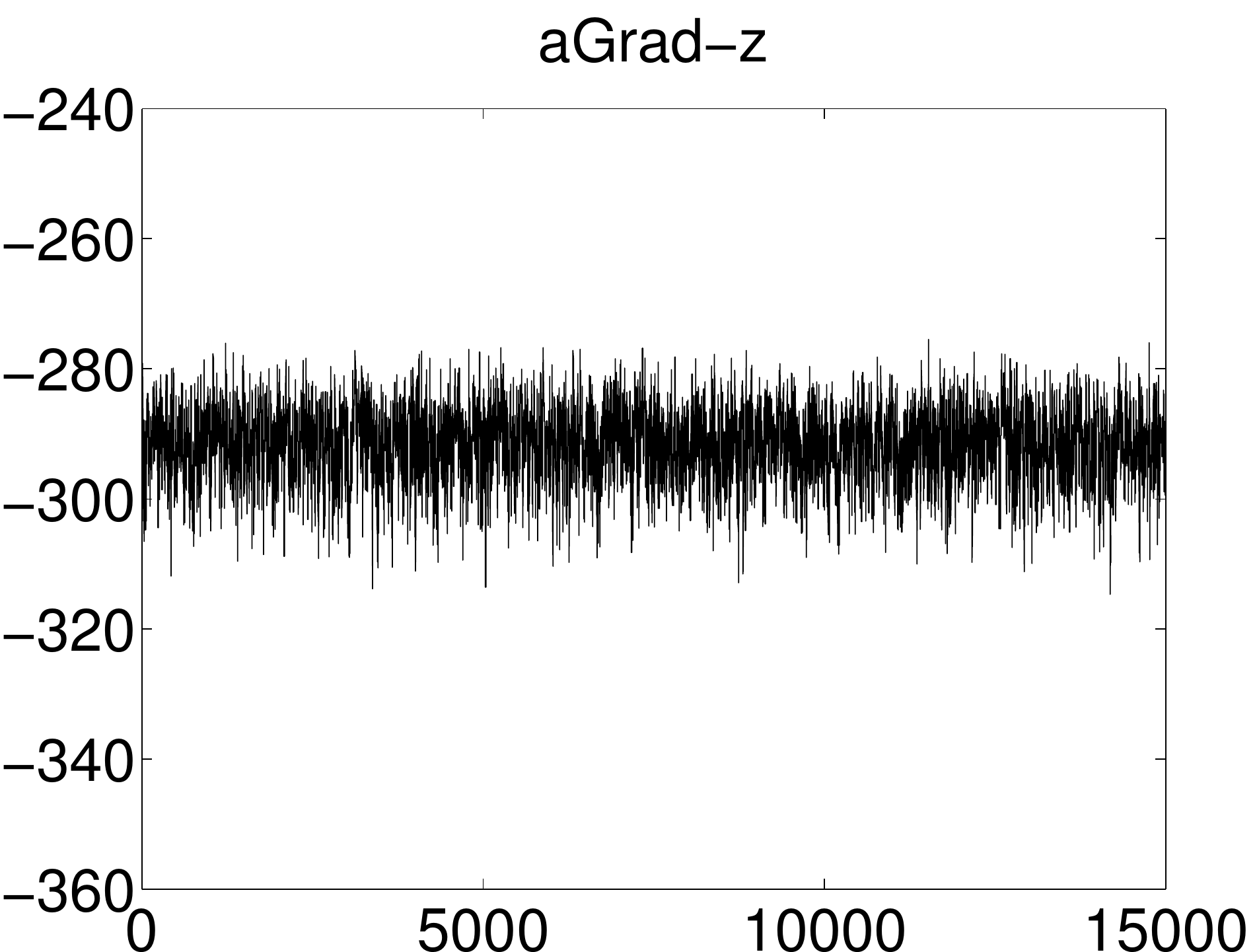} &
\includegraphics[width=35mm,height=32mm]
{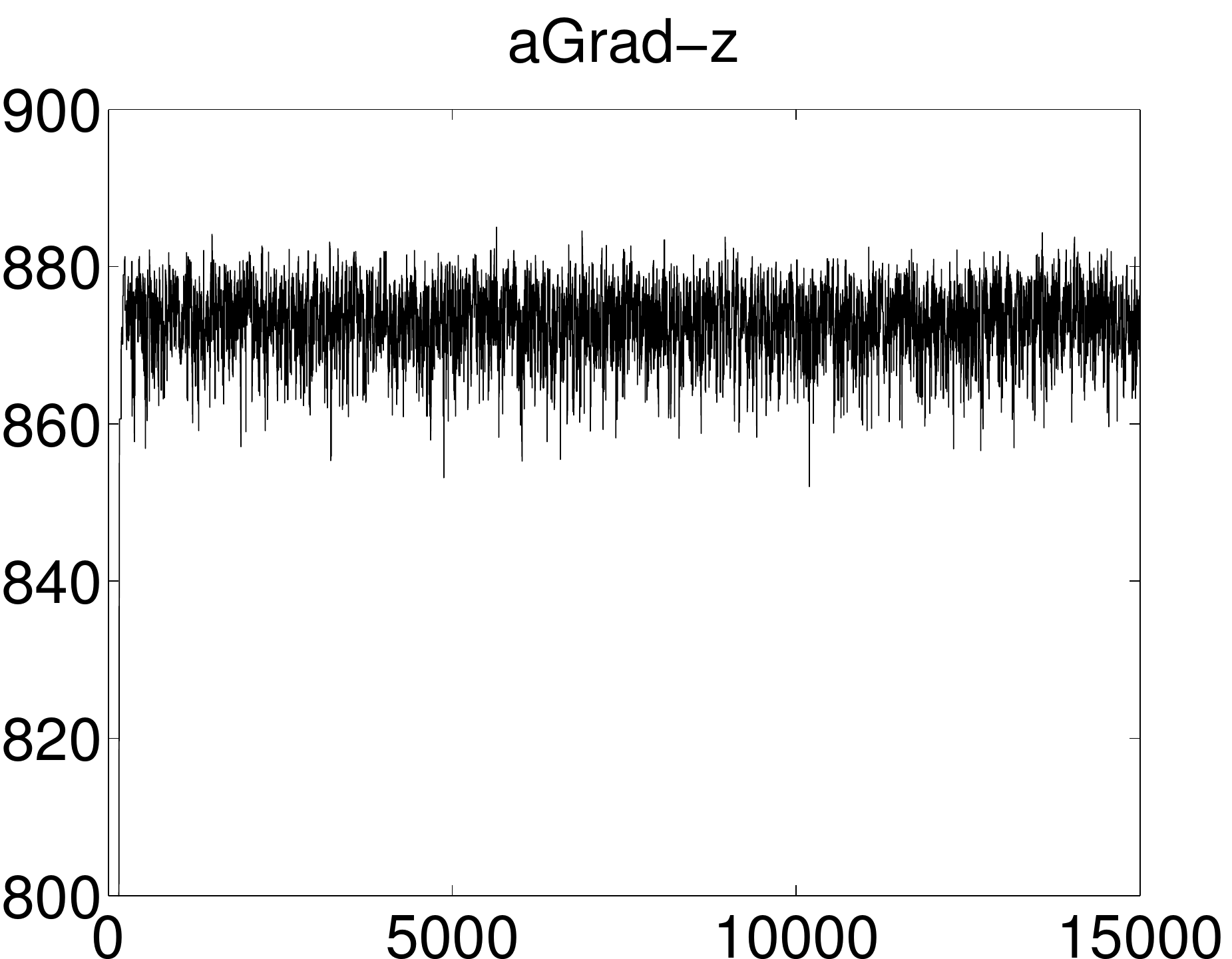} \\
\includegraphics[width=35mm,height=32mm]
{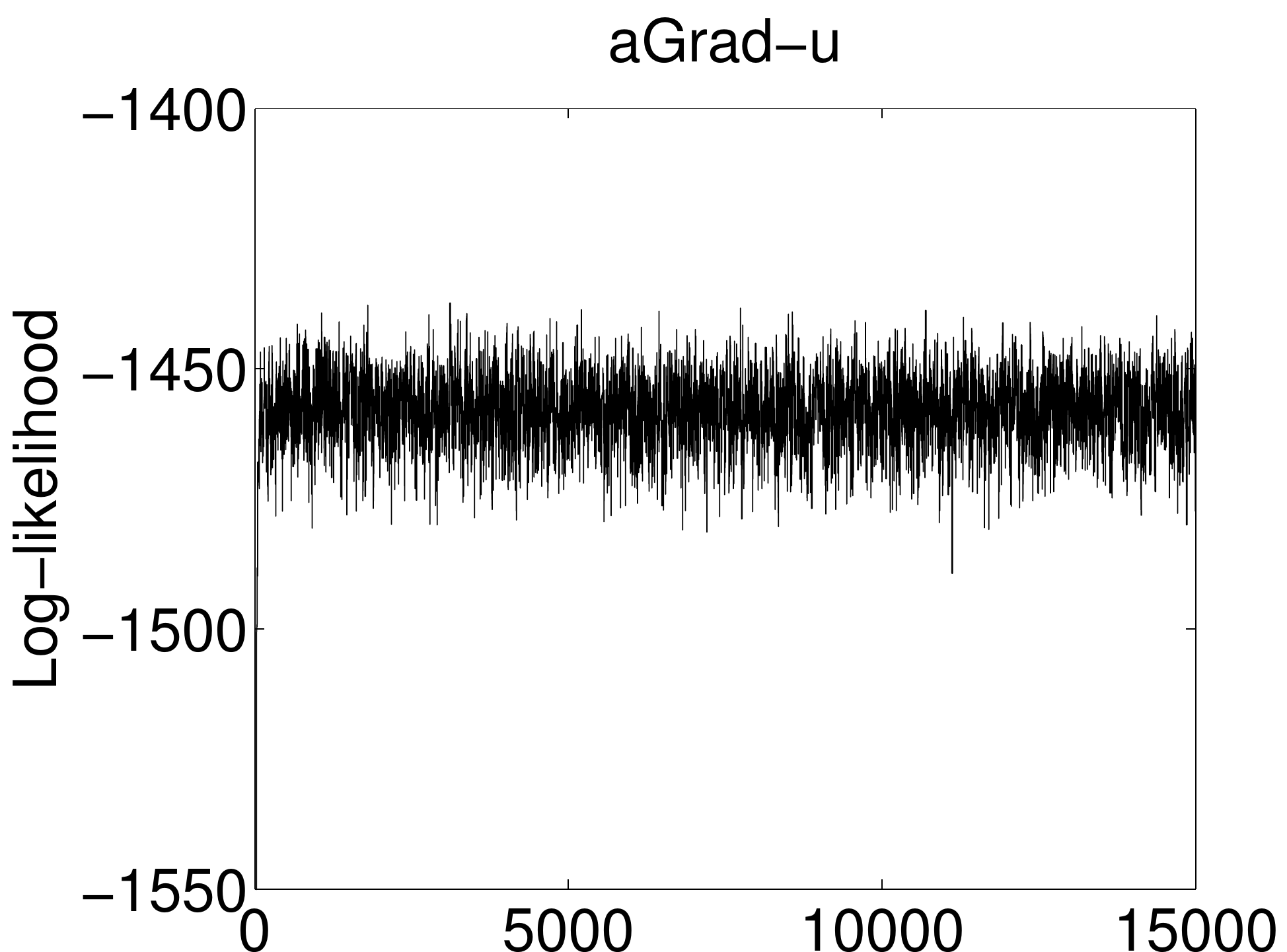} &
\includegraphics[width=35mm,height=32mm]
{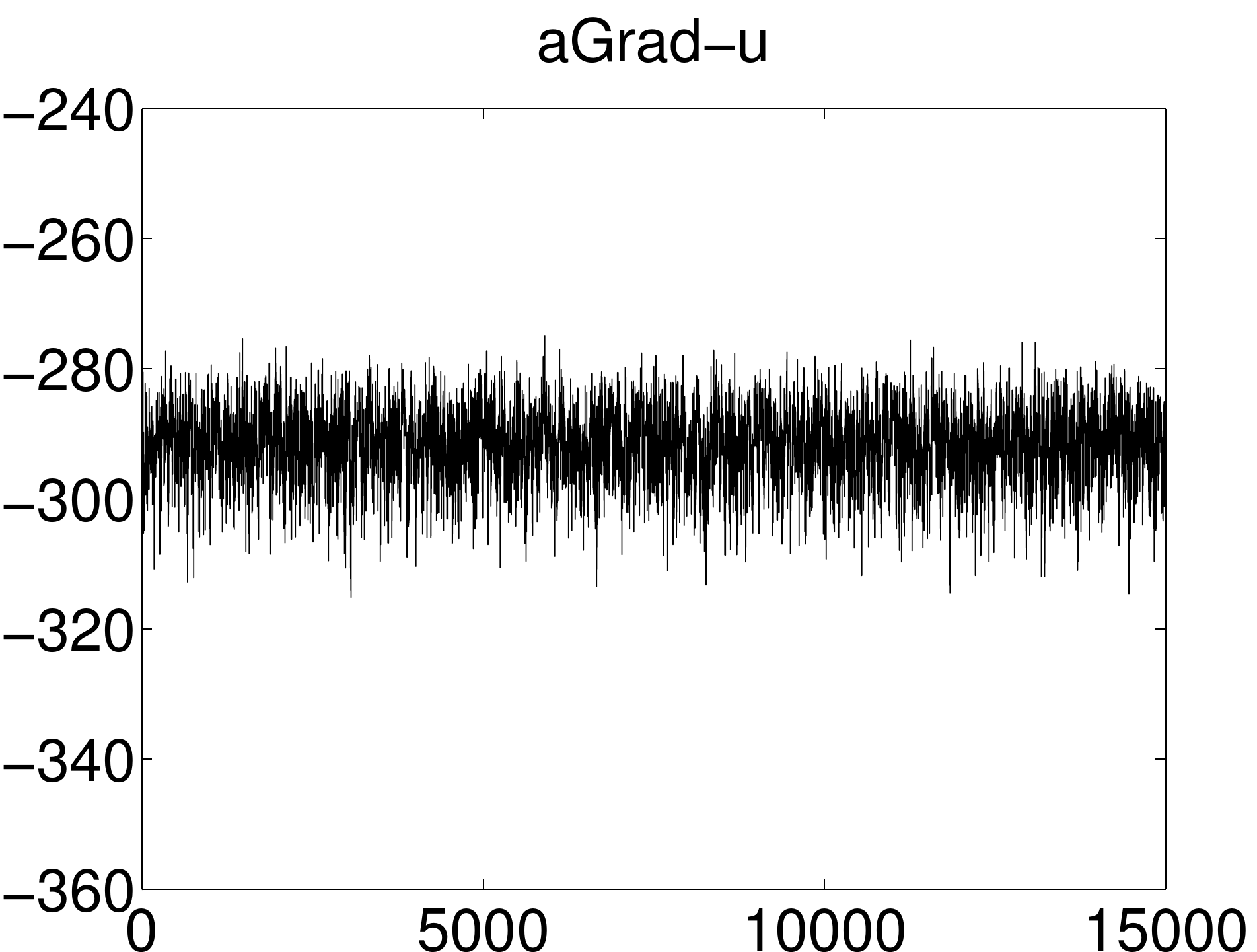} &
\includegraphics[width=35mm,height=32mm]
{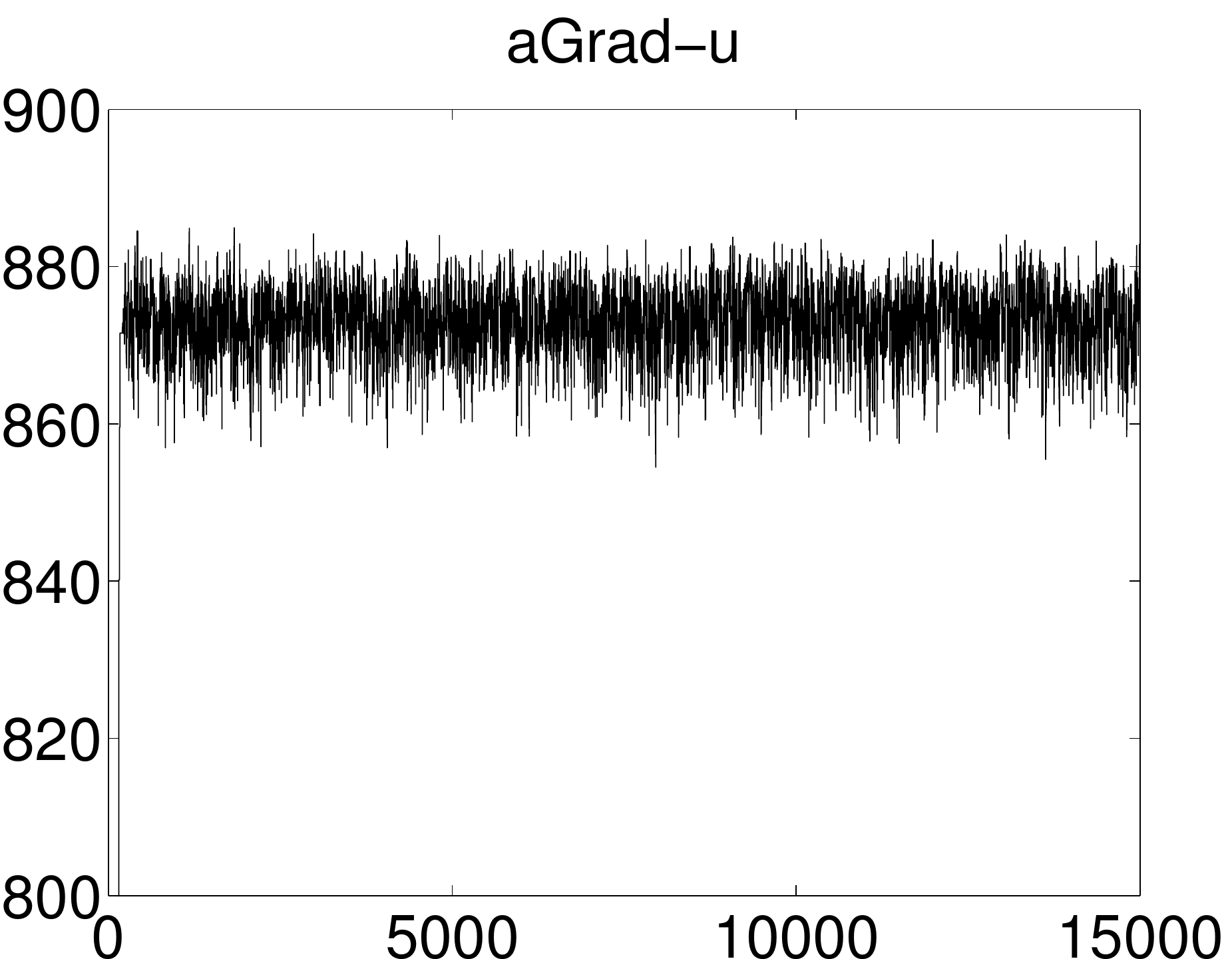} \\
\includegraphics[width=35mm,height=32mm]
{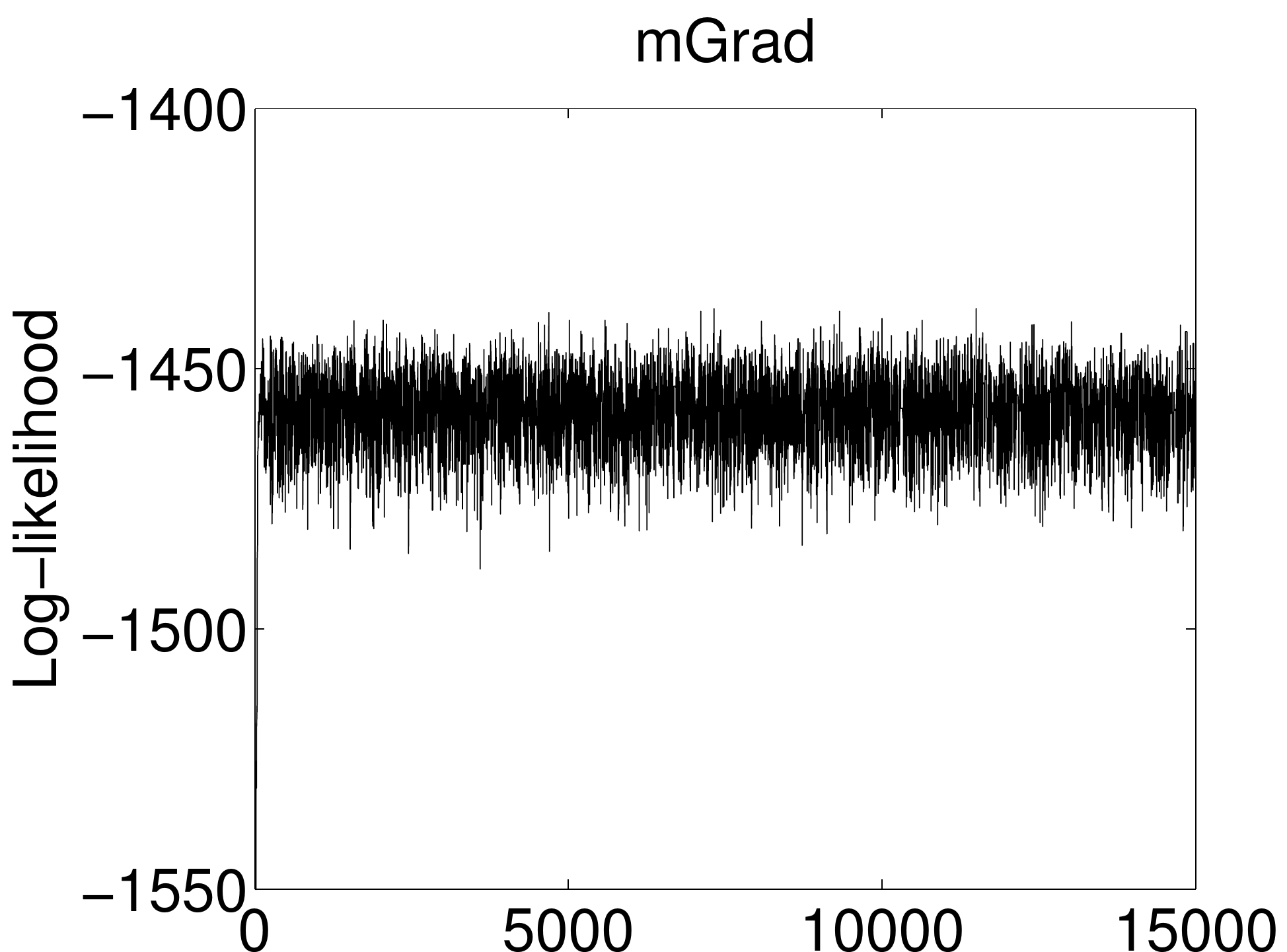} &
\includegraphics[width=35mm,height=32mm]
{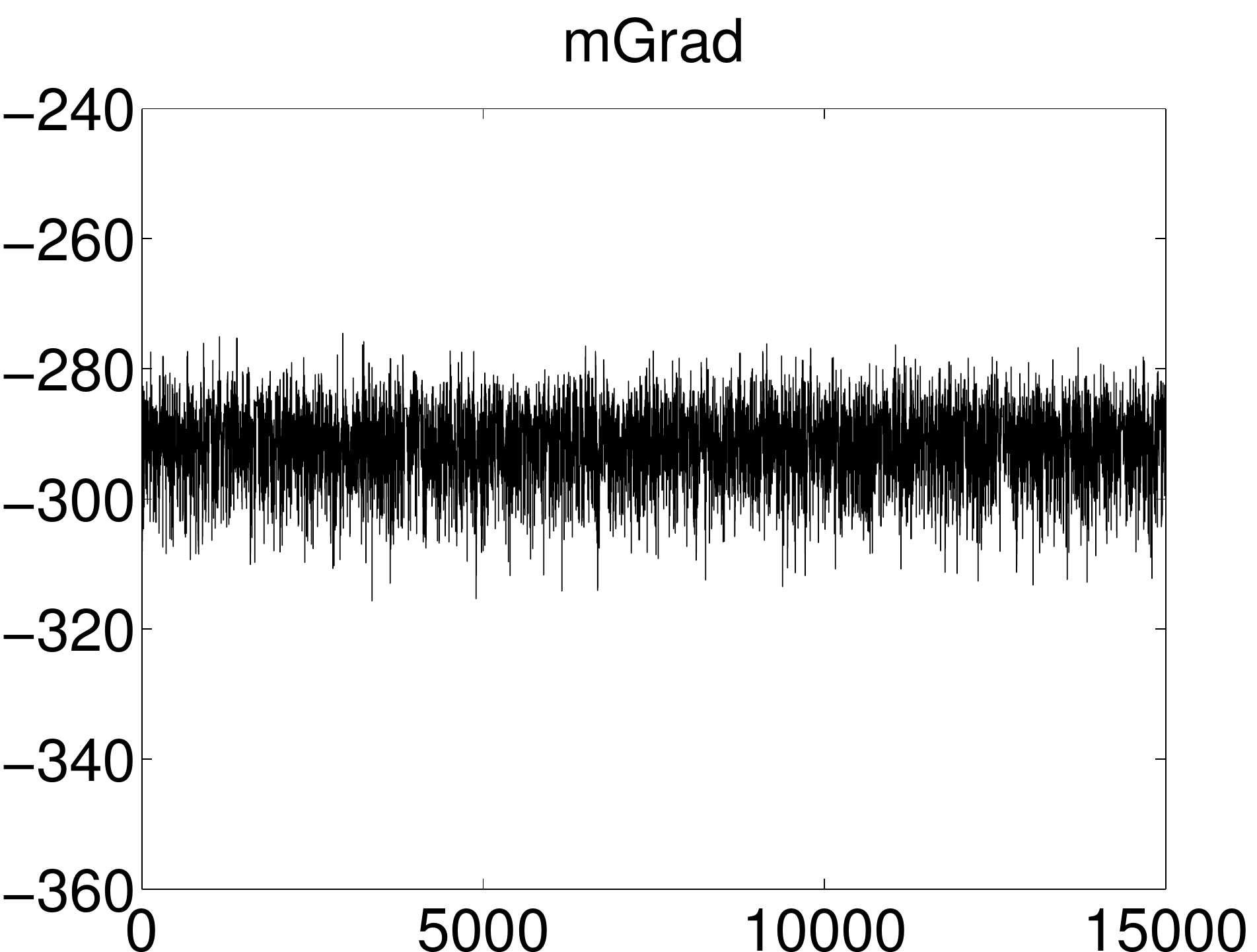} &
\includegraphics[width=35mm,height=32mm]
{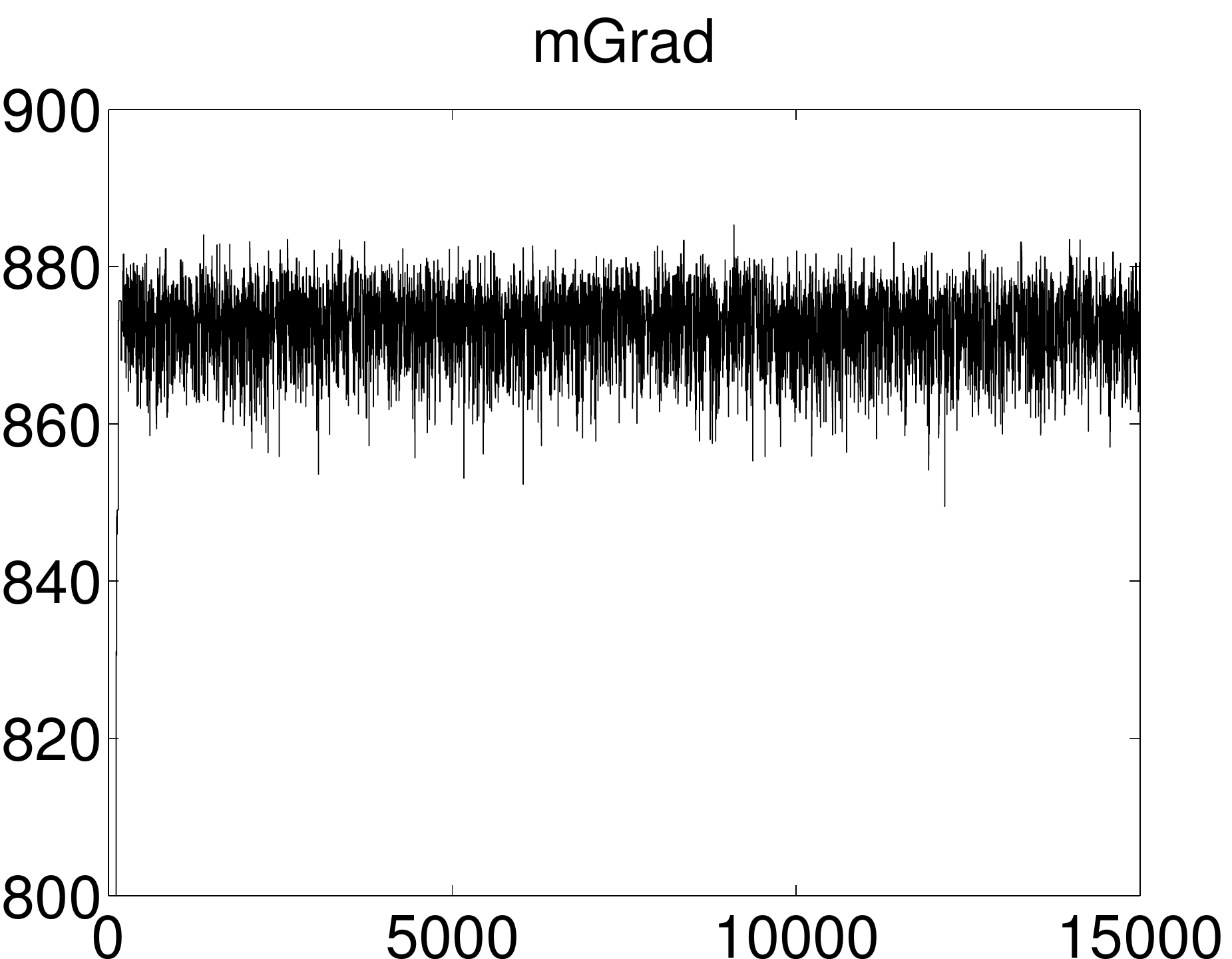} \\
\includegraphics[width=35mm,height=32mm]
{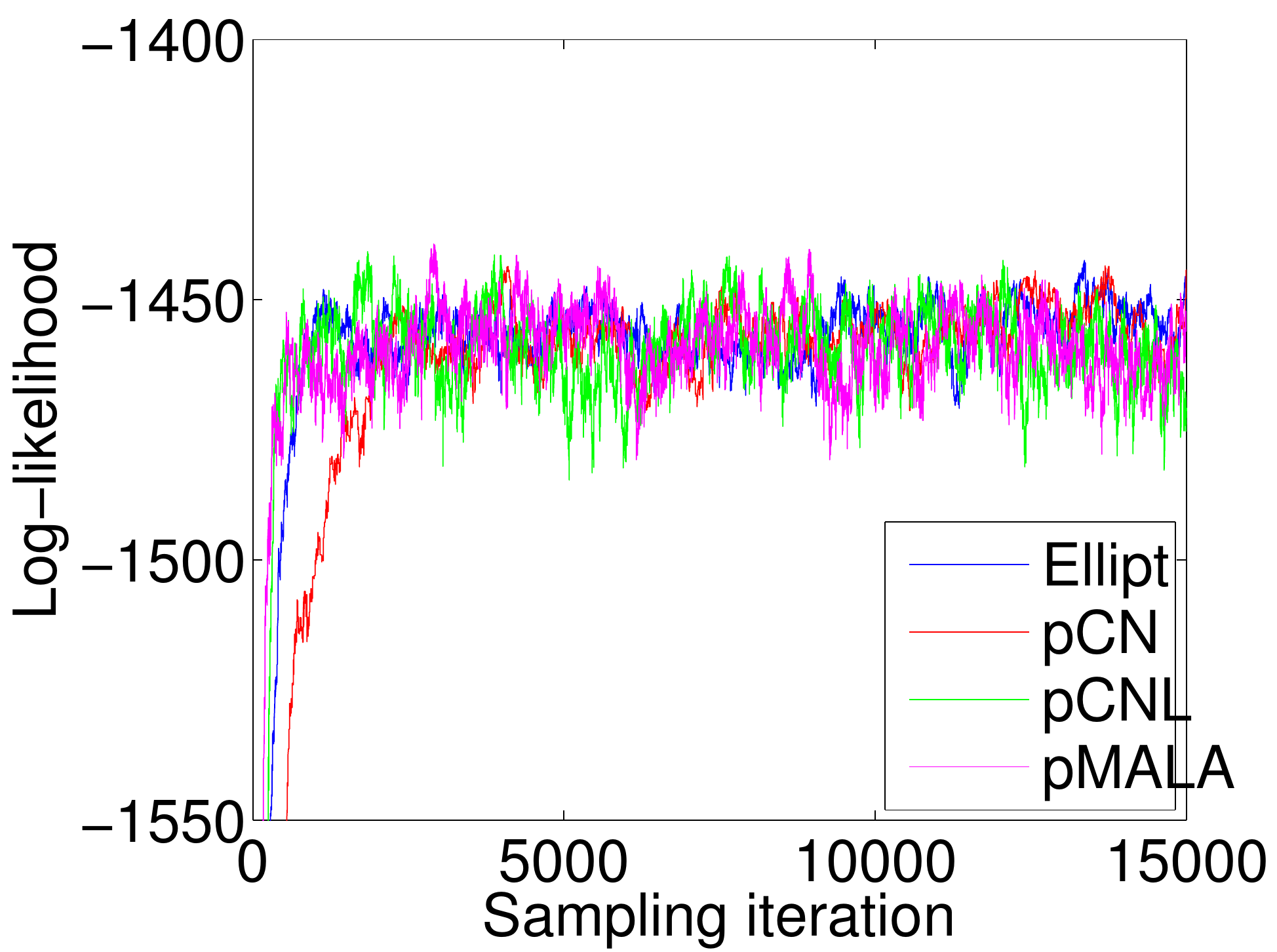} &
\includegraphics[width=35mm,height=32mm]
{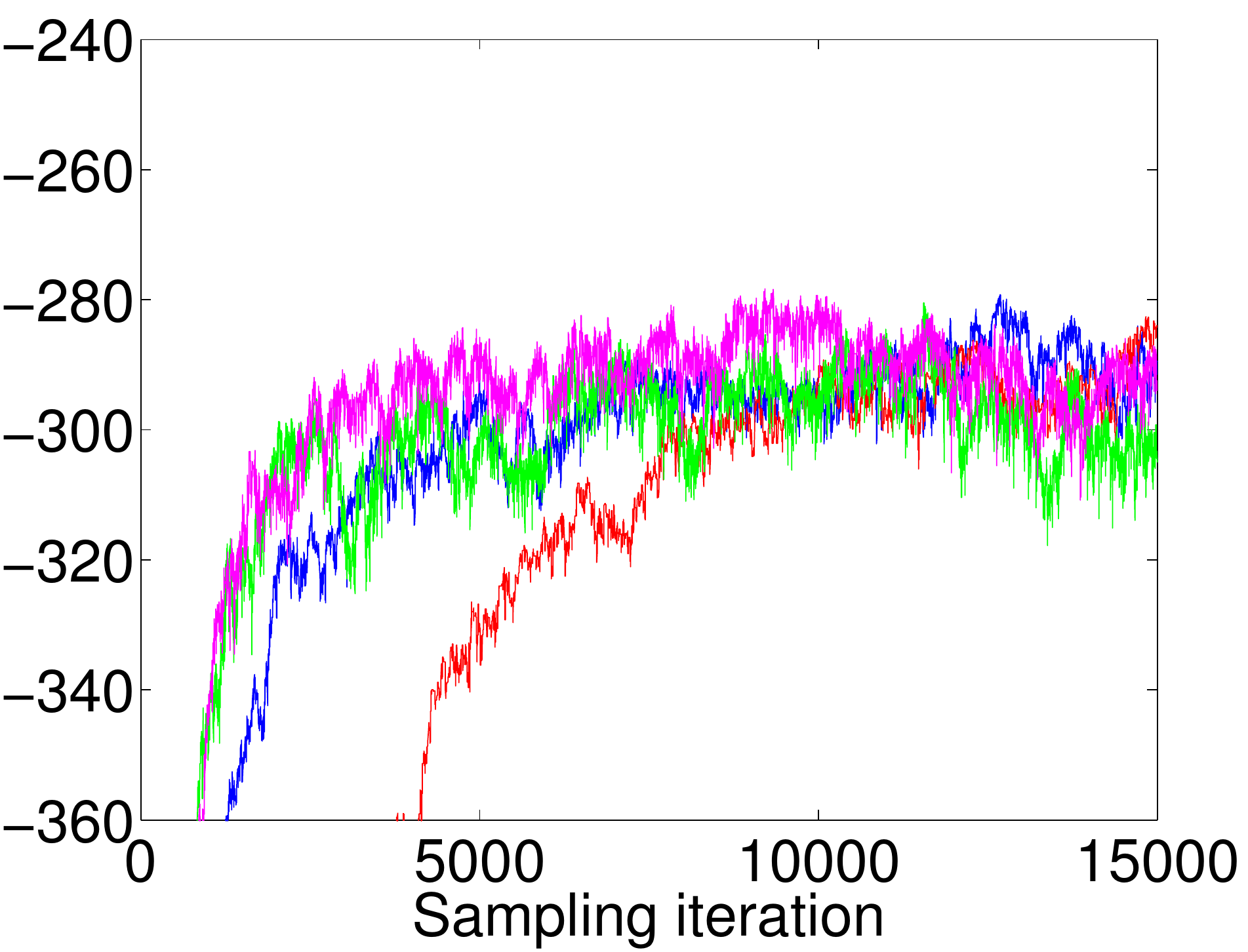} &
\includegraphics[width=35mm,height=32mm]
{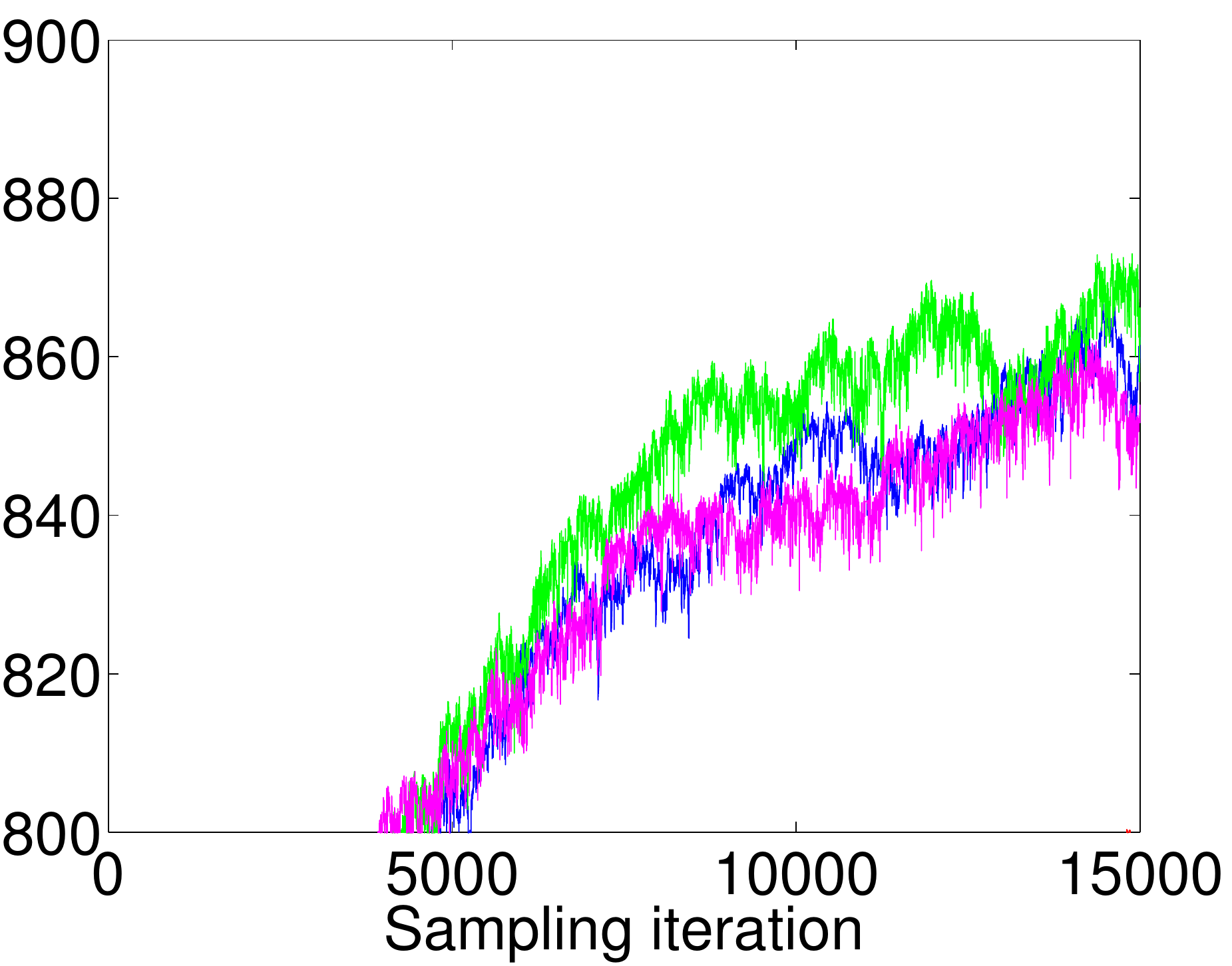} 
\end{tabular}
\caption{First row shows the three datasets of varying observation
  noise. The next rows 
show the
evolution of $f(\bx)$  across iterations for all algorithms. Notice that in the plot at  last row and column the  
log-likelihood values are shown for the first 15000 iterations, while the algorithms 
were actually run longer to reach convergence (see main text).} 
\label{fig:RegressInformLikel}
\end{figure}

\begin{table}
\caption{\label{table:inflik1}Comparison of sampling methods in the  regression dataset with $\sigma^2 =1$.}
\centering
\fbox{%
\begin{tabular}{*{5}{c}}
\em Method &\em Time(s) &\em Step  $\delta$  &\em ESS (Min, Med, Max)  &\em Min ESS/s (s.d.) \\ 
\hline
aGrad-z  &   5.5  &  0.589  &  (391.7, 517.8, 629.0)  &  71.05 (6.41)\\ 
aGrad-u  &  6.8  &  0.673  &  (456.0, 575.2, 696.9)  &  66.66 (6.28) \\ 
mGrad  &  6.0  &  1.337  &  (987.4, 1287.5, 1498.4)  &  167.68 (26.83)\\ 
pMALA  &  21.1  &  0.008  &  (21.0, 99.1, 274.3)  &  1.00 (0.28)\\ 
Ellipt  &  4.1  &   &  (16.7, 66.6, 149.7)  &  4.03 (1.35)\\ 
pCN  &  2.7  &  0.006  &  (14.3, 52.3, 130.1)  &  5.21 (2.40)\\ 
pCNL  &  10.5  &  0.010  &  (31.9, 125.5, 294.5)  &  3.03 (0.94)\\ 
\end{tabular}}
\end{table}

\begin{table}
\caption{\label{table:inflik2}Comparison of sampling methods in the  regression dataset with $\sigma^2 =0.1$.}
\centering
\fbox{%
\begin{tabular}{*{5}{c}}
\em Method &\em Time(s) &\em Step $\delta$  &\em ESS (Min, Med, Max)  &\em Min ESS/s (s.d.) \\ 
\hline
aGrad-z  &   5.5  &  0.053  &  (326.4, 459.9, 575.4)  &  59.05 (7.92)\\ 
aGrad-u  &  6.8  &  0.060  &  (360.4, 500.8, 620.1)  &  52.78 (6.57) \\ 
mGrad  &  5.8  &  0.121  &  (973.6, 1196.1, 1408.1)  &  168.11 (10.69)\\ 
pMALA  &  21.2  &  0.001  &  (12.7, 58.1, 164.4)  &  0.60 (0.26)\\ 
Ellipt  &  4.3  &   &  (9.9, 42.4, 138.6)  &  2.33 (0.81)\\ 
pCN  &  2.7  &  0.001  &  (8.7, 37.5, 108.5)  &  3.19 (1.25)\\ 
pCNL  &  10.4  &  0.001  &  (12.7, 57.4, 208.5)  &  1.21 (0.59)\\ 
\end{tabular}}
\end{table}

\begin{figure}
\centering
{\includegraphics[scale=0.6]
{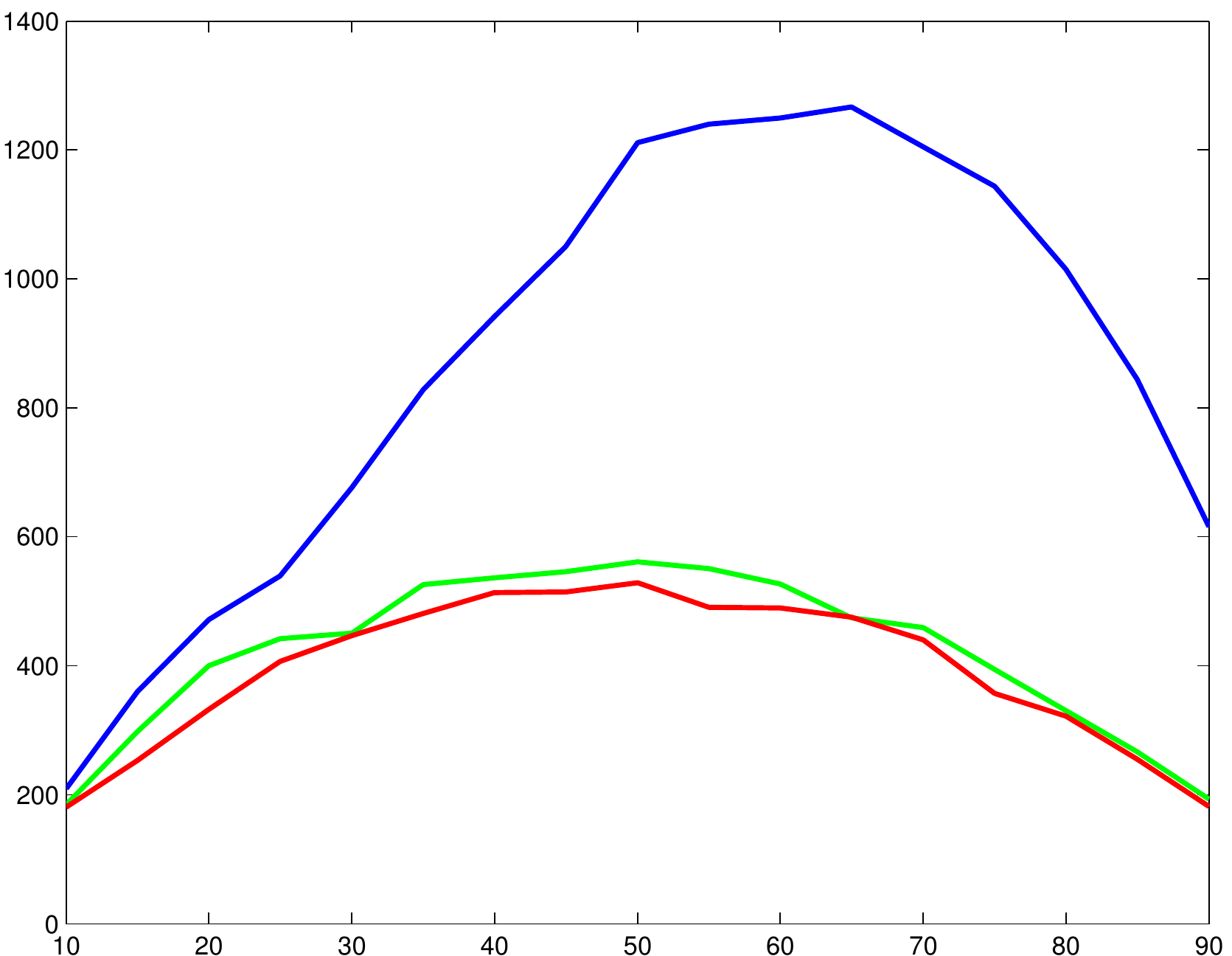}}
\caption{\rev{Effective sample size (ESS) shown in the vertical axis  versus acceptance rate shown in the horizontal axis  for the three proposed algorithms (aGrad-z is shown in red, aGrad-u with green and mGrad with blue). 
To produce this plot we have repeatedly  tuned all algorithms to achieve acceptance rates in a grid of values between 10\%  and 90\%.  Then, each time we estimated the ESS 
based on the first component of the latent vector $\bx$.  All lines are averages across  all three regression datasets and from ten simulation repeats associated with ten different random seeds.}  
\label{fig:essVsaccept}}
\end{figure}

\rev{Also,  Figure \ref{fig:essVsaccept}  provides an empirical investigation of the optimal acceptance rate for the three proposed algorithms using these regression datasets. This and similar 
investigations suggest  that tuning the step size $\delta$ so that to achieve an acceptance rate around 50\% to 60\%, as done in all our experiments, is effective.}

\subsection*{Additional experiments for log-Gaussian Cox process}

\begin{figure}
\centering
\begin{tabular}{ccc}
\includegraphics[scale=0.2]
{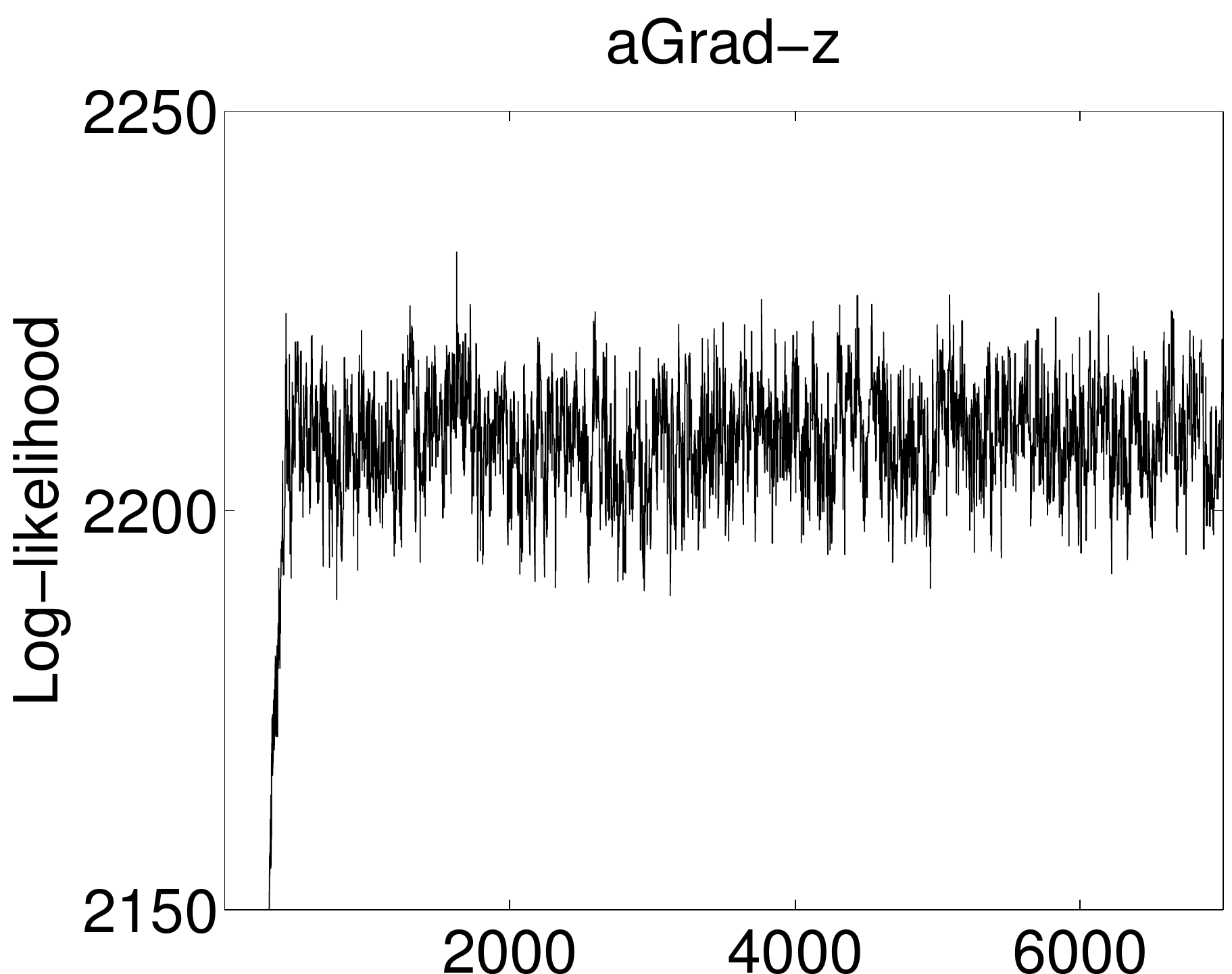} &
\includegraphics[scale=0.2]
{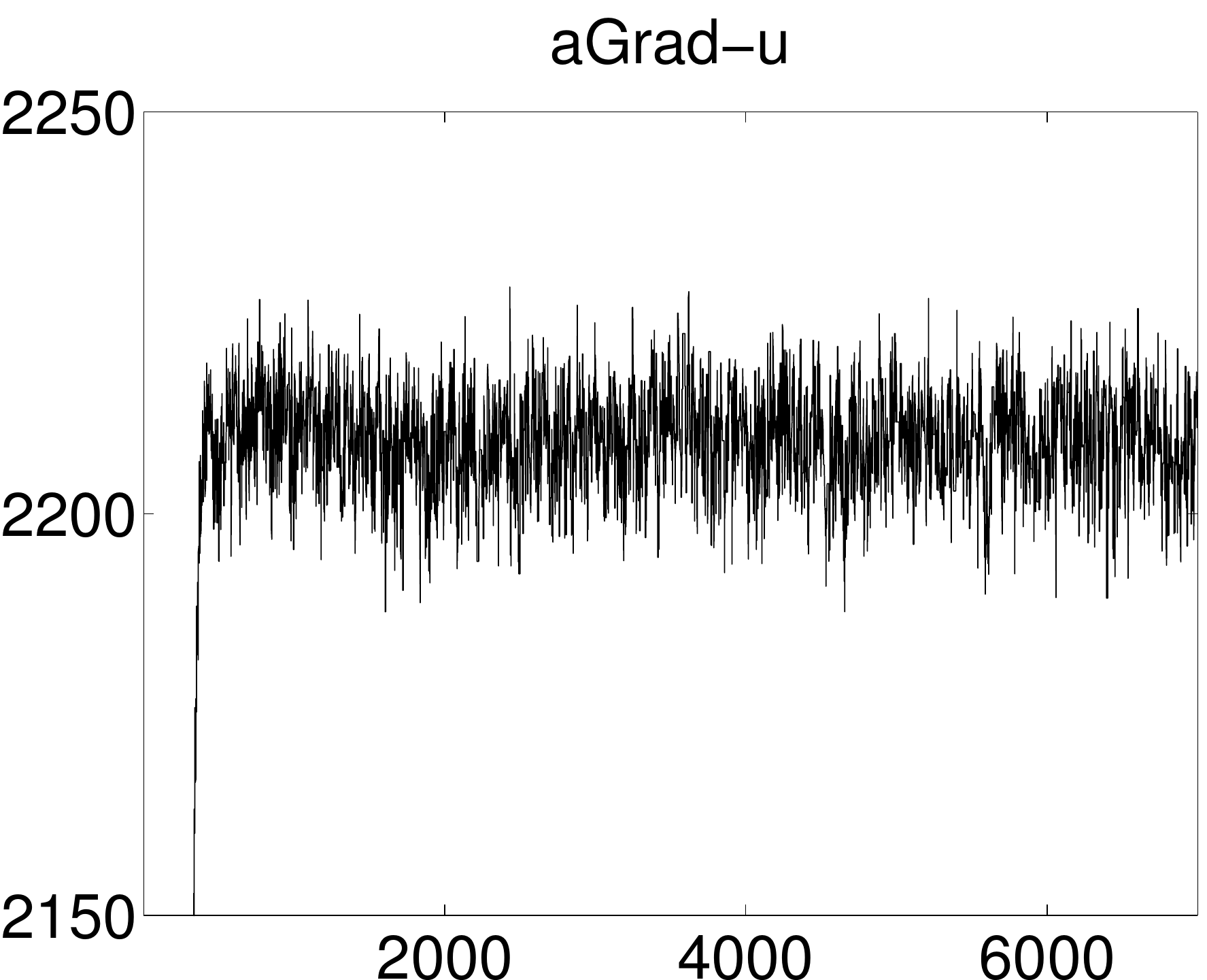} &
\includegraphics[scale=0.2]
{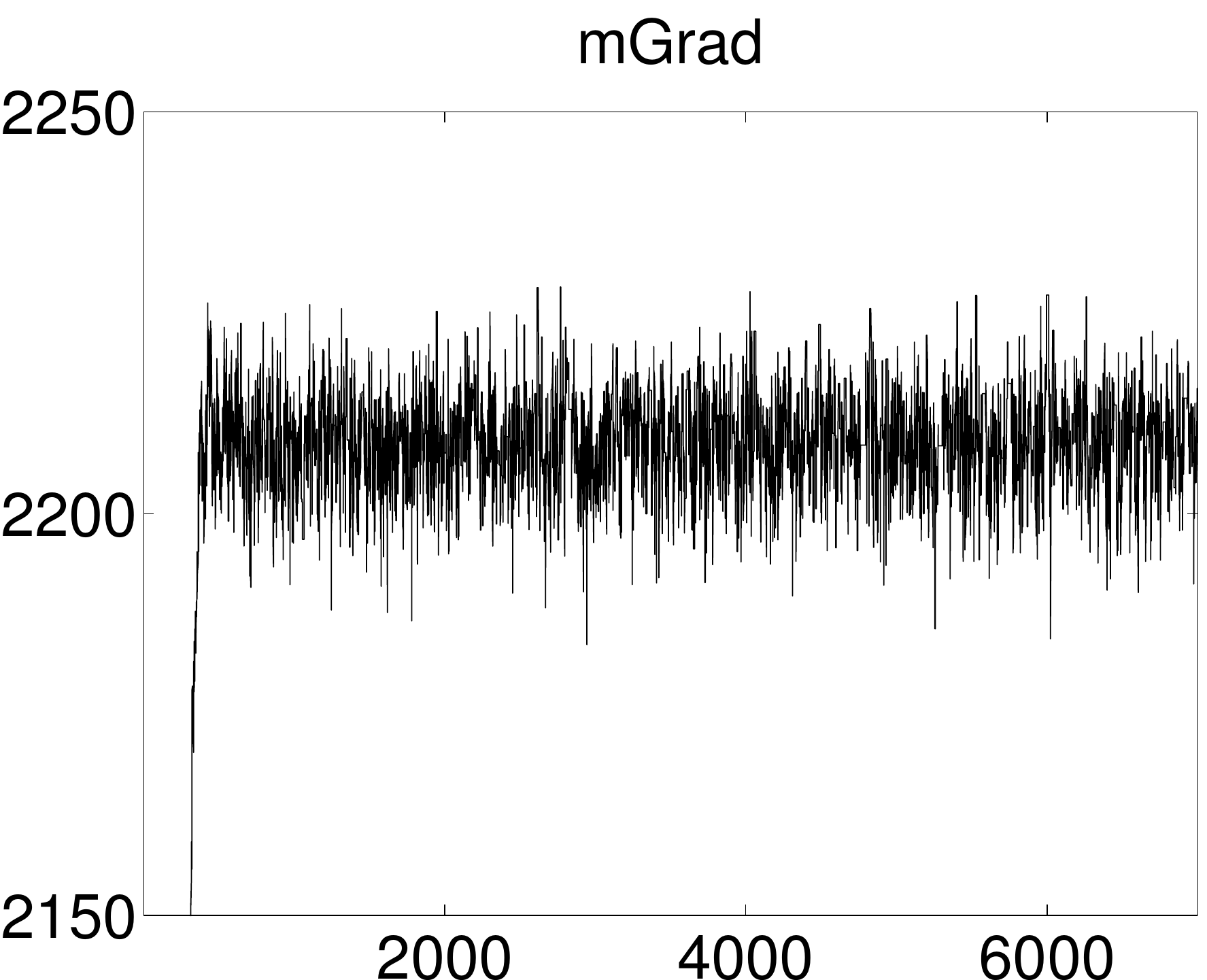} \\
\includegraphics[scale=0.2]
{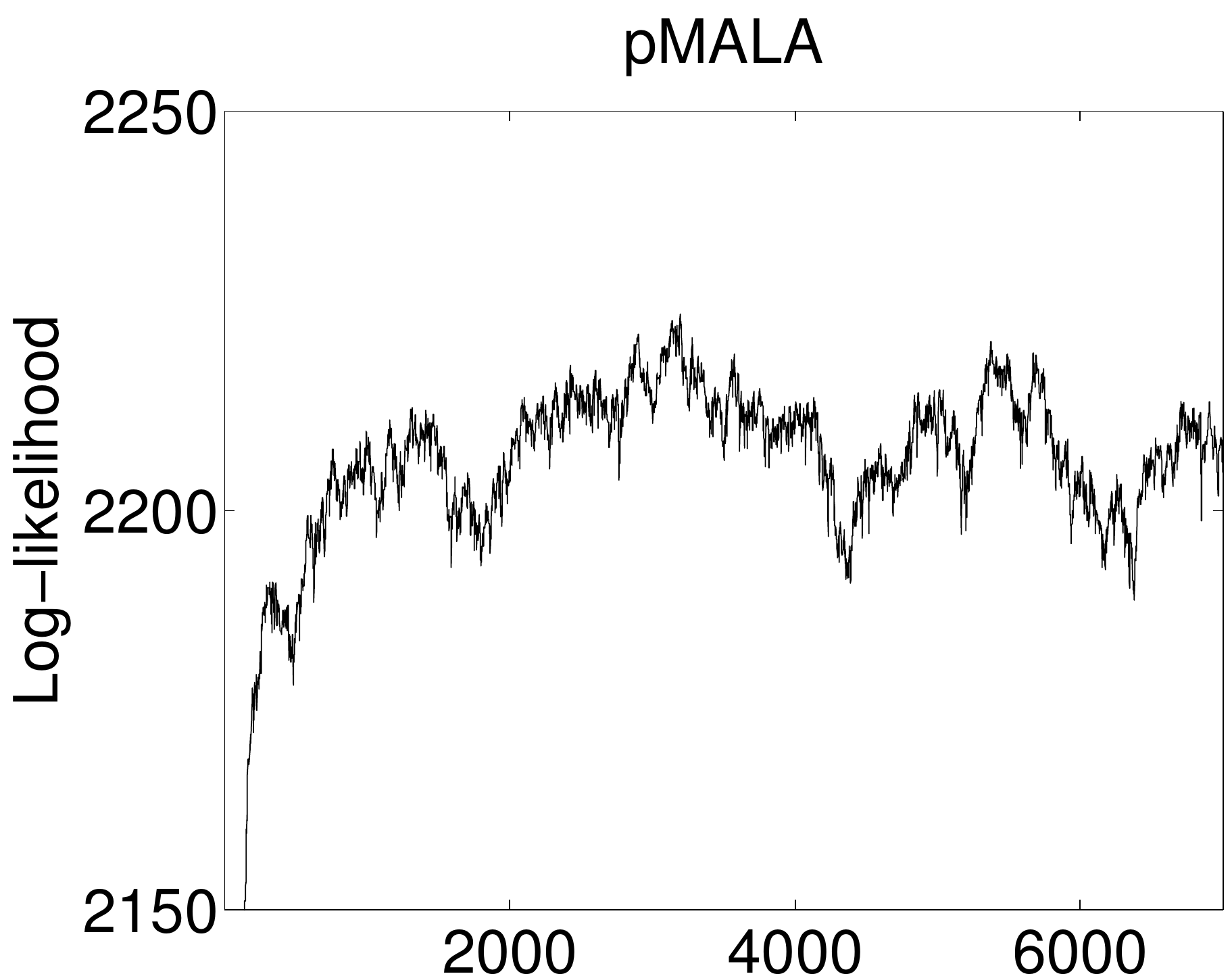} &
\includegraphics[scale=0.2]
{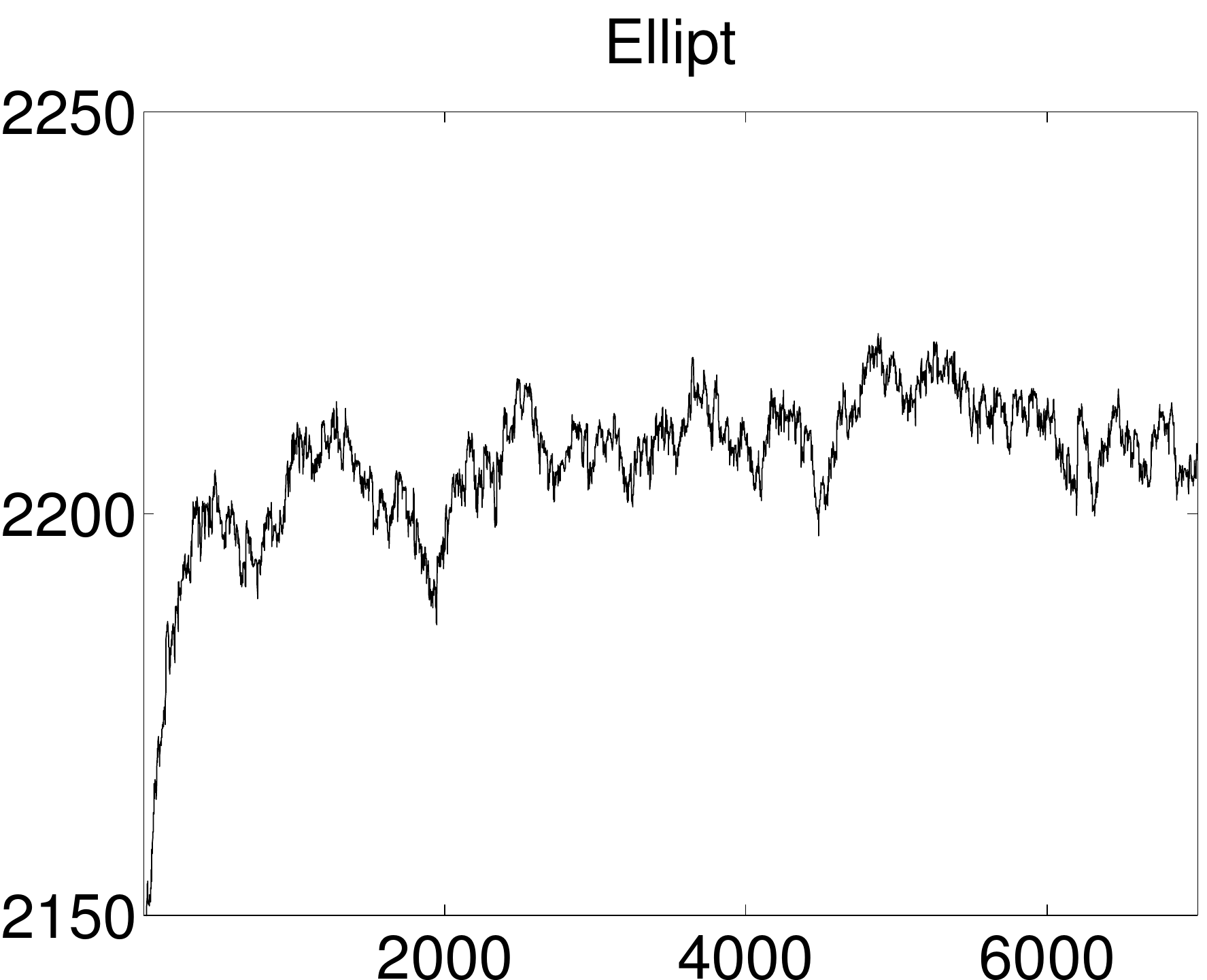} &
\includegraphics[scale=0.2]
{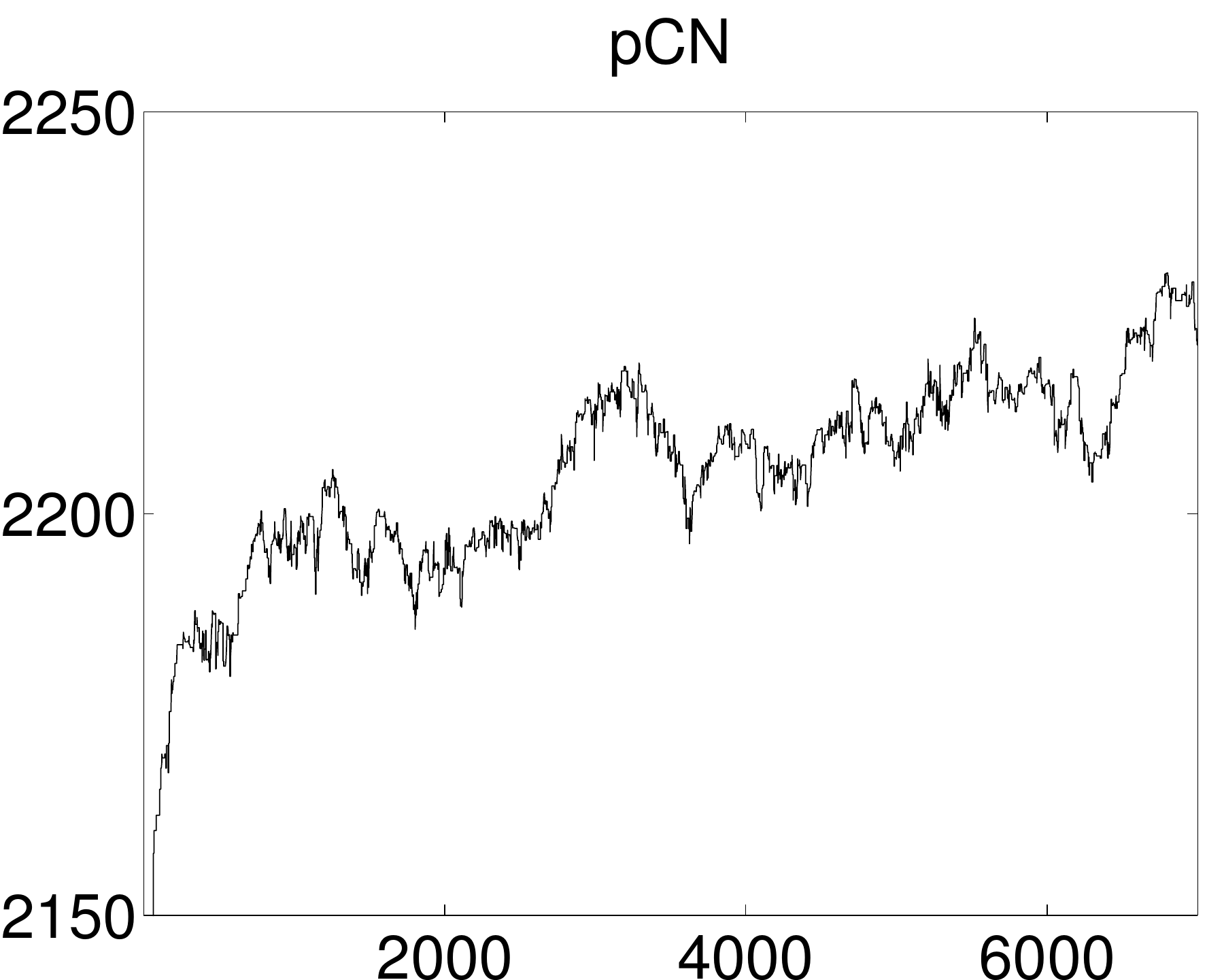} \\
\includegraphics[scale=0.2]
{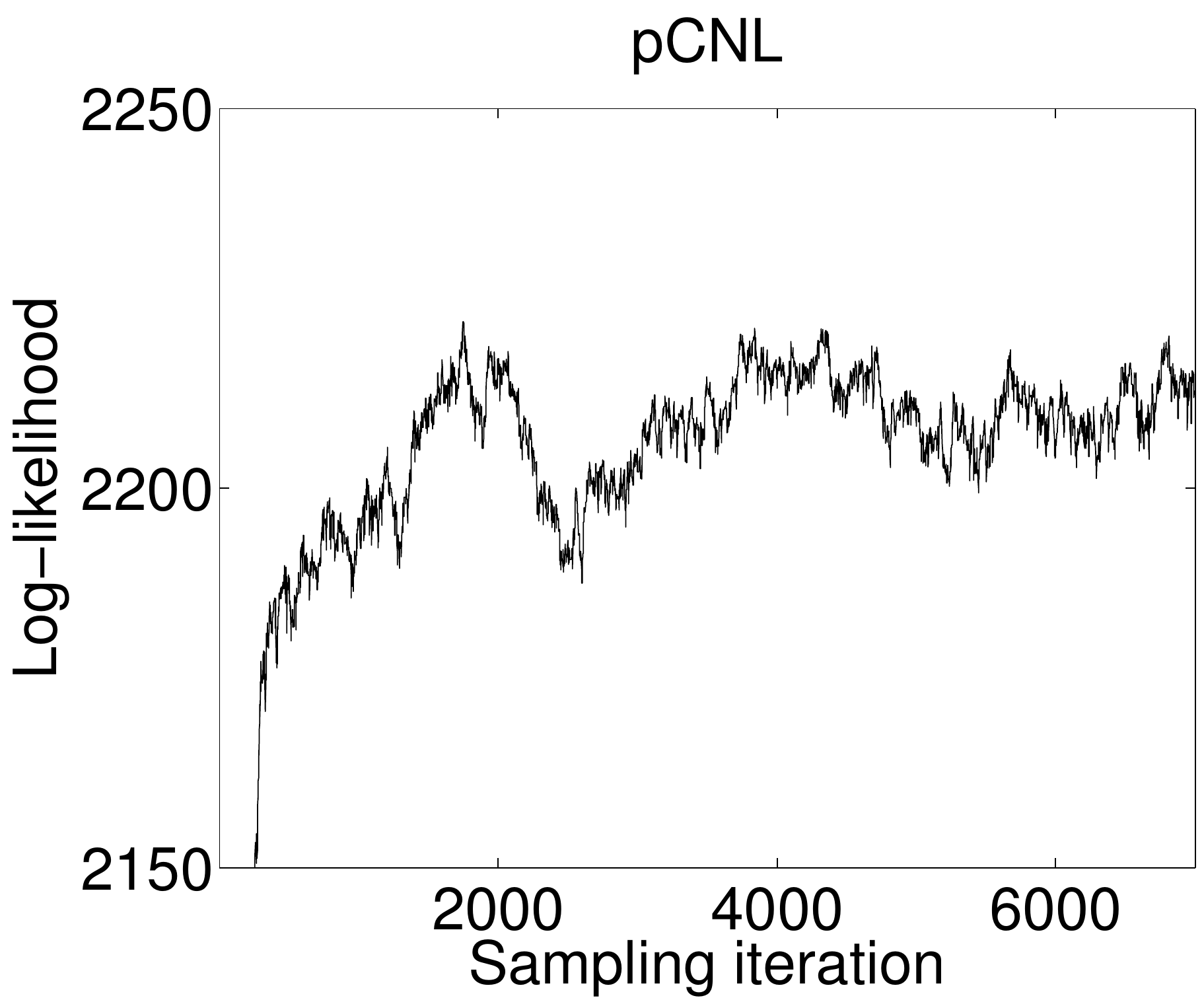} &
\includegraphics[scale=0.2]
{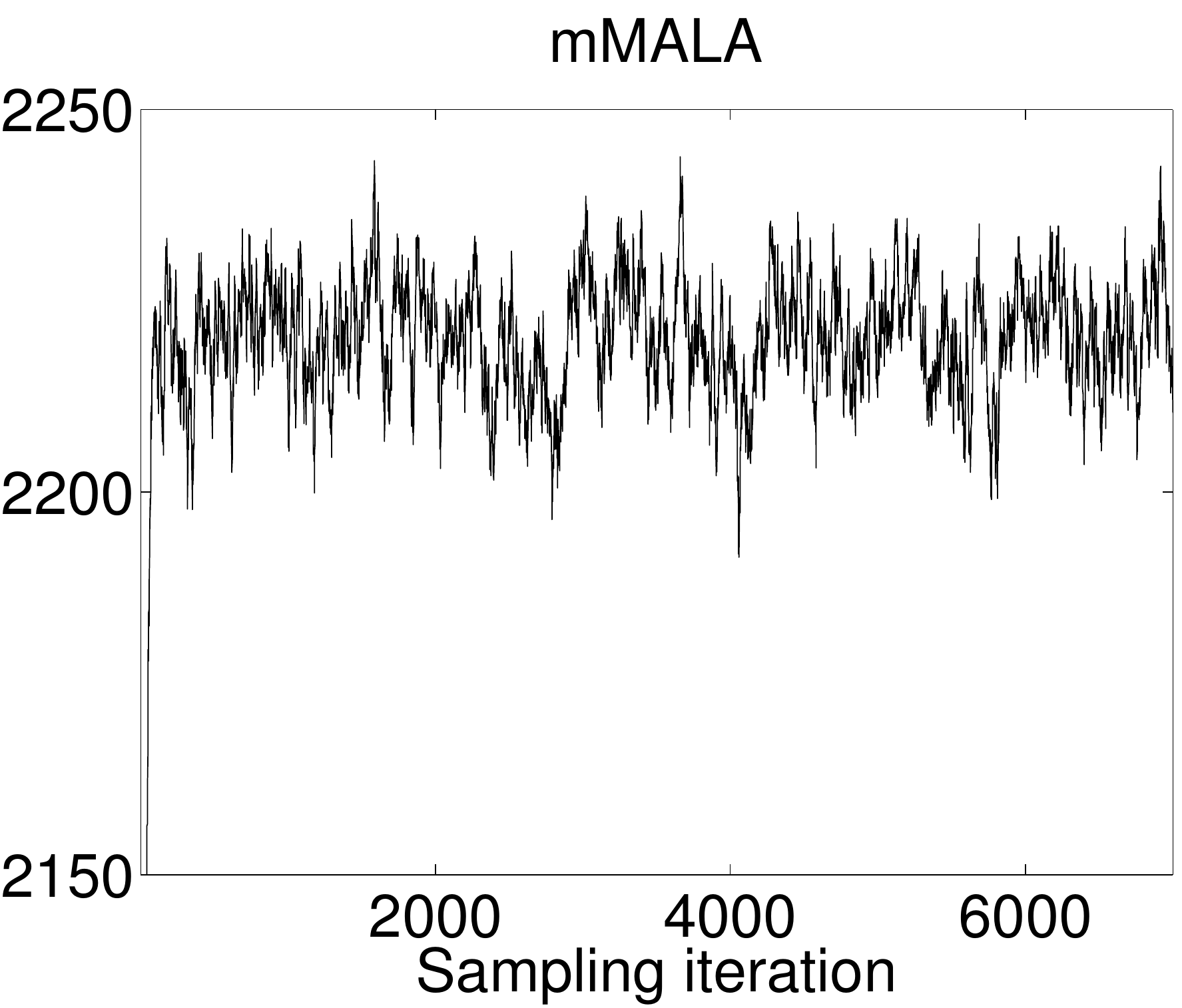} &
\includegraphics[scale=0.2]
{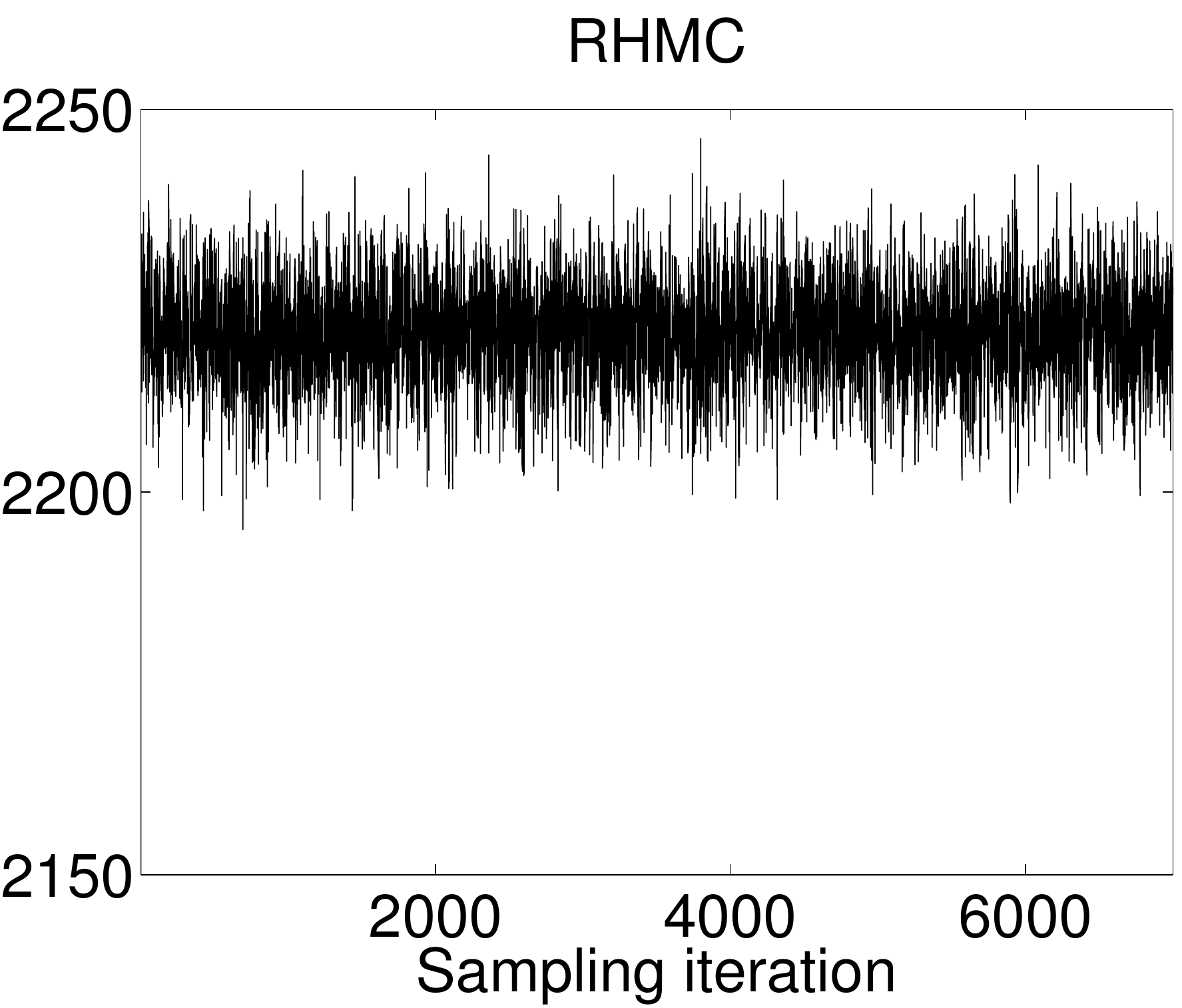} 
\end{tabular}
\caption{The evolution of the log-likelihood values across all iterations 
for all sampling schemes in the log-Gaussian Cox process example.
\label{fig:GaussianCoxLogL}}
\end{figure}

For the original dataset where $\bx$ has dimensionality $n=4096$, i.e.\ the dataset used in the main article, 
Figure \ref{fig:GaussianCoxLogL} plots the evolution of the log-likelihood $f(\bx)$ that illustrates
convergence and sampling efficiency for all compared algorithms. 

Furthermore, in order to investigate how the dimensionality of $\bx$ affects performance, we 
down-sampled the initial dataset by making the mesh grid sparser so that it became 
of size  $32 \times 32$. This was done by merging every four neighbouring
cells into one (so that the area of a new cell becomes $m = 1/1024$) while the observed counts 
assigned to the initial cells are summed up to form the observed count in the larger cell. 
This creates a down-sampled dataset where the latent field $\bx$ has dimensionality $n=1024$, i.e.\
four times smaller than the initial dimensionality.  
Notice that this procedure should keep the likelihood information essentially the same, but 
we should expect now each likelihood term to become (on average) four times more informative about the value of its latent 
value $x_{ij}$. We re-run all sampling methods in this down-sampled dataset and 
Table \ref{table:logGaussianCox2} reports the results.  We can observe that
the ESS scores remain at the same levels as for the finer grid
dataset, while  Min ESS/s gets obviously larger due to the 
smaller running times. A notable feature in Table \ref{table:logGaussianCox2} is that now the 
step sizes $\delta$ found by aGrad-z, aGrad-u and 
mGrad become approximately four times smaller than the ones for the
finer dataset. This 
shows that the step sizes are mostly determined by the noise in the likelihood terms, so that in the 
down-sampled dataset the noise is reduced due to the aggregation of data from the neighbouring cells. 

\begin{table}
\caption{\label{table:logGaussianCox2}Comparison of sampling methods in the log-Gaussian Cox model dataset in 
the down-sampled version where $n=1024$.}
\centering
\fbox{%
\begin{tabular}{*{5}{c}}
\em Method &\em Time(s) &\em Step $\delta$  &\em ESS (Min, Med, Max)  &\em Min ESS/s (s.d.) \\ 
\hline
aGrad-z  &   3.5  &  0.264  &  (41.0, 202.8, 512.4)  &  11.76 (1.95)\\ 
aGrad-u  &  4.5  &  0.691  &  (93.3, 444.2, 1038.7)  &  21.06 (5.11) \\ 
mGrad  &  3.8  &  1.410  &  (170.9, 771.1, 1582.7)  &  46.78 (16.42)\\ 
pMALA  &  12.3  &  0.006  &  (3.6, 12.6, 53.8)  &  0.29 (0.02)\\ 
Ellipt  &  3.6  &   &  (4.7, 17.4, 63.5)  &  1.32 (0.19)\\ 
pCN  &  2.0  &  0.011  &  (3.5, 11.7, 50.8)  &  1.78 (0.32)\\ 
pCNL  &  6.0  &  0.006  &  (3.6, 12.7, 51.3)  &  0.59 (0.03)\\ 
mMALA  &   17.7  &  0.070  &  (22.1, 90.5, 185.5)  &  1.25 (0.33) \\ 
RHMC  &  77.9  &  0.100  &  (1747.2, 4601.9, 5000.0)  &  22.52 (1.94) \\ 
\end{tabular}}
\end{table}

 \subsection*{Additional experiments with binary regression}
Figure \ref{fig:HeartLogL} shows the evolution 
of the log-likelihood values for the ``Heart'' dataset, detailed
results for which are shown in the main article.  For the rest four
datasets such plots look similar. The figure
illustrates convergence and sampling efficiency. 
Tables \ref{table:australian}, \ref{table:ripley}, \ref{table:german}, \ref{table:pima} 
show the performance of the different sampling schemes in the rest four binary classification datasets.

\begin{table}
\caption{\label{table:australian}Comparison of sampling methods in Australian Credit dataset. The size of the
latent vector $\bx$ is $n = 690$ and the input dimensionality is $D = 14$.}
\centering
\fbox{%
\begin{tabular}{*{5}{c}}
\em Method &\em Time(s) &\em Step $\delta$  &\em ESS (Min, Med, Max)  &\em Min ESS/s (s.d.) \\ 
\hline
aGrad-z  &   3.0  &  1.520  &  (47.0, 153.6, 424.7)  &  15.50 (4.64)\\ 
aGrad-u  &  3.5  &  2.649  &  (96.1, 263.2, 647.3)  &  27.60 (3.89) \\ 
mGrad  &  3.0  &  5.300  &  (198.2, 504.2, 1191.0)  &  66.07 (10.03)\\ 
pMALA  &  7.0  &  0.009  &  (5.1, 21.2, 89.5)  &  0.73 (0.09)\\ 
Ellipt  &  4.3  &   &  (6.3, 24.8, 94.7)  &  1.47 (0.19)\\ 
pCN  &  1.7  &  0.013  &  (4.2, 17.6, 75.4)  &  2.56 (0.23)\\ 
pCNL  &  2.6  &  0.009  &  (5.3, 22.1, 88.8)  &  2.03 (0.30)\\ 
\end{tabular}}
\end{table}

\begin{table}
\caption{\label{table:german}Comparison of sampling methods in German Credit dataset. The size of the
latent vector x is $n = 1000$ and the input dimensionality is $D = 24$.}
\centering
\fbox{%
\begin{tabular}{*{5}{c}}
\em Method &\em Time(s) &\em Step $\delta$  &\em ESS (Min, Med, Max)  &\em Min ESS/s (s.d.) \\ 
\hline
aGrad-z  &   4.4  &  0.681  &  (47.1, 161.3, 345.4)  &  10.74 (1.49)\\ 
aGrad-u  &  5.3  &  1.134  &  (78.0, 257.0, 518.2)  &  14.65 (2.66) \\ 
mGrad  &  4.5  &  2.689  &  (176.0, 528.0, 929.1)  &  38.73 (6.63)\\ 
pMALA  &  15.4  &  0.016  &  (6.4, 26.6, 91.6)  &  0.42 (0.06)\\ 
Ellipt  &  5.4  &   &  (4.6, 19.8, 66.7)  &  0.85 (0.07)\\ 
pCN  &  2.6  &  0.013  &  (3.8, 14.6, 60.5)  &  1.48 (0.14)\\ 
pCNL  &  7.7  &  0.016  &  (6.2, 26.4, 92.1)  &  0.81 (0.11)\\ 
\end{tabular}}
\end{table}

\begin{table}
\caption{\label{table:pima}Comparison of sampling methods in Pima Indian dataset. The size of the latent
vector $\bx$ is $n = 532$ and the input dimensionality is $D = 7$.}
\centering
\fbox{%
\begin{tabular}{*{5}{c}}
\em Method &\em Time(s) &\em Step $\delta$  &\em ESS (Min, Med, Max)  &\em Min ESS/s (s.d.) \\ 
\hline
aGrad-z  &   2.5  &  1.667  &  (96.6, 299.1, 735.5)  &  39.25 (9.70)\\ 
aGrad-u  &  2.9  &  2.943  &  (176.0, 529.8, 1072.6)  &  60.66 (6.49) \\ 
mGrad  &  2.4  &  6.134  &  (322.2, 947.8, 1594.5)  &  133.75 (13.25)\\ 
pMALA  &  3.6  &  0.053  &  (28.4, 93.0, 233.0)  &  7.93 (2.16)\\ 
Ellipt  &  3.2  &   &  (19.2, 66.6, 175.4)  &  6.06 (1.28)\\ 
pCN  &  1.4  &  0.055  &  (12.2, 43.1, 128.7)  &  8.73 (1.16)\\ 
pCNL  &  2.1  &  0.056  &  (27.8, 93.0, 235.9)  &  13.35 (3.46)\\ 
\end{tabular}}
\end{table}

\begin{table}
\caption{\label{table:ripley}Comparison of sampling methods in Ripley dataset. The size of the latent vector
$\bx$ is $n = 250$ and the input dimensionality is $D = 2$.}
\centering
\fbox{%
\begin{tabular}{*{5}{c}}
\em Method &\em Time(s) &\em Step  $\delta$  &\em ESS (Min, Med, Max)  &\em Min ESS/s (s.d.) \\ 
\hline
aGrad-z  &   1.3  &  3.589  &  (21.5, 125.5, 630.0)  &  16.58 (7.09)\\ 
aGrad-u  &  1.4  &  4.526  &  (28.8, 163.5, 805.3)  &  19.95 (8.43) \\ 
mGrad  &  1.2  &  9.014  &  (47.0, 289.4, 1730.3)  &  38.04 (11.87)\\ 
pMALA  &  1.5  &  0.017  &  (13.8, 58.2, 240.4)  &  8.94 (0.87)\\ 
Ellipt  &  1.9  &   &  (11.2, 51.2, 206.9)  &  5.76 (1.63)\\ 
pCN  &  0.7  &  0.019  &  (7.3, 30.9, 140.6)  &  10.70 (3.51)\\ 
pCNL  &  1.1  &  0.017  &  (14.3, 56.1, 222.1)  &  13.09 (3.25)\\ 
\end{tabular}}
\end{table}

\begin{figure}
\centering
\begin{tabular}{ccc}
{\includegraphics[scale=0.2]
{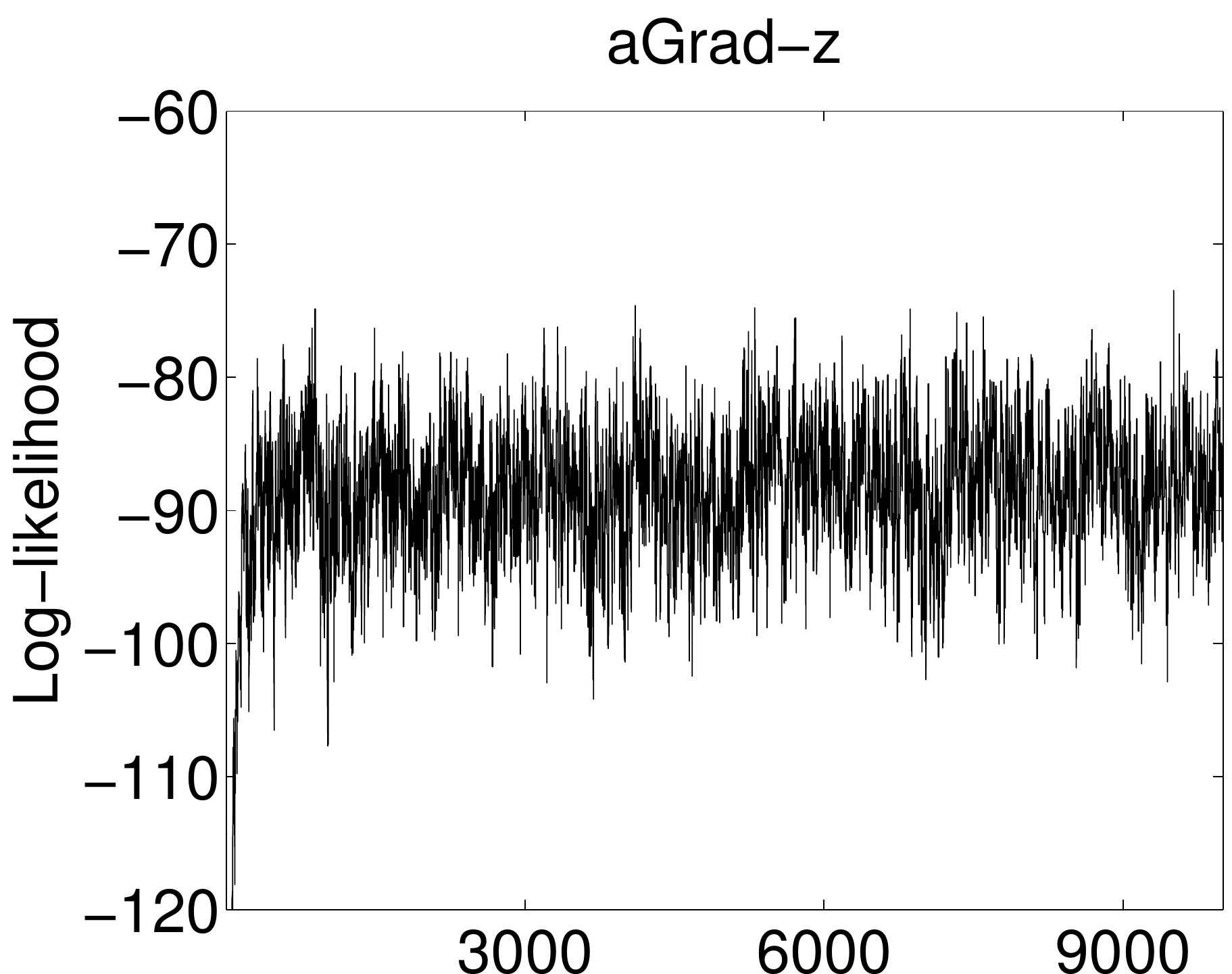}} &
{\includegraphics[scale=0.2]
{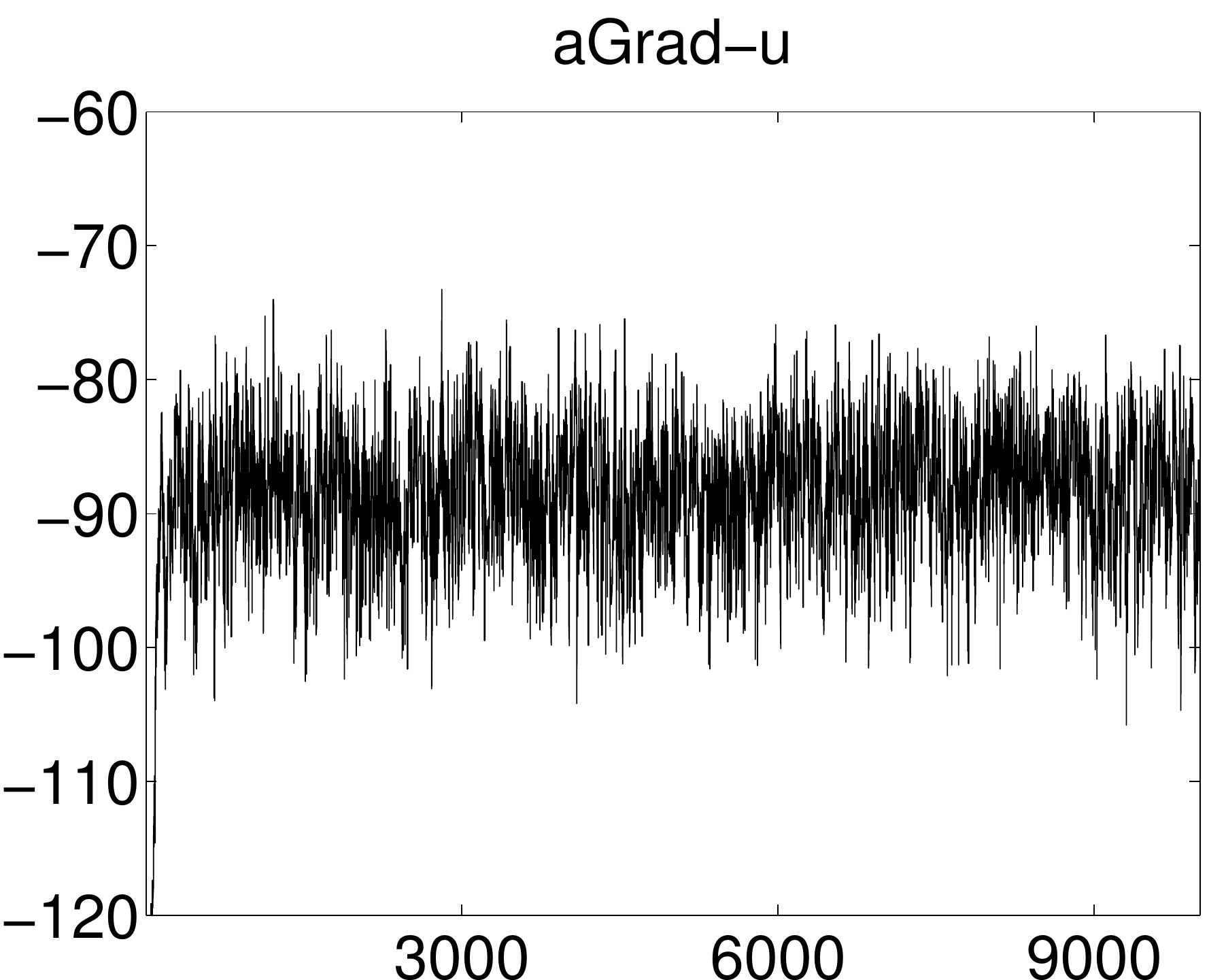}} &
{\includegraphics[scale=0.2]
{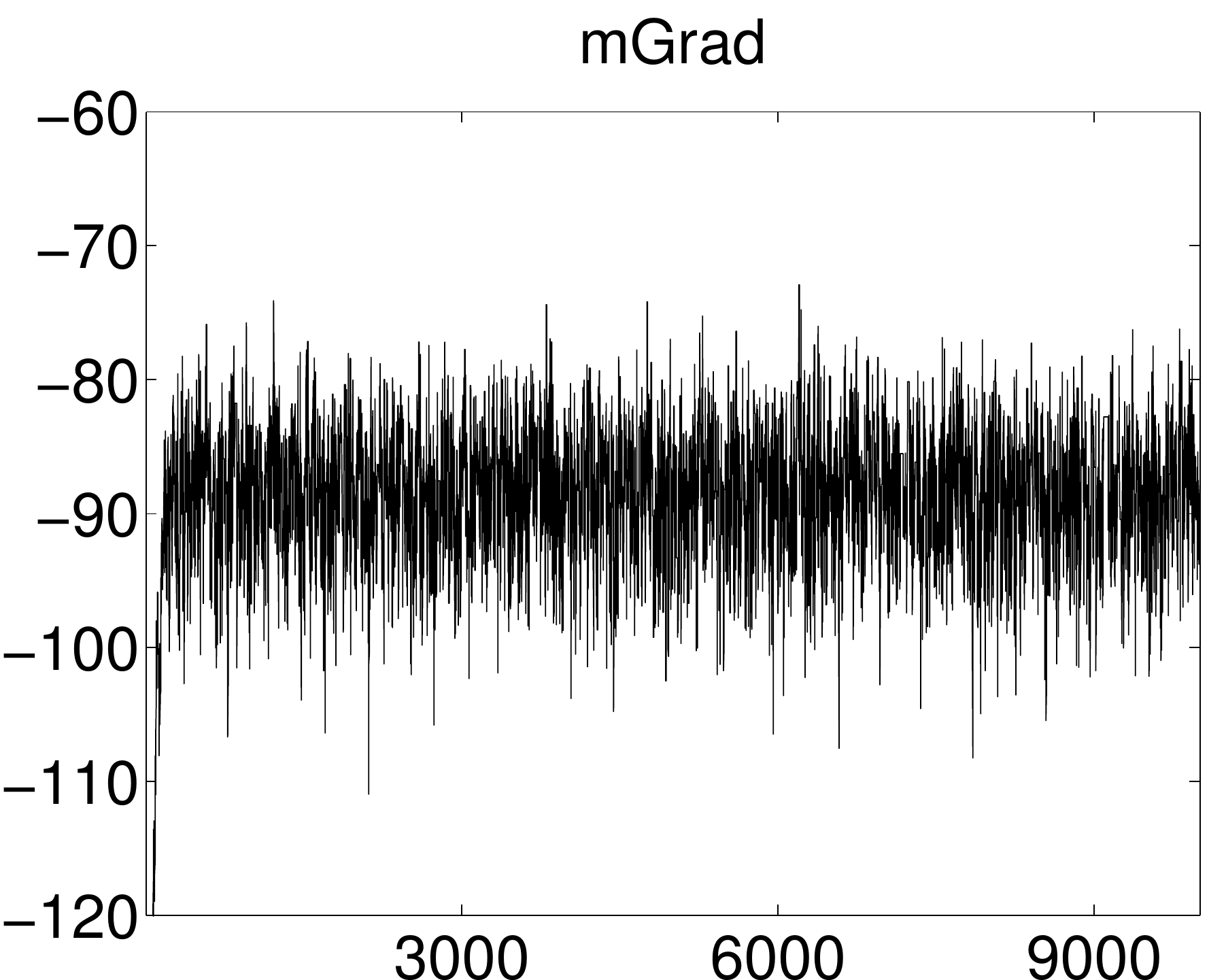}} \\
{\includegraphics[scale=0.2]
{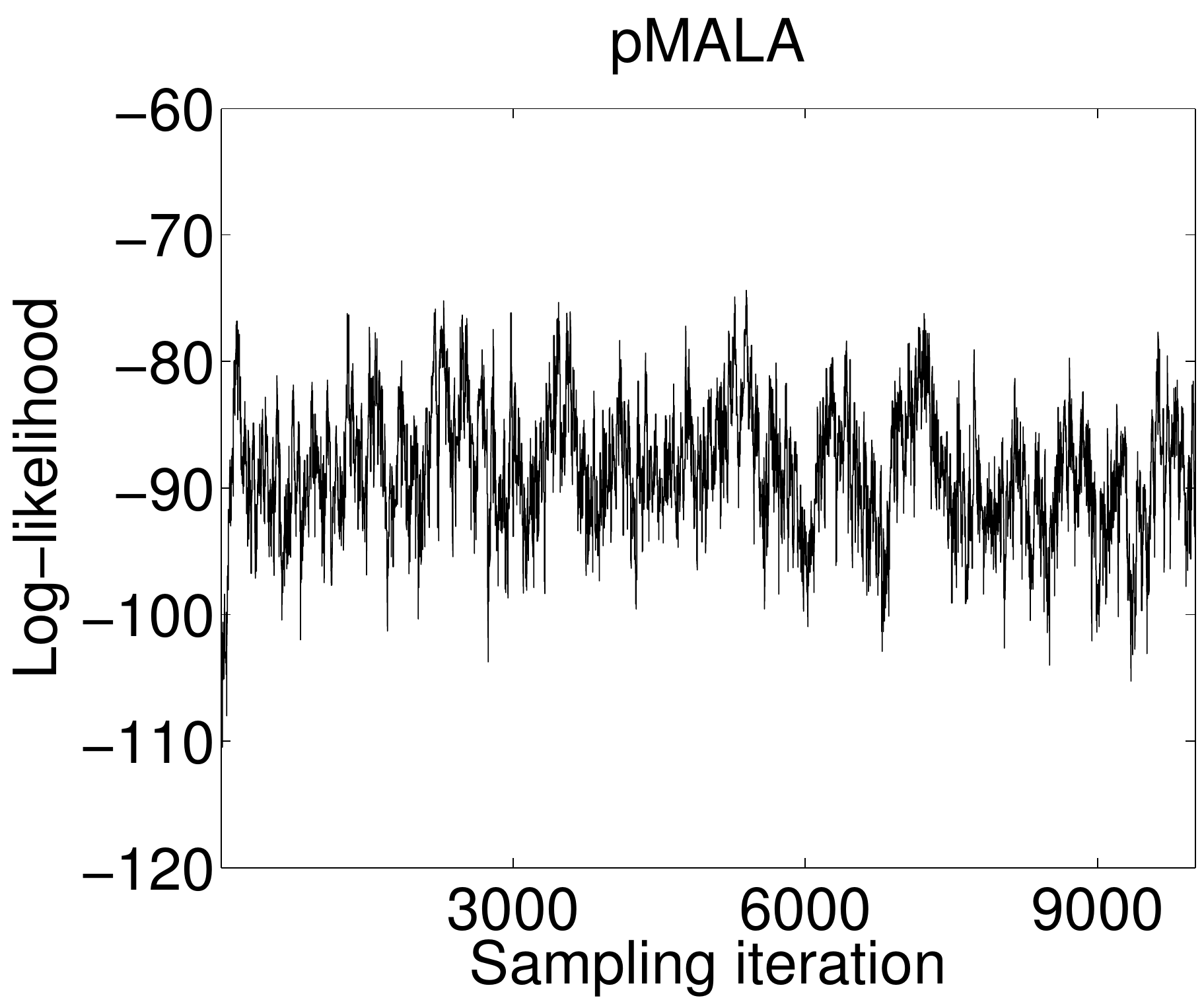}} &
{\includegraphics[scale=0.2]
{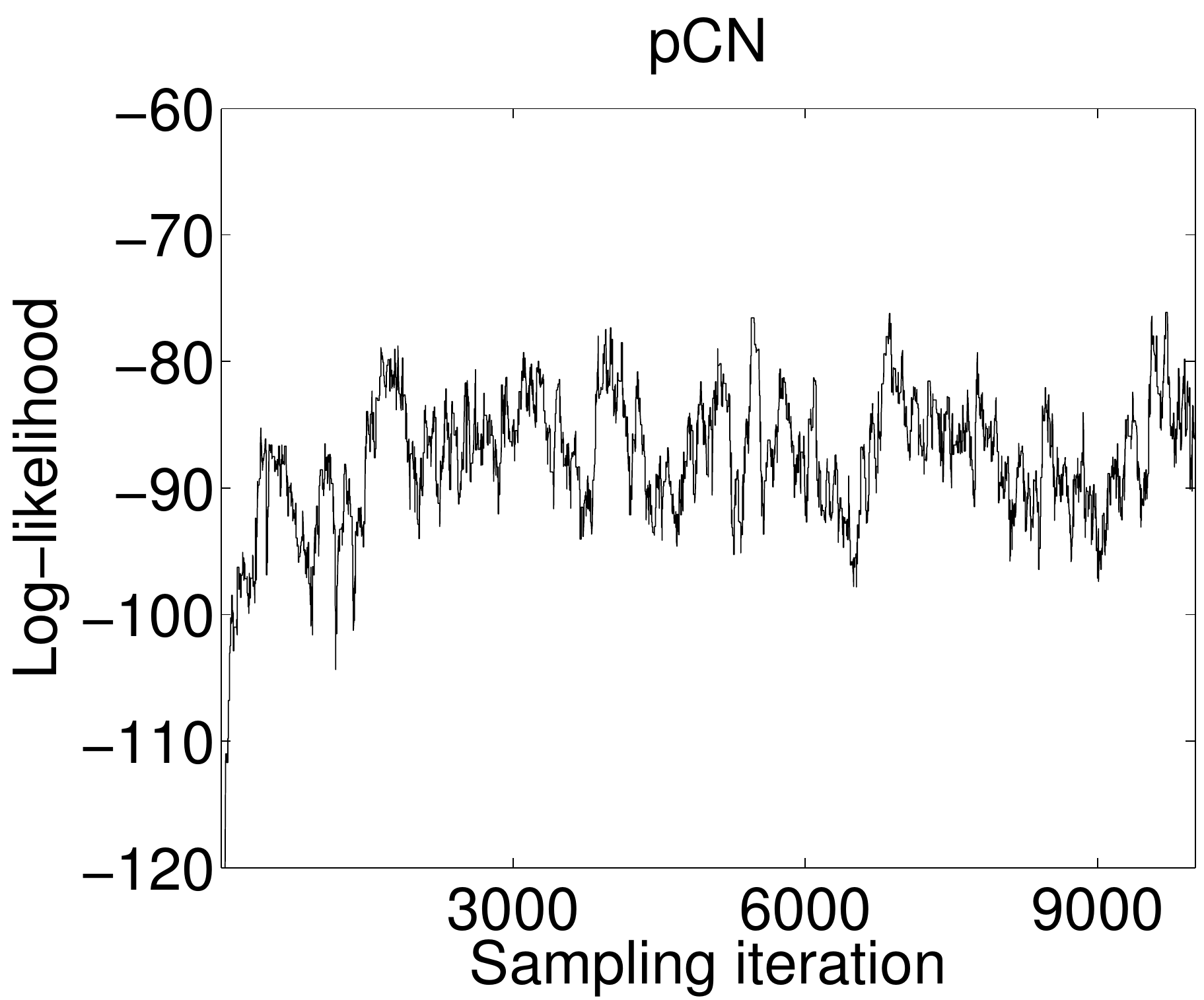}} &
{\includegraphics[scale=0.2]
{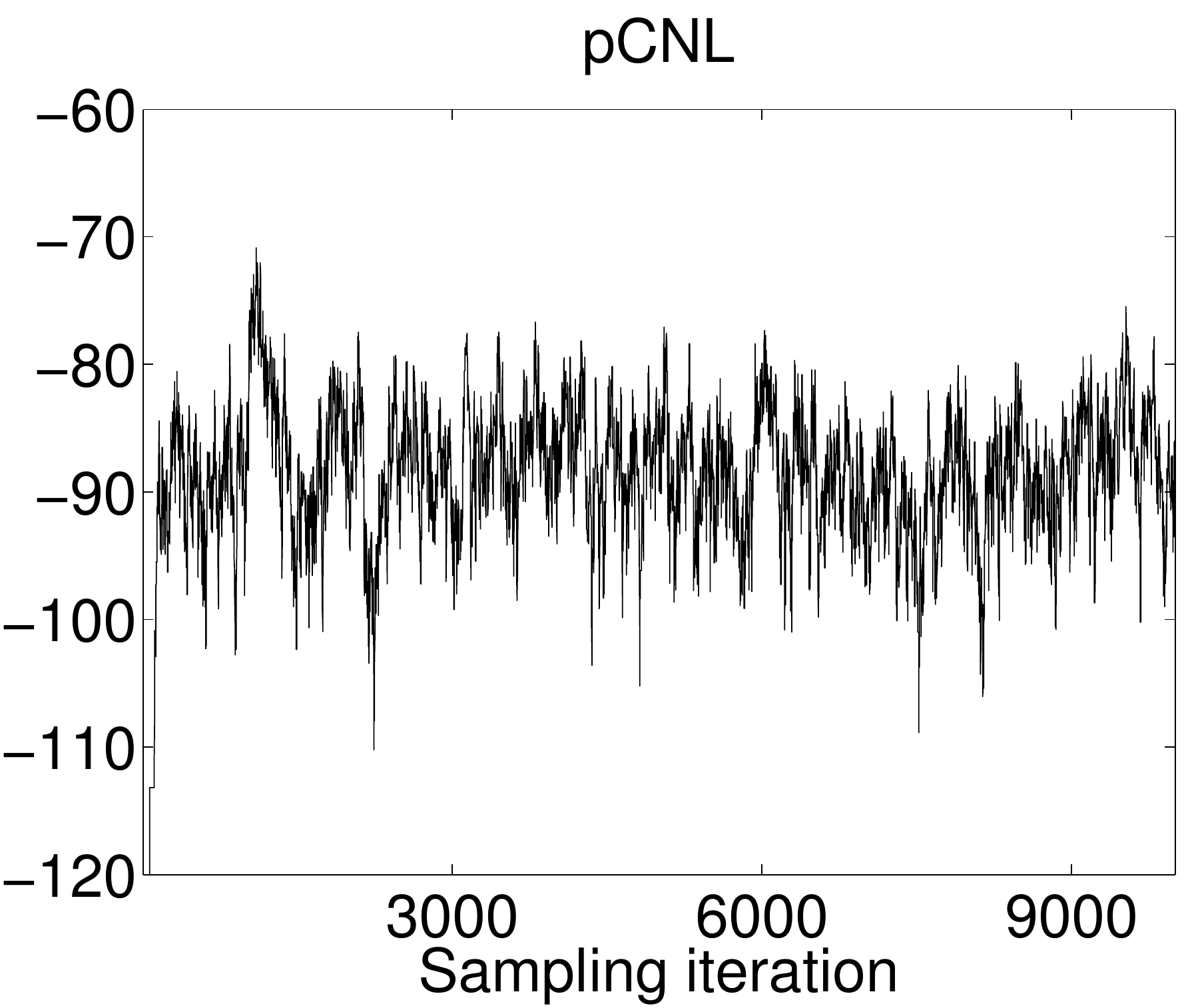}} \\
\end{tabular}
\caption{The evolution of the log-likelihood values across all iterations 
for the sampling schemes aGrad-z, aGrad-u, mGrad-z, pMALA, pCN and pCNL in the Heart dataset. 
The corresponding plot for Ellipt looks very similar to pCN and it is omitted.   
\label{fig:HeartLogL}}
\end{figure}

\subsection*{Additional experiments for hyperparameter learning}

In the first experiment we consider binary Gaussian process classification and we 
apply the alternative schemes to Pima Indian dataset. We run the algorithms for
$4 \times 10^4$ burn-in iteration and then we collect $5 \times 10^3$ samples.  
Figure \ref{fig:pimahypers} shows the log-likelihoods across all $4.5 \times 10^4$
sampling iterations. We can observe that aGrad-z-joint is the best mixing with aGrad-z-gibbs coming second. The corresponding plots for 
Ellipt-gibbs and pCNL-gibbs exhibit less variation, and stabilize in slightly smaller log-likelihood values, 
with pCNL being the most problematic. Figure \ref{fig:pimahypersBox} shows boxplots
for the inferred hyperparameters for the different methods. We observe that while  
aGrad-z-gibbs and aGrad-z-joint largely agree about the inferred values, 
Ellipt-gibbs and pCNL-gibbs have very narrow boxplots. Also pCNL-gibbs clearly has stuck to  
a posterior mode which is rather very different from the remaining algorithms. Finally, Table 
\ref{table:pimahyperstheta} shows performance scores for sampling the two hyperparameters 
which show the clear superiority of aGrad-z-joint. Notice that the learned step size $\kappa$ 
is much larger for aGrad-z-joint than for the remaining schemes.  
Notice, however, that the ESS scores for Ellipt-gibbs and pCNL-gibbs should not be trusted since 
Figure \ref{fig:pimahypers} and Figure \ref{fig:pimahypersBox} suggest that these methods do not mix  well.

\begin{figure}
\centering
\begin{tabular}{cc}
\includegraphics[scale=0.3]
{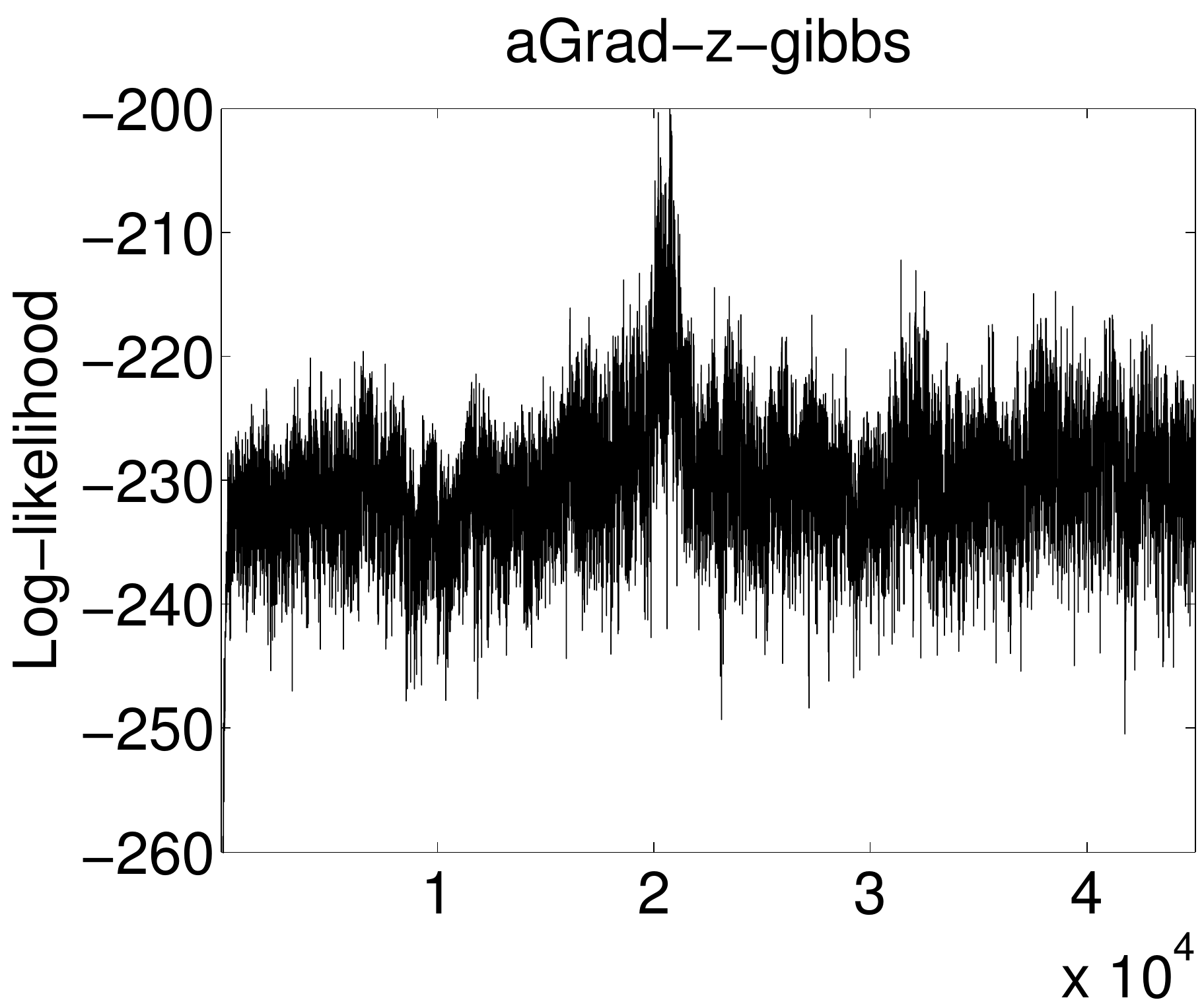} &
\includegraphics[scale=0.3]
{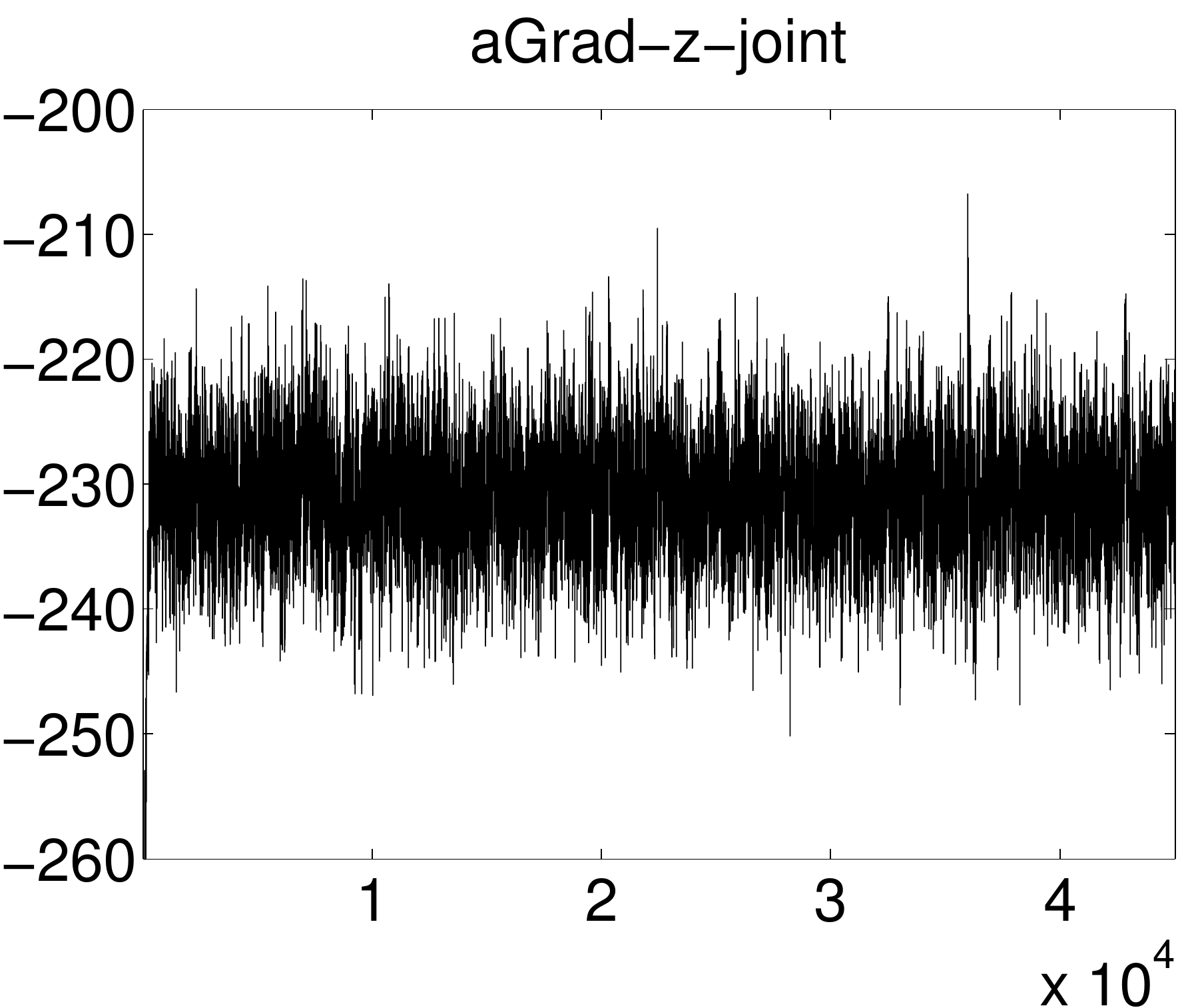} \\
\includegraphics[scale=0.3]
{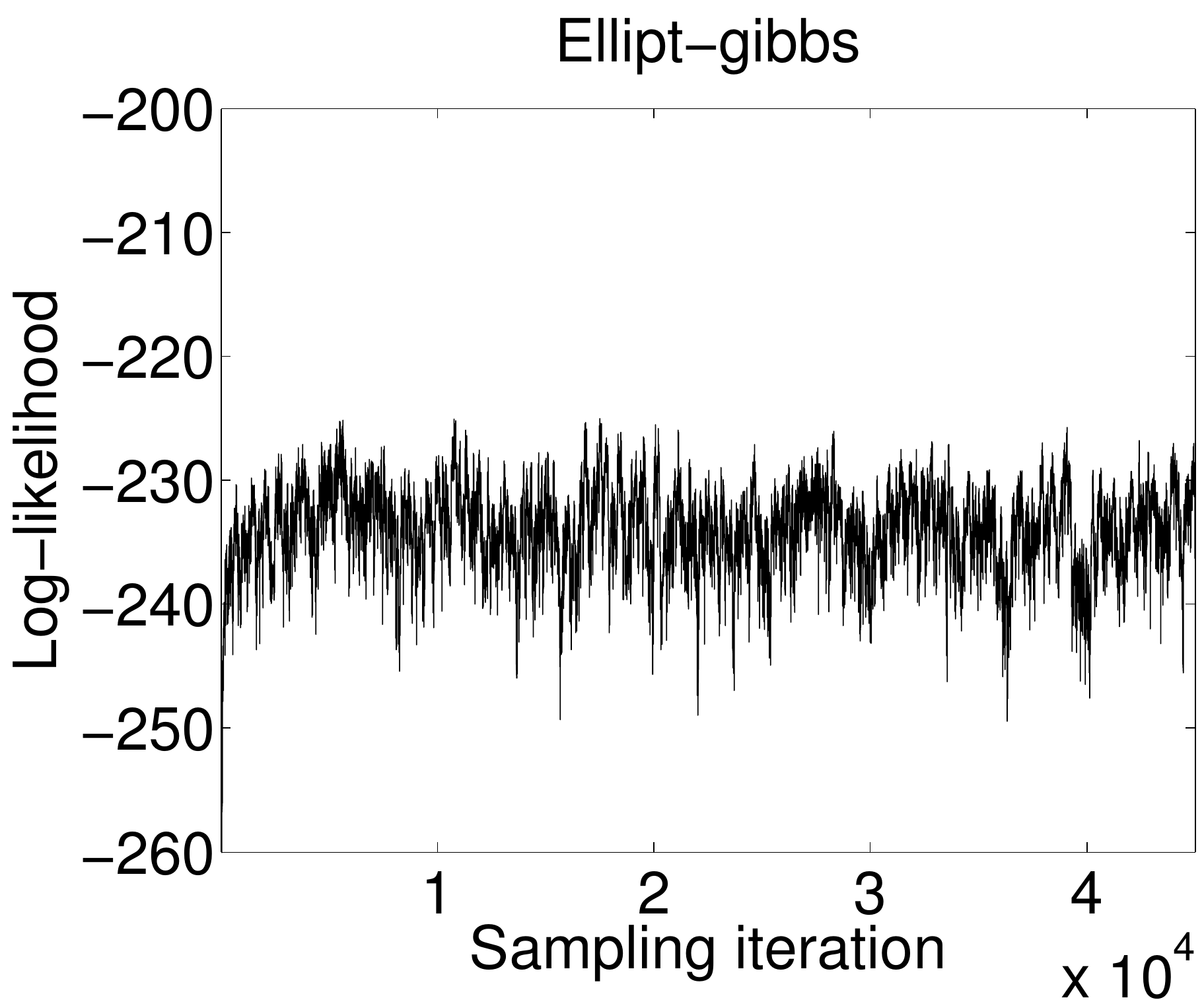} &
\includegraphics[scale=0.3]
{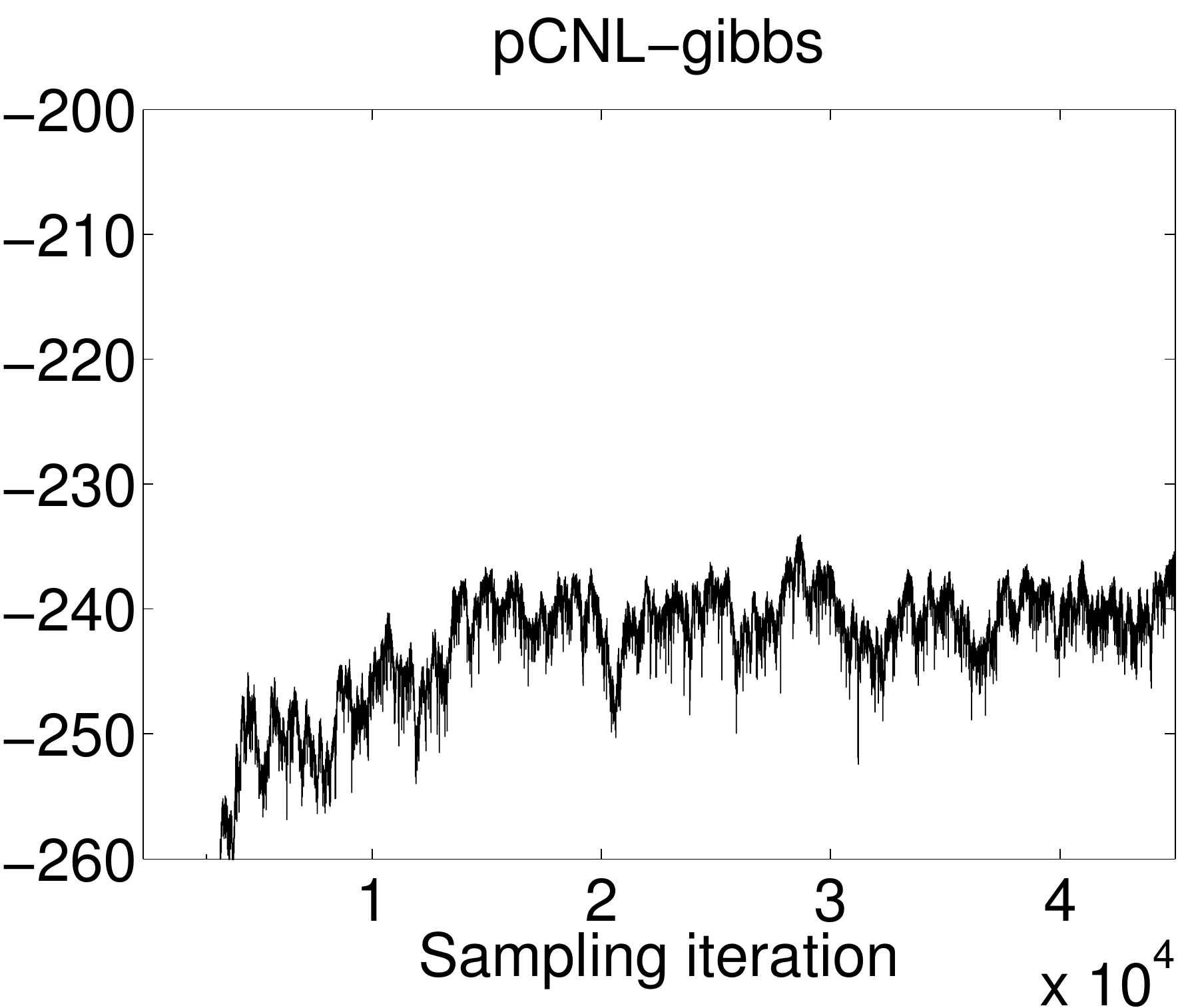} 
\end{tabular}
\caption{\label{fig:pimahypers}The evolution of the log-likelihood values across all iterations in Pima Indian.}
\end{figure}

\begin{figure}
\centering
\begin{tabular}{cc}
\includegraphics[scale=0.3]
{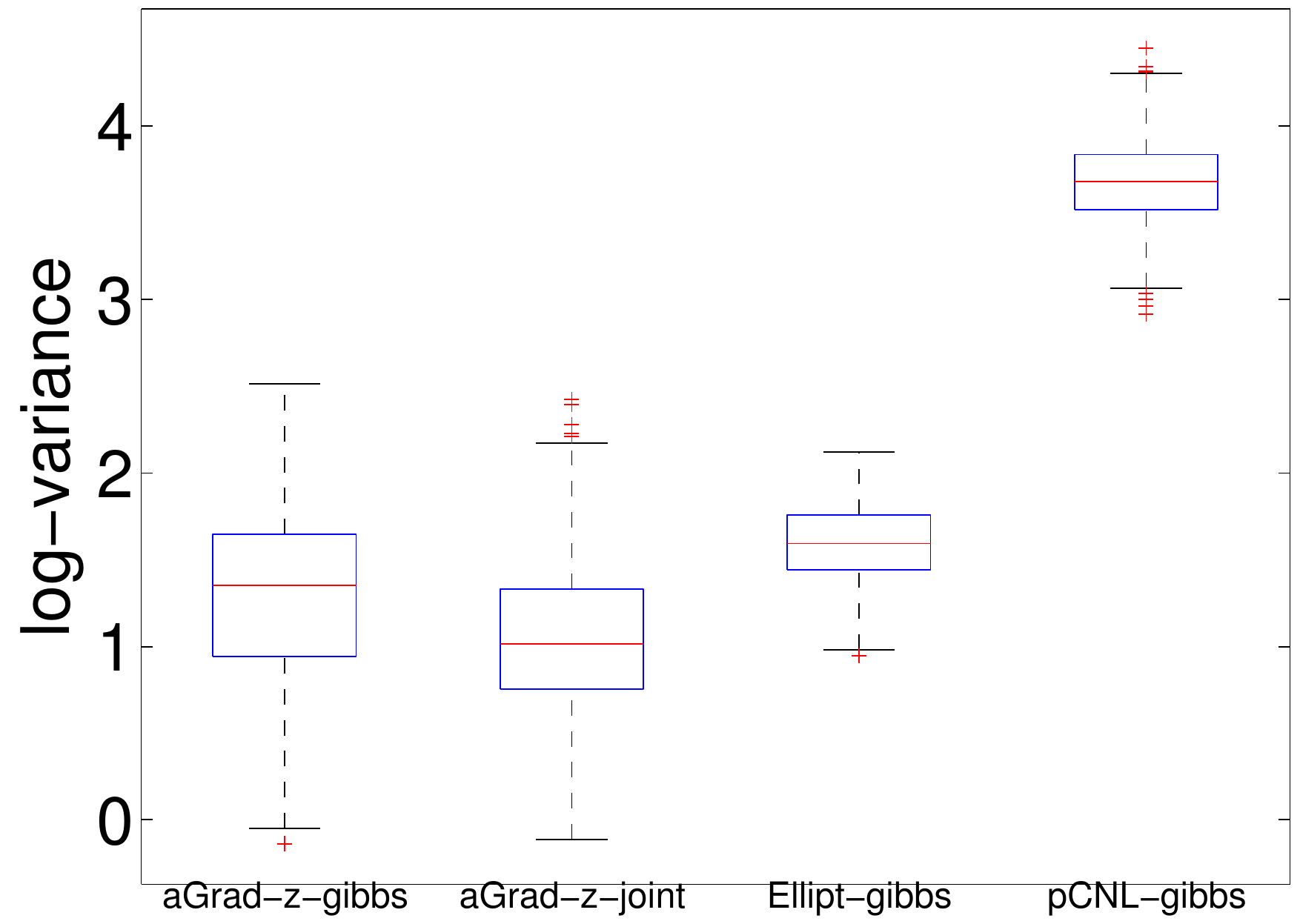} &
\includegraphics[scale=0.3]
{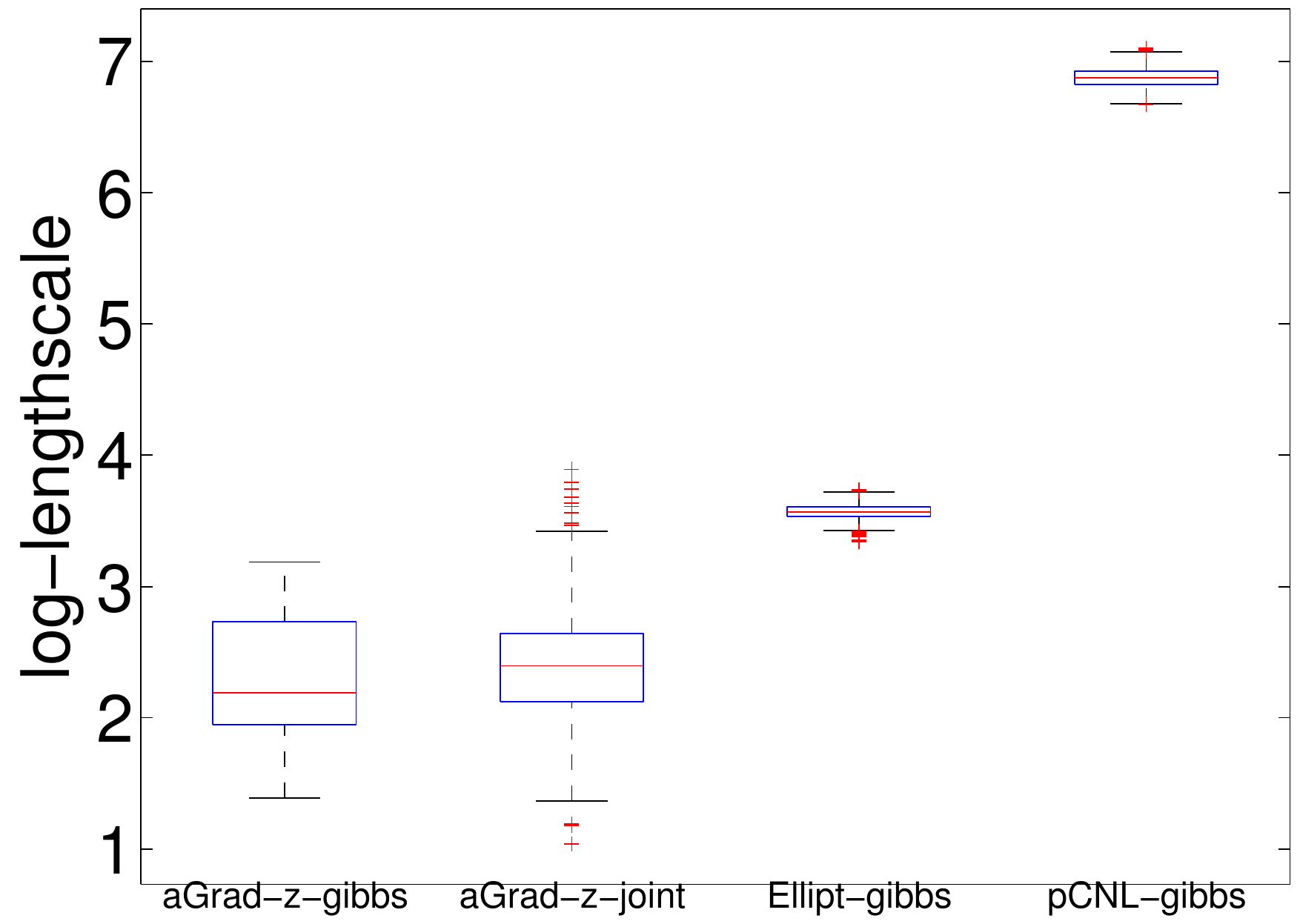}  
\end{tabular}
\caption{\label{fig:pimahypersBox}The inferred hyperparameters values in Pima Indian dataset.}
\end{figure}


\begin{table}
\caption{\label{table:pimahyperstheta}Performance scores of the sampling methods in Pima dataset 
for the kernel hyperparameters.}
\centering
\fbox{%
\begin{tabular}{*{5}{c}}
\em Method &\em Time(s) &\em Step $\kappa$  &\em ESS ($\sigma_x^2$, $\ell^2$)  &\em Min ESS/s \\ 
\hline
aGrad-z-gibbs  &   599.4  &  0.005  &  (5.1, 11.0)  &  0.01 \\ 
aGrad-z-joint  &   692.6  &  0.179  &  (173.4, 232.5)  &  0.25 \\ 
Ellipt-gibbs  &  426.8   &  0.006  &  (3.2, 5.7)  &  0.01 \\ 
pCNL-gibbs  &  350.7  &  0.020  &  (4.4, 15.3)  &  0.01 \\ 
\end{tabular}}
\end{table}

\section*{Efficient implementation of the algorithms}

\subsection*{Pilot tuning of step size}
All algorithms discussed in the main article require the choice of a single step size
parameter $\delta$ in order to achieve an acceptance rate that leads
to good mixing. In our experiments all algorithms that use gradient
information are tuned to achieve an acceptance rate around 50\%;
asymptotic theory, e.g. in \cite{scaleSS},  supports this tuning for pMALA, this is the rate
that pCNL is tuned to in experiments in \cite{Cotter2013MCMC} and rate
that our experiments suggests to be advantageous for the new auxiliary
and marginal samplers. On the other hand, pCN is tuned to achieve a
rate around 30\% following the recommendation in
\cite{Cotter2013MCMC}.  A simple observation is that no matrix
factorisations are needed during the tuning as we adapt $\delta$,
since all matrices involved in the algorithms, either when $\bS=\bI$
or $\bS=\bC$, share the same eigenspace with $\bC$ regardless of the value of
$\delta$, hence an offline spectral decomposition of the latter
(e.g. using a QR-algorithm) can be
easily updated at $\Op(n)$ cost to produce all matrices necessary to
carry out the algorithms. Additional details regarding the implementation 
of the algorithms is given in the next section.

\subsection*{Efficient generation of proposal, computation of
  acceptance ratio and pseudocode for the proposed algorithms}

Here, we discuss how we can efficiently implement all algorithms based on 
a eigenvalue decomposition of the prior covariance matrix $\bC$. 
We provide full details for the proposed marginal sampler, while for all remaining 
schemes the implementation follows similar steps. 

The marginal scheme has proposal 
\begin{align} 
q(\by|\bx) = \Gau \left(\by |  { 2 \over \delta}\bA  \left ( \bx   +  {\delta \over 2} \nabla
f(\bx) \right),   {2 \over \delta }\bA^2  + \bA \right)\,
\end{align}
where $\bA^{-1} =  \bC^{-1} + {2 \over \delta} \bI$ and 
the corresponding Metropolis-Hastings ratio is
\begin{align}
& \exp\{f(\by) - f(\bx) + h(\bx,\by) - h(\by,\bx)\}, \label{eq:margina-ar} \\    
& h(\bx,\by)=  \left ( \bx - \frac{2}{\delta}\bA  ( \by +  {\delta \over 4} 
\nabla f(\by) ) \right)^T  \left( {2 \over \delta} \bA + \bI \right)^{-1} \nabla f(\by)\, \nonumber 
\end{align}
To provide an efficient implementation we will be based on a precomputed 
decomposition of $\bC = \bU \bLambda \bU^T$. Based on this 
$\bA$ can be written as 
$$
\bA = \bU \bLambda_1 \bU^T 
$$
where each eigenvalue in the diagonal matrix $\bLambda_1$ has the form  
$\frac{\gamma \delta}{\delta + 2 \gamma}$ where $\gamma$ is an eigenvalue of $\bC$. Similarly 
$$
 {2 \over \delta }\bA^2  + \bA = \bU \bLambda_2 \bU^T
$$
where each eigenvalue of $\bLambda_2$ has the form  
$\frac{\gamma \delta}{\delta + 2 \gamma} \times \frac{\delta + 4 \gamma}{\delta + 2 \gamma}$. 
Also it would be useful to know the decomposition of $\left( {2 \over \delta} \bA + \bI \right)^{-1}$
which is 
$$
\left( {2 \over \delta} \bA + \bI \right)^{-1} = \bU \bLambda_3 \bU^T
$$
where each eigenvalue  of $\bLambda_3$ has the form  
$\frac{\delta + 2 \gamma}{\delta + 4 \gamma}$. 
A sample from $q(\by|\bx)$ can be implemented as follows 
$$
\by = \bU \left[   \bLambda_1 \left( {2 \over \delta}  \bU^T \bx   +   \bU^T \nabla f(\bx)  \right)   + \bLambda_2^{\frac{1}{2}} \bfeta \right], \ \ \text{where} \ \ \bfeta \sim \Gau(\bzero,\bI). 
$$
To compute this term we consider that the vectors $\bU^T \bx$,  $\bU^T \nabla f(\bx)$ and the whole 
$\bLambda_1 \left( {2 \over \delta}  \bU^T \bx   +   \bU^T \nabla f(\bx)  \right)$ have already been pre-computed 
from a previous MCMC iteration. Therefore, the above sampling operation involves the computation of  $\bLambda_1^{\frac{1}{2}} \bfeta$, which costs $\Op(n)$ since $\bLambda_1^{\frac{1}{2}}$ is diagonal, an addition of two vectors
 and a single matrix-vector multiplication involving the most-left $\bU$ and the remaining term (which is just an already 
 computed vector). Thus, overall the proposal of $\by$ can be implemented with a single 
 matrix-vector multiplication,  which costs $\Op(n^2)$.
 
The computation of the Metropolis-Hastings ratio can be carried out as follows. 
To compute $h(\bx,\by)$ we are based on the expression 
\begin{align}
h(\bx,\by) & =  \left ( \bx - \bU \bLambda_1 ( \frac{2}{\delta} \bU^T  \by +  {1 \over 2} 
\bU^T \nabla f(\by) ) \right)^T  \bU \bLambda_3 \bU^T \nabla f(\by) \nonumber \\
 & = \left[ \bU^T \bx  - \left(  \bLambda_1 (\frac{2}{\delta} \bU^T  \by +  {1 \over 2} 
\bU^T \nabla f(\by) ) \right) \right]^T \bLambda_3 \bU^T \nabla f(\by) \nonumber
\end{align}
Then, we apply two matrix-vector multiplications, each having cost $\Op(n^2)$,  in order to 
compute the vectors $\bU^T \by$ and  $\bU^T \nabla f(\by)$. With these two vectors pre-computed and stored 
all remaining operations cost $\Op(n)$. For example, the computation of $\bLambda_3 \bU^T \nabla f(\by)$ is 
$\Op(n)$ since $\bLambda_3$ is diagonal. Also the computation of the symmetric term $h(\by,\bx)$ required in  the 
ratio is just $\Op(n)$ since the required vectors $\bU^T \bx$ and $\bU^T \nabla f(\bx)$ are known form the 
previous iteration. Notice that if $\by$ is accepted, then the two vectors $\bU^T \by$ and $\bU^T \nabla f(\by)$ 
will become the already pre-computed vectors required in the next iteration. Also when $\by$ 
is accepted we also compute $\bLambda_1 (\frac{2}{\delta} \bU^T  \by + \bU^T \nabla f(\by) )$
needed for the upcoming proposal step in the next iteration. 
So overall, the marginal scheme  requires three matrix-vector multiplications per iteration. 

Working similarly as above, the auxiliary scheme based on $\bu$ can be implemented with three matrix-vector
multiplications per iteration, the auxiliary scheme based on $\bz$ as well as pCNL require two such operations, while 
the pCN scheme requires just a single matrix-vector multiplication per iteration.  
\rev{ Full pseudocode for all three proposed algorithms aGrad-z, aGrad-u and mGrad  is given by  Algorithm  \ref{algorz}, 
Algorithm  \ref{algoru}  and Algorithm  \ref{algorm} respectively. The appearance of $*$s  indicate the number and the  
precise lines inside the code we need to  perform a matrix-vector multiplication.    
   
  \begin{algorithm}[tb]
   \caption{ aGrad-z \label{algorz}}
   \label{alg1}
\begin{algorithmic}
   \STATE {\bfseries Input:} $\bU$ and $\bG$ such that $\bU \bG  \bU^T  =   \bC$,  $f(\bx)$, $\nabla f(\bx)$,  number of burn-in and collection iterations  $\text{BURN}$ and $\text{T}$.    
   \STATE Initialize: $\bx = {\bf 0}$,  $f\bx = f(\bx)$, $\text{grad}f\bx = \nabla f(\bx)$,  the diagonal $\bLambda_1$ 
                such that $[\bLambda_1]_{ii}  = \frac{\gamma_i \delta}{\delta + 2 \gamma_i}$.   
    \FOR{$i=1$ {\bfseries to} $\text{BURN} + \text{T}$}
   \STATE Draw $\bz = \bx  + (\delta/2) \text{grad}f\bx  + \sqrt{\delta/2} \bfeta,   \  \  \ \bfeta \sim \mathcal{N}({\bf 0}, \bI)$.
    \STATE Propose  $\by =   \bU \left(  \bLambda_1^{\frac{1}{2} }  \left(   \bLambda_1^{\frac{1}{2} }   (  \bU^T ( \frac{2}{\delta} \bz ) )       +  \bfeta \right)   \right), \ \ \ \bfeta \sim \mathcal{N}({\bf 0}, \bI)$.  \ \ \ \ \ \ \ \ \ **
    \STATE M-H step:  $f\by = f(\by)$, $\text{grad}f\by  = \nabla f(\by)$, \\
              \ \ \ \ \  \  $g\bz\by =  ( \bz  - \by  - \frac{\delta}{4} \text{grad}f\by  )^T \text{grad}f\by$, \\
              \ \ \ \ \ \   $g\bz\bx =   ( \bz  - \bx  - \frac{\delta}{4} \text{grad}f\bx )^T \text{grad}f\bx$. \\
              \ \ \ \ \ \   {\bf if}  $rand < min(1,  \exp(f\by - f\bx + g\bz\by  - g\bz\bx )  )$  {\bf then } \\  
              \ \ \ \ \ \ \ \ \  $\bx = \by$,  $f\bx = f\by$,  $\text{grad}f\bx = \text{grad}f\by$.  \\
              \ \ \ \ \ \   {\bf end if} 
      \STATE Adapt $\delta$ or collect sample:  \\
              \ \ \ \ \ \  {\bf if } $i \leq $ BURN  \\ 
              \ \ \ \ \ \  \ \ \ \ adapt $\delta$  to achieve acceptance rate in $50\%$ to $60 \%$.\\ 
              \ \ \ \ \ \ \ \ \ \  update $\bLambda_1$:  $[\bLambda_1]_{ii}  = \frac{\gamma_i \delta}{\delta + 2 \gamma_i}$.  \\
               \ \ \ \ \ \  {\bf else} \\
              \ \ \ \ \ \ \ \ \ \  collect sample $\bx$.  \\
              \ \ \ \ \ \  {\bf end if} 
      \ENDFOR          
\end{algorithmic}
\end{algorithm}

}

\rev{ 

  \begin{algorithm}[tb]
   \caption{ aGrad-u \label{algoru}}
   \label{alg1}
\begin{algorithmic}
   \STATE {\bfseries Input:} $\bU$ and $\bG$ such that $ \bU \bG  \bU^T  =   \bC$, $f(\bx)$,  $\nabla f(\bx)$,  number of burn-in and collection iterations  $\text{BURN}$ and $\text{T}$.    
   \STATE Initialize: $\bx = {\bf 0}$,  $f\bx = f(\bx)$, $\text{grad}f\bx = \nabla f(\bx)$,  $\text{Ugrad}f\bx = \bU^T \text{grad}f\bx$,  the diagonal $\bLambda_1$ 
                such that $[\bLambda_1]_{ii}  = \frac{\gamma_i \delta}{\delta + 2 \gamma_i}$.    
    \FOR{$i=1$ {\bfseries to} $\text{BURN} + \text{T}$}
   \STATE Draw $\bu = \bx  + \sqrt{\delta/2} \bfeta,   \  \  \ \bfeta \sim \mathcal{N}({\bf 0}, \bI)$.
   \STATE   $\text{Udelta}\bu  = \bU^T ( \frac{2}{\delta} \bu)$.     \ \ \ \ \ \ \ \ \ \ \ \ \ \ \ \ \ \ \ \ \ \ \ \ \ \ \ \ \ \ \ \ \ \ \ \ \ \  \ \ \ \ \ \ \ \ \ \ \ \ \ \ \ \ \ \ \ \ \ \ *
    \STATE Propose  $\by =   \bU  \left(  \bLambda_1 \left(  \text{Udelta}\bu   +  \text{Ugrad}f\bx   \right)   +  \bLambda_1^{\frac{1}{2}} \bfeta \right), \  \bfeta \sim \mathcal{N}({\bf 0}, \bI)$.  \ *
    \STATE M-H step:  $f\by = f(\by)$, $\text{grad}f\by  = \nabla f(\by)$,    $\text{Ugrad}f\by = \bU^T \text{grad}f\by$,  \ \   *  \\  
              \ \ \ \ \  \  $j\bx\by\bu  = \bx^T \text{grad}f\by  -  \left(  \bLambda_1 (  \text{Udelta}\bu   + \frac{1}{2}  \text{Ugrad}f\by  )     \right)^T \text{Ugrad}f\by$,  \\
                \ \ \ \ \  \  $j\by\bx\bu  = \by^T \text{grad}f\bx  -  \left(  \bLambda_1 (  \text{Udelta}\bu   + \frac{1}{2}  \text{Ugrad}f\bx  )     \right)^T \text{Ugrad}f\bx$.  \\
              \ \ \ \ \ \   {\bf if}  $rand < min(1,  \exp(f\by - f\bx + j\bx\by\bu  - j\by\bx\bu )  )$   {\bf then } \\  
              \ \ \ \ \ \ \ \ \  $\bx = \by$,  $f\bx = f\by$,  $\text{grad}f\bx = \text{grad}\by$,  $\text{Ugrad}f\bx  = \text{Ugrad}f\by$. \\
              \ \ \ \ \ \   {\bf end if} 
      \STATE Adapt $\delta$ or collect sample:  \\
              \ \ \ \ \ \  {\bf if } $i \leq $ BURN  \\ 
              \ \ \ \ \ \  \ \ \ \ adapt $\delta$  to achieve acceptance rate in $50\%$ to $60 \%$.\\ 
              \ \ \ \ \ \ \ \ \ \  update $\bLambda_1$:  $[\bLambda_1]_{ii}  = \frac{\gamma_i \delta}{\delta + 2 \gamma_i}$.  \\
               \ \ \ \ \ \  {\bf else} \\
              \ \ \ \ \ \ \ \ \ \  collect sample $\bx$.  \\
              \ \ \ \ \ \  {\bf end if} 
      \ENDFOR          
\end{algorithmic}
\end{algorithm}

}

\rev{

  \begin{algorithm}[tb]
   \caption{ mGrad \label{algorm}}
   \label{alg1}
\begin{algorithmic}
   \STATE {\bfseries Input:} $\bU$ and $\bG$ such that $\bU \bG  \bU^T  =   \bC$,  $f(\bx)$, 
               $\nabla f(\bx)$,  number of burn-in and collection iterations  $\text{BURN}$ and $\text{T}$.    
   \STATE Initialize: $\bx = {\bf 0}$,  $f\bx = f(\bx)$, $\text{grad}f\bx = \nabla f(\bx)$,   $\text{U}\bx = \bU^T \bx$, 
                $\text{Ugrad}f\bx = \bU^T \text{grad}f\bx $, 
               $\bLambda_1$ 
                such that $[\bLambda_1]_{ii}  = \frac{\gamma_i \delta}{\delta + 2 \gamma_i}$, $\bLambda_2$ 
                 such that   $[\bLambda_2]_{ii}  = \frac{\gamma_i \delta}{\delta + 2 \gamma_i} \times 
                 \frac{\delta + 4 \gamma_i}{ \delta + 2 \gamma_i}$, 
                  $\bLambda_3$ 
                 such that   $[\bLambda_3]_{ii}  = \frac{\delta + 2 \gamma_i}{ \delta + 4 \gamma_i}$,
                  $\text{tmpSample}\bx =  \bLambda_1 \left(  \frac{2}{\delta}  \text{U}\bx   +   \text{Ugrad}f\bx \right)$,
                 $\text{tmpMH}\bx =  \bLambda_1 \left(  \frac{2}{\delta}  \text{U}\bx   +   \frac{1}{2} \text{Ugrad}f\bx \right)$.
    \FOR{$i=1$ {\bfseries to} $\text{BURN} + \text{T}$}
    \STATE Propose  $\by =   \bU  \left(  \text{tmpSample}\bx  +  \bLambda_2^{\frac{1}{2} }  \bfeta \right), \ \ \ \bfeta \sim \mathcal{N}({\bf 0}, \bI)$.  \ \ \ \ \ \ \ \ \ \ \ \ \ \ \ \  *
    \STATE M-H step:  $f\by = f(\by)$, $\text{grad}f\by  = \nabla f(\by)$, \\
                 \ \ \ \ \ \ $\text{U}\by = \bU^T \by$,  $\text{Ugrad}f\by = \bU^T \text{grad}f\by$,  \ \ \ \ \ \ \ \ \ \ \ \ \ \ \ \ \ \ \  \ \ \ \ \ \ \ \ \ \ \ \ \ ** \\
                 \ \ \ \ \ \ $\text{tmpSample}\by =  \bLambda_1 \left(  \frac{2}{\delta}  \text{U}\by   +   \text{Ugrad}f\by \right)$, \\
                 \ \ \ \ \ \ $\text{tmpMH}\by =  \bLambda_1 \left(  \frac{2}{\delta}  \text{U}\by   +   \frac{1}{2} \text{Ugrad}f\by \right)$, \\
                 \ \ \ \ \ \ $h\bx\by =  ( \text{U}\bx  -  \text{tmpMH}\by )^T   ( \bLambda_3  \text{Ugrad}f\by$), \\
                 \ \ \ \ \ \  $h\by\bx =   ( \text{U}\by  -  \text{tmpMH}\bx  )^T  ( \bLambda_3  \text{Ugrad}f\bx$),  \\
                 \ \ \ \ \ \   {\bf if}  $rand < min(1,  \exp(f\by - f\bx + h\bx\by  - h\by\bx )  )$  {\bf then } \\  
                \ \ \ \ \ \ \ \ \ \ $\bx = \by$,  $f\bx = f\by$,  $\text{grad}f\bx = \text{grad}f\by$,   \\ 
                \ \ \ \ \ \ \ \ \ \ $\text{U}\bx = \text{U}\by$, $\text{Ugrad}f\bx = \text{Ugrad}f\by$,  \\
                \ \ \ \ \ \ \ \ \  \ $\text{tmpSample}\bx = \text{tmpSample}\by$,    $\text{tmpMH}\bx = \text{tmpMH}\by$. \\
                \ \ \ \ \ \   {\bf end if} 
      \STATE Adapt $\delta$ or collect sample:  \\
              \ \ \ \ \ \  {\bf if } $i \leq $ BURN  \\ 
              \ \ \ \ \ \  \ \ \ \ adapt $\delta$  to achieve acceptance rate in $50\%$ to $60 \%$.\\ 
              \ \ \ \ \ \ \ \ \ \  update $\bLambda_1$:  $[\bLambda_1]_{ii}  = \frac{\gamma_i \delta}{\delta + 2 \gamma_i}$.  \\
              \ \ \ \ \ \ \ \ \ \  update $\bLambda_2$:  $[\bLambda_2]_{ii}  = \frac{\gamma_i \delta}{\delta + 2 \gamma_i} \times  \frac{\delta + 4 \gamma_i}{ \delta + 2 \gamma_i}$, \\
              \ \ \ \ \ \ \ \ \ \  update $\bLambda_3$:  $[\bLambda_3]_{ii}  = \frac{\delta + 2 \gamma_i}{ \delta + 4 \gamma_i}$, \\
              \ \ \ \ \ \ \ \ \ \  update  $\text{tmpSample}\bx =  \bLambda_1 \left(  \frac{2}{\delta}  \text{U}\bx   +   \text{Ugrad}f\bx \right)$,  \\
              \ \ \ \ \ \ \ \ \ \  update $\text{tmpMH}\bx =  \bLambda_1 \left(  \frac{2}{\delta}  \text{U}\bx   +   \frac{1}{2} \text{Ugrad}f\bx \right)$. \\
               \ \ \ \ \ \  {\bf else} \\
              \ \ \ \ \ \ \ \ \ \ collect sample $\bx$.  \\
              \ \ \ \ \ \  {\bf end if} 
      \ENDFOR          
\end{algorithmic}
\end{algorithm}

}

\end{document}